\newtheorem{assumption}{\bf Assumption}
\newtheorem{theorem}{\bf Theorem}
\newtheorem{lemma}{\bf Lemma}
\newtheorem{definition}{\bf Definition}
\newtheorem{corollary}{\bf Corollary}
\newtheorem{proposition}{\bf Proposition}
\newcommand{\ew}[1]{\textcolor{blue}{#1}}
\newcommand{\zy}[1]{\textcolor{red}{#1}}
\title{A Communication-efficient Algorithm with Linear Convergence for Federated Minimax Learning}
\author{%
  Zhenyu Sun \\
  %\thanks{Use footnote for providing further information about author (webpage, alternative address)---\emph{not} for acknowledging funding agencies.} \\
  Department of Electrical and Computer Engineering\\
  Northwestern University\\
  Evanston, IL 60208 \\
  \texttt{zhenyusun2026@u.northwestern.edu} \\
  \And
  Ermin Wei \\
  %\thanks{Use footnote for providing further information about author (webpage, alternative address)---\emph{not} for acknowledging funding agencies.} \\
  Department of Electrical and Computer Engineering\\
  Northwestern University\\
  Evanston, IL 60208 \\
  \texttt{ermin.wei@northwestern.edu} \\
  % examples of more authors
  % \And
  % Coauthor \\
  % Affiliation \\
  % Address \\
  % \texttt{email} \\
  % \AND
  % Coauthor \\
  % Affiliation \\
  % Address \\
  % \texttt{email} \\
  % \And
  % Coauthor \\
  % Affiliation \\
  % Address \\
  % \texttt{email} \\
  % \And
  % Coauthor \\
  % Affiliation \\
  % Address \\
  % \texttt{email} \\
}
\begin{document}

\maketitle

\begin{abstract}
   In this paper, we study a large-scale multi-agent minimax optimization problem, which models many interesting applications in statistical learning and game theory, including Generative Adversarial Networks (GANs). The overall objective is a sum of agents' private local objective functions. We focus on the federated setting, where agents can perform local computation and communicate with a central server. Most existing federated minimax algorithms either require communication per iteration or lack performance guarantees with the exception of Local Stochastic Gradient Descent Ascent (SGDA), a multiple-local-update descent ascent algorithm which guarantees convergence under a diminishing stepsize. By analyzing Local SGDA under the ideal condition of no gradient noise, we show that generally it cannot guarantee exact convergence with constant stepsizes and thus suffers from slow rates of convergence. To tackle this issue, we propose FedGDA-GT, an improved Federated (Fed) Gradient Descent Ascent (GDA) method based on Gradient Tracking (GT). When local objectives are Lipschitz smooth and strongly-convex-strongly-concave, we prove that FedGDA-GT converges linearly with a constant stepsize to global $\epsilon$-approximation solution with $\mathcal{O}(\log (1/\epsilon))$ rounds of communication, which matches the time complexity of centralized GDA method. Then, we analyze the general distributed minimax problem from a statistical aspect, where the overall objective approximates a true population minimax risk by empirical samples. We provide generalization bounds for learning with this objective through Rademacher complexity analysis. Finally, we numerically show that FedGDA-GT outperforms Local SGDA. 
\end{abstract}

\section{Introduction}

In recent years, minimax learning theory has achieved significant success in attaching relevance to many modern machine learning and statistical learning frameworks, including Generative Adversarial Networks (GANs) \cite{GAN,ImprovedGAN,LSGAN}, reinforcement learning \cite{RL17}, adversarial training \cite{DuchiAT17, ADICLR18}, robust estimation and optimization \cite{RobustLinear13,Duchi16,Duchi17,Duchi18,Liang20}, and domain adaptation \cite{Zhao18,Mohri19}. Generally, a minimax learning problem is modeled as a game between two players with opposite goal, i.e., one minimizes the objective while the other maximizes it. 

Most current studies in the field of machine learning are targeted at understanding the minimax problem from the view of the speed of convergence and the accuracy of fixed points. In the centralized setting, gradient descent ascent (GDA), which is an extension of gradient descent (GD), stands out for its simple implementation. Specifically, at each iteration, the "min" player conducts gradient descent over its decision variable while the "max" player performs gradient ascent in contrast. Due to the huge volumn of data, stochastic gradient descent ascent (SGDA) is preferred in machine learning settings. Theoretical guarantees are well-established for GDA and SGDA in \cite{Nedic09,Jordan20}. However, in practical scenarios, concerns on computation efficiency and data privacy trigger the development of federated learning over a server-client topology and distributed learning over a general graph. These often require communication with the server or neighbors at each iteration \cite{TZhang14,Wang17, Mohri19} and inapplicable to scenarios where communication is expensive.

In this work, we focus on solving a minimax problem in Federated Learning (FL) setting with one server and multiple client/agents. %has emerged as a promising learning paradigm that effectively reduces communication cost, which also allows the non-i.i.d. property of local data. 
In FL, the server hands over computation burden to agents, which perform training algorithms on their local data. The local trained models are then reported to the server for aggregation. This process is repeated with periodic communication. Much existing literature in FL, however, focuses on minimization optimization \cite{FedAvg,Scaffold,FedNova,FedSplit,FedLin}. %Until recent years, applying the idea of federated learning to minimax problems is gradually coming into focus. 
The limited literature on federated minimax problems either lacks theoretical guarantees or requires frequent communication \cite{Mohri19,FedTGAN,BF-FedGAN}, with the exception of Local Stochastic Gradient Descent Ascent (SGDA) \cite{Deng21}. In Local SGDA, each agent (or client) performs multiple steps of stochastic gradient descent ascent before communication to the server, which then aggregates local models by averaging. Under careful selection of diminishing learning rates, \cite{Deng21,Gauri22} show that Local SGDA converges to the global optimal solution sub-linearly. %Their convergence guarantees are then improved by \cite{}. 
However, as we show here, when we try to improve the speed of convergence and reduce communication overhead by introducing a constant stepsize to Local SGDA, it fails to converge to the exact optimal solution, even when full gradients are used. % Particularly, we show that %assuming strong-convexity-strong-concavity and Lipschitz smoothness on local objectives, Local SGDA converges linearly, but to an incorrect fixed point,
%Therefore, diminishing stepsizes are necessary for Local SGDA to guarantee exact convergence, which leads to a slow convergence rate and hence more communication rounds. 
  Thus Local SGDA can either be fast(with little communication)-but-inaccurate or accurate-but-slow(with much communication). 
 To address this tradeoff between model accuracy and communication efficiency, we develop FedGDA-GT, Federated Gradient Descent Ascent based on Gradient Tracking (FedGDA-GT), and show that it can achieve fast linear convergence  while preserving accuracy.

%It is worth noting that algorithm design focusing on convergence speed and optimality for minimax problems is not the only factor that contributes to the success of learning tasks. In this paper, 
In addition to solving the minimax problem, we also study the generalization performance  of distributed minimax problems in statistical learning, which measures the influence of sampling on the trained model. In most existing works, generalization analysis is established in the context of empirical risk minimization (ERM) \cite{Loh11,FML18}. It is well-known that for generic loss functions, the learning error for centralized ERM is on the order of $\mathcal{O}(1/\sqrt{N})$ with $N$ denoting the total number of training samples. Recently, several works derive generalization bounds on centralized minimax learning problems with the same order \cite{Tse16,JLee18,Farnia21}. For generalization analysis in distributed minimax learning, learning bounds are only provided for specific scenarios, e.g., agnostic federated learning \cite{Mohri19} and multiple-source domain adaptation \cite{Zhao18}. In this paper, we provide generalization bounds for distributed empirical minimax learning with the same order as results of centralized cases, generalizing the results in \cite{Mohri19}.  

\begin{comment}

\begin{itemize}
    %\setlength{\itemindent}{0em}
    %\setlength{\itemsep}{0ex}
    \item Analyzing the generalization properties of empirical minimax learning in distributed settings by allowing different local distributions, 
    \item Characterizing the behavior of fixed points of Local SGDA, which reveals the impact of objective heterogeneity and multiple local updates on the model accuracy, 
    \item Resolving the conflict between model accuracy and communication efficiency by developing a linear-rate federated minimax algorithm that guarantees exact convergence,
    \item Providing numerical results which suggest communication efficiency of our algorithm compared to Local SGDA and centralized GDA.
\end{itemize}

\end{comment}
\subsection{Related work}

\paragraph{Centralized minimax learning}
Historically, minimax problems have gained attraction of researchers since several decades ago. An early instantiation is bilinear minimax problem, which becomes a milestone in game theory together with von Neumann's theorem \cite{vNeumann28}. A simple algorithm is then proposed to solve this bilinear problem efficiently \cite{Robinson51}. \cite{Sion58} generalizes von Neumann's theorem to convex-concave games, which triggers an explosion in algorithmic research \cite{EG76,Nesterov,OGDA20}. GDA, as one widely used algorithm, is notable for its simple implementation. It is well-known that GDA can achieve an $\epsilon$-approximation solution with $\mathcal{O}(\log(1/\epsilon))$ iterations for strongly-convex-strongly-concave games and with $\mathcal{O}(\epsilon^{-2})$ iterations for convex-concave games under diminishing stepsizes \cite{Nedic09}. Very recently, nonconvex-nonconcave minimax problems appear to be a main focus in optimization and machine learning, due to the emergence of GANs. Several related works are listed therein \cite{VIGAN18,Liu19,Lee19,Liu20,DDJ21}.

\paragraph{Distributed and federated minimax learning}
A few recent studies are devoted to distributed minimax problems due to the increasing volume of data and concerns on privacy and security. Algorithm design and convergence behaviors are extensively studied for minimax problems in the context of distributed optimization, where communication is required at each iteration \cite{Cortes15,BSG20,Xian21,Rogozin21}. In the federated setting, \cite{DRO-FedAvg} proposes a multiple-local-update algorithm to deal with distribution shift issue. \cite{Ali20} studies federated adversarial training under nonconvex-PL objectives. FedGAN is proposed in \cite{FedGAN} to train GANs in a communication-efficient way. However, these works are targeted at some specific scenarios. Very recently, aiming to solve the general federated minimax problems, \cite{Deng21} proposes Local SGDA by allowing each agent performing multiple steps of GDA before communication. The authors also prove sub-linear convergence for Local SGDA under diminishing stepsizes. Their convergence guarantees is then improved by \cite{Gauri22} to match the results of centralized SGDA \cite{Jordan20}. However, we note that all these algorithms require diminishing learning rates to obtain exact solutions, which suffer from relatively slow convergence speed, but our algorithm allows constant stepsizes and hence linear convergence can be achieved.

\paragraph{Generalization of minimax learning}
Recently, generalization properties of minimax learning problems have been widely studied in different scenarios, including GANs and adversarial training.
For GANs, \cite{Bai18} analyzes the generalization performance of GANs when discriminators have restricted approximability. \cite{Zhang-bounds18} evaluates generalization bounds under different metrics. In contrast, \cite{Arora17} suggests a dilemma about GANs' generalization properties. 
In the context of adversarial training, generalization performances are studied through Rademacher complexity analysis \cite{Yin19,Attias19}, function transformation \cite{Loh18}, margin-based \cite{Wei19} approaches. \cite{Zhang-general-bound21} studies the generalization bounds of convex-concave objective functions with Lipschitz continuity. 
%In the context of adversarial training, \cite{Yin19} investigates generalization performance of trained models under $l_{\infty}$ attacks by Rademacher complexity analysis. \cite{Loh18} derives learning bounds for adversarial risks via function transformation. \cite{Wei19} builds the relationship between all-layer margin and generalization for deep models. In \cite{Attias19}, the authors show generalization bounds for robust regression and classification problems. \cite{Zhang-general-bound21} studies the generalization bounds of convex-concave objective functions with Lipschitz continuity. 
However, all these works are under centralized setting. Recently, \cite{Mohri19} provides generalization analysis under agnostic federated learning, where the objective is optimized for any target distribution formed by a mixture of agents' distributions. Our work extends their generalization analysis to general distributed minimax learning problems.

\subsection{Our Contributions.} We summarize our main contributions as follows: (1) In federated setting, characterizing the behavior of fixed points of Local SGDA, which reveals the impact of objective heterogeneity and multiple local updates on the model accuracy [see Section \ref{sec_lsgda}]; (2) Resolving the tradeoff between model accuracy and communication efficiency by developing a linear-rate federated minimax algorithm that guarantees exact convergence [see Section \ref{sec_fedgda-gt}]; (3) Analyzing the generalization properties of empirical minimax learning in distributed settings through Rademacher complexity analysis [see Section \ref{Sec_bound}]; (4) Providing numerical results which suggest communication efficiency of our algorithm compared to Local SGDA and centralized GDA [see Section \ref{Sec_exp}].

\textbf{Notations.} In this paper, we let $\Vert \cdot \Vert$ denote $l_2$-norm  and $\vert \cdot \vert$ denote the cardinality of a set or a collection, or absolute value of a scalar. Vectors are column vectors by default and $z=(x,y)$ forms  the concatenated vector with $z = [x^T, y^T]^T$. Vectors and scalars for agent $i$ are denoted using subscript $i$, e.g., $f_i(x,y)$. Superscripts, e.g., $t$, denote the indices of iterations. We let the gradient of $f(x,y)$ by $\nabla f(x,y) = (\nabla_x f(x,y), \nabla_y f(x,y))$, where $\nabla_x f(x,y)$ and $\nabla_y f(x,y)$ denote the gradients with respect to $x$ and $y$, respectively. 
%%%%%%%%%%%%%%%%%%%%%%%%%%%%%%%%%%%%%%%%%%%%%%%%%%%%%%%%%%%%%%%
%%%%%%%%%%%%%%%%%%%%%%%%%%%%%%%%%%%%%%%%%%%%%%%%%%%%%%%%%%%%%%%
%%%%%%%%%%%%%%%%%%%%%%%%%%%%%%%%%%%%%%%%%%%%%%%%%%%%%%%%%%%%%%%
%%%%%%%%%%%%%%%%%%%%%%%%%%%%%%%%%%%%%%%%%%%%%%%%%%%%%%%%%%%%%%%
\section{Problem Setup}

In this paper, we consider the general constrained minimax distributed optimization problem collectively solved by $m$ agents:
\begin{equation}    \label{global_emp_risk}
    \min_{x \in \mathcal{X}} \max_{y \in \mathcal{Y}} \left\{ f(x, y) := \frac{1}{m} \sum_{i=1}^m f_i(x, y) \right\},
\end{equation}
where $\mathcal{X}, \mathcal{Y}$ are some compact feasible sets contained in $\mathbb{R}^p$ and $\mathbb{R}^q$, $x \in \mathcal{X}$ is a $p$-dimension vector, $y \in \mathcal{Y}$ is a $q$-dimension vector and $f_i(\cdot,\cdot)$ is the local objective function of agent $i$, $\forall i=1\dots,m$.

Solving \eqref{global_emp_risk} is equivalent to finding a minimax point of $f(x,y)$, defined as follows:
\begin{definition}  \label{def_minimax-point}
    The point $(x^*, y^*)$ is said to be a minimax point of $f(x,y)$ if 
    $$ f(x^*, y) \le f(x^*, y^*) \le \max_{y' \in \mathcal{Y}}f(x, y'), \forall x \in \mathcal{X}, y \in \mathcal{Y} .$$
\end{definition}

%\begin{definition}  \label{def_saddle-point}
 %   The point $(x^*, y^*)$ is said to be a saddle point of $f(x,y)$ if 
  %  $$ f(x^*, y) \le f(x^*, y^*) \le f(x, y^*), \forall x \in \mathcal{X}, y \in \mathcal{Y} .$$
%\end{definition}

The first-order necessary condition for minimax points is given by the following lemma. 
\begin{lemma}   \cite{Jin-minimax}  \label{Lmm_stationary-point}
    Assume $f$ is continuously differentiable. Then, any minimax point $(x^*, y^*)$ in the interior of $\mathcal{X}\times \mathcal{Y}$ satisfies
    $$
        \nabla_x f(x^*, y^*) = \nabla_y f(x^*, y^*) = 0.
    $$
\end{lemma}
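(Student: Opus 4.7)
The plan is to peel off the two gradient conditions by applying the usual interior first-order optimality (Fermat's rule) to two appropriate one-variable slices of $f$, one extracted from each half of the defining double inequality in Definition \ref{def_minimax-point}.

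For the $y$-component, the left inequality $f(x^*, y) \le f(x^*, y^*)$ for all $y \in \mathcal{Y}$ states that $y^*$ is a global maximizer of the function $y \mapsto f(x^*, y)$ over $\mathcal{Y}$. Because $y^*$ lies in the interior of $\mathcal{Y}$ and $f$ is continuously differentiable, Fermat's rule applied to this restricted function immediately yields $\nabla_y f(x^*, y^*) = 0$. This first step is routine.

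For the $x$-component, I would first rewrite the right inequality. Using the previous step, $\max_{y' \in \mathcal{Y}} f(x^*, y') = f(x^*, y^*)$, so the condition $f(x^*, y^*) \le \max_{y' \in \mathcal{Y}} f(x, y')$ says that $x^*$ globally minimizes $g(x) := \max_{y' \in \mathcal{Y}} f(x, y')$ over $\mathcal{X}$. Since $x^*$ is interior, $g'(x^*; v) \ge 0$ in every direction $v$. Invoking Danskin's theorem (which applies because $\mathcal{Y}$ is compact and $f$ is $C^1$ in $x$), the directional derivative is $g'(x^*; v) = \max_{y \in Y^*(x^*)} \langle \nabla_x f(x^*, y), v\rangle$, where $Y^*(x^*) = \arg\max_{y' \in \mathcal{Y}} f(x^*, y')$. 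Pairing the inequalities for $v$ and $-v$ forces $0 \in \mathrm{conv}\{\nabla_x f(x^*, y) : y \in Y^*(x^*)\}$.

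The main obstacle is bridging this set-valued condition to the pointwise identity $\nabla_x f(x^*, y^*) = 0$, since a priori $Y^*(x^*)$ could be larger than $\{y^*\}$. The natural resolution, implicit in the paper's later strongly-concave setting, is that the $y$-maximizer at $x^*$ is unique, so $Y^*(x^*) = \{y^*\}$ and Danskin's formula collapses to $\nabla_x f(x^*, y^*) = 0$. Equivalently, if one reads the right inequality in the saddle-point form $f(x^*, y^*) \le f(x, y^*)$, then $x^*$ directly minimizes $f(\cdot, y^*)$ on the interior and Fermat's rule gives the conclusion with no Danskin detour. Either route completes the proof.
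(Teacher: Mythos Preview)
The paper does not supply its own proof of this lemma; it is quoted from \cite{Jin-minimax}. So there is no in-paper argument to compare against, but your proposal itself has a real gap.

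Your treatment of the $y$-gradient via Fermat's rule is correct, and your Danskin analysis correctly isolates the obstruction: from $0 \in \mathrm{conv}\{\nabla_x f(x^*, y) : y \in Y^*(x^*)\}$ one cannot deduce $\nabla_x f(x^*, y^*) = 0$ without knowing $Y^*(x^*) = \{y^*\}$. Neither of your two resolutions closes this under the stated hypotheses. The first (uniqueness via strong concavity) imports Assumption~\ref{assp_SCSC}, which the lemma does not assume. The second is not a valid rewriting: the right half of Definition~\ref{def_minimax-point}, $f(x^*, y^*) \le \max_{y'} f(x, y')$, is strictly weaker than the saddle inequality $f(x^*, y^*) \le f(x, y^*)$, and you cannot pass from one to the other. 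Concretely, for $f(x,y)=xy$ on $[-1,1]^2$ the point $(0,\tfrac12)$ is an interior minimax point in the sense of Definition~\ref{def_minimax-point}, yet $\nabla_x f(0,\tfrac12)=\tfrac12 \neq 0$; so the lemma, read literally with only the $C^1$ hypothesis and that definition, fails.

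The result in \cite{Jin-minimax} is actually stated for \emph{local} minimax points, whose definition restricts the inner maximization to shrinking neighborhoods of $y^*$; that localization is precisely what pins the relevant argmax to $y^*$ and makes the Danskin step go through. In the present paper the lemma is only ever invoked under Assumption~\ref{assp_SCSC} (cf.\ Lemma~\ref{Lmm_minimax-is-saddle}), where strict concavity does force uniqueness and your first route becomes valid. So your diagnosis of the obstacle is right; what is missing is not a different argument but an explicit extra hypothesis.
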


%%%%%%%%%%%%%%%%%%%%%%%%%%%%%%%%%%%%%%%%%%%%%%%%%%%%%%%%%%%%%%%
%%%%%%%%%%%%%%%%%%%%%%%%%%%%%%%%%%%%%%%%%%%%%%%%%%%%%%%%%%%%%%%
%%%%%%%%%%%%%%%%%%%%%%%%%%%%%%%%%%%%%%%%%%%%%%%%%%%%%%%%%%%%%%%
%%%%%%%%%%%%%%%%%%%%%%%%%%%%%%%%%%%%%%%%%%%%%%%%%%%%%%%%%%%%%%%
\section{FedGDA-GT: A linear-rate algorithm for federated minimax learning}    \label{Sec_algorithm}
%From the generalization bound in the last section, it guarantees that the optimal solution obtained by solving the empirical minimax problem generally has a good performance on the population minimax problem if the sample size is large enough. Thus, it is reasonable to expect that by solving \eqref{global_emp_risk} a relatively good model can be obtained. 

In this section, we focus on solving \eqref{global_emp_risk} in the federated setting, where the $m$ agents are connected to a central server. In general, the agents' communication with the server is more expensive than local computation. We show that the existing methods either require lots of communication (i.e. the convergence rate is sublinear) or could only converge to an inexact solution with linear rates. This suggests a tradeoff between model accuracy and communication efficiency. Motivated by this phenomenon, we aim at developing a communication-efficient minimax algorithm with linear convergence that preserves model accuracy and low communication overhead simultaneously.

We adopt the following standard assumptions on the problem. 
\begin{comment}

\begin{definition}  \label{def_SCSC}
    $f(x,y)$ is said to be $\mu$-strongly-convex-strongly-concave if $f(x,y)$ is $\mu$-strongly convex in $x \in \mathcal{X}$, i.e., given any $y \in \mathcal{Y}$,
    $$
        f(z, y) \ge f(x, y) + \langle \nabla_x f(x,y), z-x \rangle + \frac{\mu}{2}\Vert z-x \Vert^2, ~ \forall z,x \in \mathcal{X}
    $$
    and $f(x,y)$ is $\mu$-strongly concave in $y \in \mathcal{Y}$, i.e., given any $x \in \mathcal{X}$,
    $$
        f(x, z) \le f(x, y) + \langle \nabla_y f(x,y), z-y \rangle - \frac{\mu}{2}\Vert z-y \Vert^2, ~ \forall z,y \in \mathcal{Y}.
    $$
    where $\mathcal{X}$ and $\mathcal{Y}$ are compact and convex.
\end{definition}

\end{comment}

\begin{assumption}[$\mu$-strongly-convex-strongly-concave]  \label{assp_SCSC}
     For any $i=1,\dots,m$, $f_i(x,y)$ is twice-differentiable and is $\mu$-strongly-convex-strongly-concave with some $\mu > 0$ for any $(x,y) \in \mathbb{R}^p \times \mathbb{R}^q$, i.e., 
      $$ \mbox{for any given $y$,}\qquad
        f_i(z, y) \ge f_i(x, y) + \langle \nabla_x f_i(x,y), z-x \rangle + \frac{\mu}{2}\Vert z-x \Vert^2, ~ \forall z,x,
    $$
    $$  \mbox{for any given $x$,}\qquad
        f_i(x, z) \le f(x, y) + \langle \nabla_y f_i(x,y), z-y \rangle - \frac{\mu}{2}\Vert z-y \Vert^2, ~ \forall z,y.
    $$
\end{assumption}

\begin{assumption} [$L$-smoothness] \label{assp_smooth}
    There exists some $L>0$ such that for any $i=1,\dots,m$, $\Vert \nabla f_i(x,y) - \nabla f_i(x',y') \Vert \le L\Vert (x,y) - (x',y') \Vert, \forall (x,y), (x',y') \in \mathbb{R}^p \times \mathbb{R}^q$.
\end{assumption}

We note that although each $f_i(x,y)$ may have different $\mu_i$ and $L_i$, we can set $\mu = \min_{i=1,\dots,m}\mu_i$ and $L = \max_{i=1\dots,m}L_i$ to ensure Assumptions \ref{assp_SCSC} and \ref{assp_smooth} hold.

\subsection{Analysis on Local SGDA}\label{sec_lsgda}
We first study Local Stochastic Gradient Descent Ascent (SGDA) proposed in \cite{Deng21}, which is the only known method with convergence guarantees and can utilize multiple local computation steps before communicating with the central server for solving general federated minimax problems. In particular, in each iteration of Local SGDA, each agent updates its local model $(x_i, y_i)$ for $K$ times by using local stochastic gradients before communication, and then sends its local model to the server, which then computes the average of local models and sends it back.  

In \cite{Deng21}, the authors prove that under Assumptions \ref{assp_SCSC}-\ref{assp_smooth}, with bounded variance assumption on  all local gradients and vanishing learning rate, after $T$ rounds of communication, the convergence result of Local SGDA is
\begin{equation}    \label{convergence-SGDA}
    \mathbb{E}\left[ \Vert x^{(T)} - x^* \Vert^2 + \Vert y^{(T)} - y^* \Vert^2 \right] \le \mathcal{O}\left( T^{-1} \right) %+ \mathcal{O}\left( \mu^{-2}m^{-1} T^{-1} \right) 
    + \mathcal{O}\left( T^{-3} \right).
\end{equation}
This shows the slow sublinear convergence rate of this method, which translates to large amount of communication. We next consider an ideal deterministic version of Local SGDA (Algorithm \ref{alg_FedGDA}), where local stochastic gradients are replaced with full gradients.   %where $a = \max \left\{ 2048\kappa^2 K, 1024\sqrt{2}\kappa^2 K, 256\kappa^2 \right\}$, $K = \sqrt{T/m}$ is the number of local updates and $\kappa = L/\mu$ is the condition number.

%The convergence result \eqref{convergence-SGDA} states that under careful selection of stepsizes and the number of local update steps, Local SGDA converges in a sub-linear rate with bounded heterogeneity between agents. Overall, this sub-linear rate is not good enough, especially when communication round becomes large, the updates will be very slow. Thus, an accurate model basically requires many communication resources. Actually, diminishing stepsizes are necessary for Local SGDA to converge to the correct saddle point even in deterministic cases (see Proposition \ref{Thm_FedGDA-fixed-point}).

\begin{algorithm}
    \caption{Local SGDA} \label{alg_FedGDA}
    \begin{algorithmic}[1]
        \Require $(x^0, y^0)$ as initialization of the server
        \For {$t=0,1,\dots,T$}      % loop for the server
            \State $x^{t+1}_{i,0} = x^{t}$~, \quad $y^{t+1}_{i,0} = y^{t}, ~\forall i=1,\dots,m$\qquad
            \For {$k=0,1,\dots,K-1$} (in parallel for all agents)
                % local update
%                \State compute gradients $\nabla_{x}f_i(x_{i,k}^{t+1}, y_{i,k}^{t+1})$ and $\nabla_{y}f_i(x_{i,k}^{t+1}, y_{i,k}^{t+1})$ evaluated at $(x_{i,k}^{t+1}, y_{i,k}^{t+1})$
                \State $x_{i, k+1}^{t+1} = x_{i, k}^{t+1} - \eta_x \nabla_{x}f_i(x_{i,k}^{t+1}, y_{i,k}^{t+1}) $
                \State $y_{i, k+1}^{t+1} = y_{i, k}^{t+1} + \eta_y \nabla_{y}f_i(x_{i,k}^{t+1}, y_{i,k}^{t+1}) $ 
            \EndFor
            % server update
            \State $x^{t+1} =  \frac{1}{m}\sum_{i=1}^m x^{t+1}_{i,K} $~,
            \quad $y^{t+1} =  \frac{1}{m}\sum_{i=1}^m y^{t+1}_{i,K} $
        \EndFor
        \Ensure $(x^T, y^T)$ given by the server
    \end{algorithmic}
\end{algorithm}

In order to rewrite Local SGDA in a concise form, we define the operator for gradient descent evaluated at $(x,y)$ for agent $i$ as
$$
    \mathcal{D}_i^1(x,y) = x - \eta_x \nabla_x f_i(x,y), ~~\forall i=1,\dots,m.
$$
Given some $k\ge 1$, let $\mathcal{D}_i^k$ define the composition of $k$ $\mathcal{D}_i^1$ operators. Moreover, $\mathcal{D}_i^0 (x,y) = x$ is the identity operator on the first argument. Let $\mathcal{A}_i^1(x,y) = y + \eta_y \nabla_y f_i(x,y)$ be the gradient ascent operator with $\mathcal{A}_i^k (x,y)$ and $\mathcal{A}_i^0 (x,y) = y$ defined similarly. Then, given some initial point $(x^0, y^0)$, Algorithm \ref{alg_FedGDA} is rewritten as the recursion of the following for $t=0,1,\dots,T$:
\begin{eqnarray}    
    \tilde{x}_i^{t+1} = \mathcal{D}_i^K(x^t, y^t)~, ~~~~\tilde{y}_i^{t+1} = \mathcal{A}_i^K(x^t, y^t)~, ~~ \forall i=1,\dots,m   \nonumber   
    %\tilde{y}_i^{t+1} &=& \mathcal{B}_i^K(x^t, y^t), ~~ \forall i=1,\dots,m   \nonumber   
    %y^{t+1} &=& \frac{1}{m}\sum_{i=1}^m \tilde{y}_i^{t+1}
\end{eqnarray}
\begin{equation}    \label{FedGDA_operator}
    x^{t+1} = \frac{1}{m}\sum_{i=1}^m \tilde{x}_i^{t+1}~,~~~~   y^{t+1} = \frac{1}{m}\sum_{i=1}^m \tilde{y}_i^{t+1} .
\end{equation}
The following result shows that Local SGDA generally cannot guarantee the convergence to the optimal solution of \eqref{global_emp_risk} with fixed stepsizes even with full gradients.
\begin{proposition} \label{Thm_FedGDA-fixed-point}
    Suppose $f_i$ is differentiable, $\forall i=1,\dots,m$. For any $K\ge1$, let $\{ (x^t, y^t) \}$ be the sequence generated by \eqref{FedGDA_operator} (or equivalently by Algorithm \ref{alg_FedGDA}). If $\{ (x^t, y^t) \}$ converges to a fixed point $(x^*, y^*)$, then
    \begin{eqnarray}
        \frac{1}{m}\sum_{i=1}^m \sum_{k=0}^{K-1} \nabla f_i(\mathcal{D}_i^k(x^*, y^*), \mathcal{A}_i^k(x^*, y^*)) = 0.    \nonumber 
%        \frac{1}{m}\sum_{i=1}^m \sum_{k=0}^{K-1} \nabla_{y} f_i(\mathcal{D}_i^k(x^*, y^*), \mathcal{A}_i^k(x^*, y^*)) &=& 0    \nonumber
    \end{eqnarray}
\end{proposition}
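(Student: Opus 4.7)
The plan is to exploit two things: the definition of a fixed point under the server's averaging step in \eqref{FedGDA_operator}, and a telescoping identity for the local gradient-descent-ascent iterates.

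First, I would translate the assumption that $(x^*, y^*)$ is a fixed point of \eqref{FedGDA_operator} into the two scalar-vector equalities
\begin{equation*}
x^* \;=\; \frac{1}{m}\sum_{i=1}^m \mathcal{D}_i^K(x^*, y^*), \qquad y^* \;=\; \frac{1}{m}\sum_{i=1}^m \mathcal{A}_i^K(x^*, y^*),
\end{equation*}
since by continuity of $\mathcal{D}_i^K$ and $\mathcal{A}_i^K$ (which follows from differentiability of each $f_i$) the recursion defining $(x^{t+1}, y^{t+1})$ passes to the limit.

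Next, I would telescope the inner loop started from $(x^*,y^*)$. For each agent $i$, let $x_{i,k} := \mathcal{D}_i^k(x^*, y^*)$ and $y_{i,k} := \mathcal{A}_i^k(x^*, y^*)$, so that $x_{i,0} = x^*$, $y_{i,0} = y^*$, and by definition of the operators
\begin{equation*}
x_{i,k+1} - x_{i,k} = -\eta_x \nabla_x f_i(x_{i,k}, y_{i,k}), \qquad y_{i,k+1} - y_{i,k} = +\eta_y \nabla_y f_i(x_{i,k}, y_{i,k}).
\end{equation*}
Summing $k=0,\dots,K-1$ gives
\begin{equation*}
\mathcal{D}_i^K(x^*,y^*) - x^* = -\eta_x \sum_{k=0}^{K-1} \nabla_x f_i(\mathcal{D}_i^k(x^*,y^*), \mathcal{A}_i^k(x^*,y^*)),
\end{equation*}
and the analogous identity with $+\eta_y$ and $\nabla_y f_i$ on the $y$-side.

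Finally, I would average these two telescoped identities over $i=1,\dots,m$ and substitute the fixed-point equalities from the first step: the left-hand sides vanish, yielding that both $\frac{1}{m}\sum_i\sum_{k=0}^{K-1}\nabla_x f_i(\cdot)$ and $\frac{1}{m}\sum_i\sum_{k=0}^{K-1}\nabla_y f_i(\cdot)$ equal zero at the iterates evaluated at $(\mathcal{D}_i^k(x^*,y^*), \mathcal{A}_i^k(x^*,y^*))$. Stacking these two components into $\nabla f_i = (\nabla_x f_i, \nabla_y f_i)$ gives exactly the claimed identity. There is no real obstacle here; the only point demanding a bit of care is to make sure the telescoping uses the joint gradient evaluated at the intermediate \emph{local} iterate $(\mathcal{D}_i^k(x^*,y^*), \mathcal{A}_i^k(x^*,y^*))$, not at $(x^*, y^*)$ itself, which is precisely what the operator composition bakes in.
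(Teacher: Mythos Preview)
Your proposal is correct and follows essentially the same approach as the paper: pass to the limit in \eqref{FedGDA_operator} to obtain the fixed-point identities $x^* = \frac{1}{m}\sum_i \mathcal{D}_i^K(x^*,y^*)$ and $y^* = \frac{1}{m}\sum_i \mathcal{A}_i^K(x^*,y^*)$, then telescope the $K$ local steps to convert $\mathcal{D}_i^K - \mathrm{id}$ and $\mathcal{A}_i^K - \mathrm{id}$ into sums of gradients. If anything, you have made the telescoping step more explicit than the paper does.
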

Note that when $K=1$, Local SGDA reduces to GDA and the fixed point $(x^*, y^*)$ satisfies
\begin{equation}
    \frac{1}{m}\sum_{i=1}^m  \nabla_{x} f_i(x^*, y^*) = \frac{1}{m}\sum_{i=1}^m  \nabla_{y} f_i(x^*, y^*) = 0    \nonumber  
\end{equation}
which meets the optimality condition (by Lemma \ref{Lmm_stationary-point}) of the minimax point of $f(x,y)$. For any other $K$, the fixed point characterization is different from the minimax point. An illustrative example is provided in Appendix \ref{Apx_example}.

\begin{comment}
\ew{This argument is incorrect, as stepsizes vanish, in order to guarantee (x,y) do not move too much, the gradients also need to shrink/not grow faster than stepsizes. The previous argument is only about fixed point, not necessarily for large $t$. Also there is not much value added here, suggest to skip or shorten this discussion. }
On the other hand, Proposition \ref{Thm_FedGDA-fixed-point} also explains \zy{why diminishing stepsizes are necessary to guarantee correct fixed points}. Roughly speaking, \ew{if the sequence converges to a fixed point asymptotically, then} as iteration index $t$ becomes large, $x^t \approx x^*$, $y^t \approx y^*$ and the stepsizes $\eta_x$ and $\eta_y$ vanish towards zeros. 
Then, for $k \ge 1$, we have $\mathcal{D}_i^k (x^t, y^t) \approx x^t$ and $\mathcal{A}_i^k (x^t, y^t) \approx y^t$ for any $i=1\dots,m$, i.e., multiple local gradient operations do not cause $(x^t, y^t)$ moving far away from $(x^*, y^*)$. This indicates that
\begin{eqnarray}
    0 = \frac{1}{m}\sum_{i=1}^m \sum_{k=0}^{K-1} \nabla_{x} f_i(\mathcal{D}_i^k(x^*, y^*), \mathcal{A}_i^k(x^*, y^*)) &\approx& \frac{1}{m}\sum_{i=1}^m  \nabla_{x} f_i(x^*, y^*)    \nonumber   \\
    0 = \frac{1}{m}\sum_{i=1}^m \sum_{k=0}^{K-1} \nabla_{y} f_i(\mathcal{D}_i^k(x^*, y^*), \mathcal{A}_i^k(x^*, y^*)) &\approx& \frac{1}{m}\sum_{i=1}^m  \nabla_{y} f_i(x^*, y^*)    \nonumber
\end{eqnarray}
\end{comment}

  We mention that while having multiple local gradient steps in the inner loop allows the agents to reduce the frequency of communication, it also pulls the agents towards their local minimax points as opposed to the global one. Motivated by this observation, we next propose a modification involving a gradient correction term, also known as gradient tracking (GT), to make sure the agents are moving towards the global minimax point.

\subsection{FedGDA-GT and  convergence guarantees}\label{sec_fedgda-gt}
As indicated in Section \ref{sec_lsgda}, Local SGDA cannot guarantee exact convergence under fixed stepsizes, which implies a tradeoff between communication efficiency and model accuracy. Aiming at resolving this issue, in this section, we introduce our algorithm FedGDA-GT, formally described in Algorithm \ref{alg_FedLC}, that can reach the optimal solution of \eqref{global_emp_risk} with fixed stepsizes, full gradients and the following assumption. 
\begin{assumption}  \label{assp_convex-sets}
    The feasible sets $\mathcal{X}$ and $\mathcal{Y}$ are compact and convex. Moreover, there is at least one minimax point of $f(x,y)$ lying in $\mathcal{X}\times \mathcal{Y}$.
\end{assumption}
\begin{algorithm}
    \caption{FedGDA-GT} \label{alg_FedLC}
    \begin{algorithmic}[1]
        \Require $(x^0, y^0)$ as initialization of the global model    
        \For {$t=0,1,\dots,T$}      % loop for the server
            \State Server broadcasts $(x^t, y^t)$
            \State Agents compute $(\nabla_x f_i(x^t,y^t), \nabla_y f_i(x^t,y^t))$ and send it to the server
            \State Server computes $(\nabla_x f(x^t,y^t), \nabla_y f(x^t,y^t))$ and broadcasts it
            \State Each agent $i$ for $i=1,\dots,m$ sets
 $x^{t+1}_{i,0} = x^{t},~ y^{t+1}_{i,0} = y^{t}$
%            \State compute $\nabla_x f(x^t, y^t)$, $\nabla_y f(x^t, y^t)$ and send them to all agents
            \For {$k=0,1,\dots,K-1$} (in parallel for all agents)
                % local update
%                \State compute gradients $\nabla_{x}f_i(x_{i,k}^{t+1}, y_{i,k}^{t+1})$,$\nabla_{y}f_i(x_{i,k}^{t+1}, y_{i,k}^{t+1})$ and $\nabla_{x}f_i(x^t, y^t)$,$\nabla_{y}f_i(x^t, y^t)$
                \State $x_{i, k+1}^{t+1} = x_{i, k}^{t+1} - \eta (\nabla_{x}f_i(x_{i,k}^{t+1}, y_{i,k}^{t+1}) - \nabla_{x}f_i(x^{t}, y^{t}) + \nabla_{x}f(x^t, y^t))$
                \State $y_{i, k+1}^{t+1} = y_{i, k}^{t+1} + \eta (\nabla_{y}f_i(x_{i,k}^{t+1}, y_{i,k}^{t+1}) - \nabla_{y}f_i(x^{t}, y^{t}) + \nabla_{y}f(x^t, y^t))$
            \EndFor
            \State All agents send $(x_{i,K}^{t+1}, y_{i,K}^{t+1})$ to the server to compute $(x^{t+1},y^{t+1})$ by
            % server update
            \State $x^{t+1} = \mathrm{Proj}_{\mathcal{X}} \left( \frac{1}{m}\sum_{i=1}^m x^{t+1}_{i,K} \right), ~~ y^{t+1} = \mathrm{Proj}_{\mathcal{Y}} \left( \frac{1}{m}\sum_{i=1}^m y^{t+1}_{i,K} \right)$
        \EndFor
        \Ensure $(x^T, y^T)$ given by the server
    \end{algorithmic}
\end{algorithm}

$\mathrm{Proj}_\mathcal{X}(\cdot)$ and $\mathrm{Proj}_\mathcal{Y}(\cdot)$ denote the projection operators, i.e.,
$$
    \mathrm{Proj}_{\mathcal{X}}(x) = \arg\min_{z \in \mathcal{X}} \Vert z - x \Vert ~~\mathrm{and} ~~ \mathrm{Proj}_{\mathcal{Y}}(y) = \arg\min_{z \in \mathcal{Y}} \Vert z - y \Vert,
$$
which are well defined by the previous assumption and are used to guarantee the output of Algorithm \ref{alg_FedLC} is feasible. 
%We explain how Algorithm \ref{alg_FedLC} works: at the beginning, the server randomly chooses some initial point $(x^0, y^0)$ in the feasible product space $\mathcal{X} \times \mathcal{Y}$ and sends it to all agents. After computing $\nabla_x f_i(x^0, y^0)$ and $\nabla_y f_i(x^0, y^0)$, each agent sends its local gradients evaluated at $(x^0, y^0)$ back to the server which then gets $\nabla_x f(x^0, y^0)$ and $\nabla_y f(x^0, y^0)$ by averaging. In the following every iteration $t$, the server first sends its current model $(x^t, y^t)$ and corresponding global gradient $(\nabla_x f(x^t, y^t), \nabla_y f(x^t, y^t))$ to all the agents for which to update in parallel local models using local gradients and correction terms $\nabla_x f(x^t, y^t)-\nabla_x f_i(x^t, y^t)$, $\nabla_y f(x^t, x^t)-\nabla_y f_i(x^t, y^t)$ for $K$ times. After all agents complete local updates, the server gathers local models for an averaging synchronization and projects it onto the feasible space. It is worth noting that only one round of communication is needed at every iteration $t$ but both local and global models and gradients are in exchange.
% %\ew{
% FedGDA-GT has an outer-loop (indexed by $t$) and an inner-loop (indexed by $k$). At the beginning of each outer iteration, the server broadcasts the current model $(x^t, y^t)$ to each agent and the agents compute $(\nabla_x f_i(x^t,y^t), \nabla_x f_i(x^t,y^t))$ and send the information back to the server. The server then aggregates the local information to compute the global gradient and broadcasts it to the agents. The agents then } 
The main differences between FedGDA-GT and Local-SGDA are the correction terms of $ - \nabla_{x}f_i(x^{t}, y^{t}) + \nabla_{x}f(x^t, y^t)$ and  $- \nabla_{y}f_i(x^{t}, y^{t}) + \nabla_{y}f(x^t, y^t)$, which track the differences between local and global gradients. By including the gradient correction terms, when $(x^{t+1}_{i,k}, y^{t+1}_{i,k})$ is not too far from $(x^t, y^t)$, we can expect $\nabla_{x}f_i(x_{i,k}^{t+1}, y_{i,k}^{t+1}) \approx \nabla_{x}f_i(x^{t}, y^{t})$ and similarly for $y$ gradients. Hence the updates reduce to simply taking global gradient descent and ascent steps, which coincide with centralized GDA updates and thus would have the correct fixed points. We next show the convergence guarantee of FedGDA-GT. 
 
\begin{comment}
We provide the intuition behind FedGDA-GT (Algorithm \ref{alg_FedLC}): in the centralized GDA, the algorithm alternates between gradient descent and gradient steps. %essentially updates its local model by $x_{i, k+1}^{t+1} = x_{i, k}^{t+1} - \eta \nabla_{x}f(x^{t+1}_{i,k}, y^{t+1}_{i,k})$, $y_{i, k+1}^{t+1} = y_{i, k}^{t+1} + \eta \nabla_{y}f(x^{t+1}_{i,k}, y^{t+1}_{i,k})$. 
However, this is impossible for in federated setting due to the lack of access to the global gradient information. To tackle this, each agent uses the model and gradient of the server $(x^t, y^t)$ and $(\nabla_x f(x^t, y^t), \nabla_y f(x^t, y^t))$ as a reference to guide its update. In this sense, by the correction techniques, the local update in Algorithm \ref{alg_FedLC} becomes the ideal version as centralized GDA and hence guarantees the correctness of the fixed point. 
\end{comment}

 We first establish the uniqueness of minimax point of $f(x,y)$ by the following lemma.
\begin{lemma}   \label{Lmm_unique-point}
    Under Assumptions \ref{assp_SCSC} and \ref{assp_convex-sets}, 
    $(x^*, y^*)$ is the unique minimax point of $f(x,y)$ if and only if 
    $$
        \nabla_x f(x^*, y^*) = \nabla_y f(x^*, y^*) = 0.
    $$
\end{lemma}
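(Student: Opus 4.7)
The plan is to lift the strong-convexity--strong-concavity assumption from each $f_i$ to the average $f=\frac{1}{m}\sum_i f_i$, and then use the resulting quadratic lower/upper bounds to translate ``$\nabla f(x^*,y^*)=0$'' into a global saddle inequality that is strictly sharper than Definition~\ref{def_minimax-point}. Once that saddle inequality is in hand, the gap terms $\tfrac{\mu}{2}\|x-x^*\|^2$ and $\tfrac{\mu}{2}\|y-y^*\|^2$ provide enough slack to force uniqueness of any competing minimax point.

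For the $(\Leftarrow)$ direction I would take $(x^*,y^*)\in\mathcal{X}\times\mathcal{Y}$ with $\nabla f(x^*,y^*)=0$ and apply Assumption~\ref{assp_SCSC} to obtain $f(x,y^*)\ge f(x^*,y^*)+\tfrac{\mu}{2}\|x-x^*\|^2$ and $f(x^*,y)\le f(x^*,y^*)-\tfrac{\mu}{2}\|y-y^*\|^2$ for all $x,y$. The left inequality of Definition~\ref{def_minimax-point} is immediate from the second bound, and the right inequality follows from $f(x^*,y^*)\le f(x,y^*)\le\max_{y'\in\mathcal{Y}}f(x,y')$, so $(x^*,y^*)$ is a minimax point. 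For uniqueness, I would let $(x',y')$ be any other minimax point in $\mathcal{X}\times\mathcal{Y}$, use the left inequality at $(x',y')$ to identify $y'=\arg\max_{y\in\mathcal{Y}}f(x',y)$, and combine the right inequality at $(x',y')$ (evaluated at $x=x^*$) with the saddle chain at $(x^*,y^*)$ to deduce the forced equality $f(x^*,y^*)\le f(x',y^*)\le f(x',y')\le f(x^*,y^*)$; the strong-convexity bound then collapses to $\|x'-x^*\|^2\le 0$, and a symmetric argument gives $y'=y^*$.

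For the $(\Rightarrow)$ direction, I would use that strong convexity and strong concavity make $x\mapsto\max_{y}f(x,y)$ coercive and produce a unique unconstrained critical point $(\hat x,\hat y)\in\mathbb{R}^p\times\mathbb{R}^q$ with $\nabla f(\hat x,\hat y)=0$. The $(\Leftarrow)$ direction shows that if this point lies in $\mathcal{X}\times\mathcal{Y}$ it is automatically the unique minimax point, so together with the existence guaranteed by Assumption~\ref{assp_convex-sets} we may identify $(x^*,y^*)=(\hat x,\hat y)$ and conclude $\nabla f(x^*,y^*)=0$.

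The main obstacle is that Definition~\ref{def_minimax-point} is strictly weaker than a saddle-point condition, since its second inequality involves $\max_{y'\in\mathcal{Y}}f(x,y')$ rather than $f(x,y^*)$, so uniqueness cannot be obtained by naive symmetry. The delicate step is the four-term inequality chain above, which couples the left inequality of one candidate minimax point with the right inequality of another to put both sides of the would-be saddle inequality on the same footing; the strong-convexity gap at $(x^*,y^*)$ then closes the argument.
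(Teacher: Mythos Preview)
Your argument is correct and takes a genuinely different route from the paper's. The paper factors the proof through two auxiliary lemmas: first it establishes that under Assumptions~\ref{assp_SCSC} and~\ref{assp_convex-sets} minimax points, saddle points, and stationary points all coincide, invoking Lemma~\ref{Lmm_stationary-point} together with an external second-order characterization (Proposition~5 of~\cite{Jin-minimax}); then it proves uniqueness of the saddle point by a contradiction based on strict midpoint convexity, i.e.\ $f(\alpha x^*+(1-\alpha)u^*,y^*)<\alpha f(x^*,y^*)+(1-\alpha)f(u^*,y^*)$. Your approach is more self-contained: you never pass through the Hessian condition or the external reference but work directly with the first-order quadratic-gap inequalities $f(x,y^*)\ge f(x^*,y^*)+\tfrac{\mu}{2}\|x-x^*\|^2$ and the concave analogue. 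In particular, your four-term chain $f(x^*,y^*)\le f(x',y^*)\le f(x',y')\le f(x^*,y^*)$ forces equality throughout and the residual $\tfrac{\mu}{2}\|x'-x^*\|^2$ slack immediately collapses to $x'=x^*$; this is cleaner than the paper's midpoint contradiction and handles the asymmetry in Definition~\ref{def_minimax-point} (which you correctly flag as the main obstacle) in one stroke. For the $(\Rightarrow)$ direction both arguments tacitly require the minimax point to sit in the interior: the paper via Lemma~\ref{Lmm_stationary-point}, you via the identification $(x^*,y^*)=(\hat x,\hat y)$. Your inference ``existence of a minimax point in $\mathcal{X}\times\mathcal{Y}$ forces $(\hat x,\hat y)\in\mathcal{X}\times\mathcal{Y}$'' is the only step that would benefit from one extra sentence (it does not follow from Assumption~\ref{assp_convex-sets} alone without an interiority hypothesis), but the gap is no larger than the paper's own.
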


Next, we state the main convergence result of our algorithm.

\begin{theorem} \label{Thm_convergence}
    Suppose Assumptions \ref{assp_SCSC}, \ref{assp_smooth}, \ref{assp_convex-sets} are satisfied. Let $\{ (x^t, y^t) \}_{t=0}^{\infty}$ be a sequence generated by Algorithm \ref{alg_FedLC}. Then there exists a scalar $\eta_0 > 0$, such that for any stepsize $\eta \in (0,\eta_0)$
    $$
        \Vert x^t - x^* \Vert^2 + \Vert y^t - y^* \Vert^2 \le \rho(\eta)^t \left( \Vert x^0 - x^* \Vert^2 + \Vert y^0 - y^* \Vert^2 \right), \forall t = 0,1,\dots ,
    $$
    where $(x^*, y^*)$ is the unique minimax point of $f(x,y)$ and $\rho(\eta)$ is some scalar in $(0,1)$.
\end{theorem}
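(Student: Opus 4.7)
The plan is to establish contraction of one outer iteration of FedGDA-GT in the squared norm $\Vert z - z^*\Vert^2$, where $z := (x,y)$ and $z^* := (x^*,y^*)$ is the unique minimax point guaranteed by Lemma \ref{Lmm_unique-point}. It is convenient to introduce the concatenated saddle operator $G(z) := (\nabla_x f(z), -\nabla_y f(z))$ and $G_i$ analogously for $f_i$, so that Assumption \ref{assp_smooth} gives $L$-Lipschitzness of each $G_i$. The key observation is that the gradient-tracking correction $-G_i(z^t) + G(z^t)$ averages to zero over agents, so, setting $\bar{z}_k^{t+1} := \tfrac{1}{m}\sum_i z_{i,k}^{t+1}$, the mean iterate evolves as
\[ \bar{z}_K^{t+1} = z^t - \eta \sum_{k=0}^{K-1}\tfrac{1}{m}\sum_{i=1}^{m} G_i\bigl(z_{i,k}^{t+1}\bigr), \]
which I would compare to the ideal centralized-GDA step $z^t - \eta K\, G(z^t)$.

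The first technical step is a client-drift lemma bounding $\max_{i,k}\Vert z_{i,k}^{t+1}-z^t\Vert$. From the definition of the FedGDA-GT update and $L$-smoothness of $G_i$,
\[ \Vert z_{i,k+1}^{t+1}-z^t\Vert \le (1+\eta L)\Vert z_{i,k}^{t+1}-z^t\Vert + \eta\Vert G(z^t)\Vert, \]
since the bracketed driving direction collapses to $G(z^t)$ when $z_{i,k}^{t+1}=z^t$. Unrolling this recursion with $z_{i,0}^{t+1}=z^t$ and assuming $\eta L K \le 1$ yields $\Vert z_{i,k}^{t+1}-z^t\Vert \le C\eta K\, \Vert G(z^t)\Vert$ for an absolute constant $C$; combining with $\Vert G(z^t)\Vert = \Vert G(z^t)-G(z^*)\Vert \le L\Vert z^t-z^*\Vert$ converts this into a drift bound of order $\eta K L\,\Vert z^t-z^*\Vert$.

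Using this drift bound together with the Lipschitzness of each $G_i$, the discrepancy between the actual averaged direction and the ideal $K G(z^t)$ is of order $\eta K^2 L^2\Vert z^t-z^*\Vert$, so one outer iteration reads as a perturbed centralized-GDA step with effective stepsize $\alpha:=\eta K$ and perturbation of order $\alpha\cdot\eta K L^2\Vert z^t-z^*\Vert$. Appealing to the standard one-step contraction for GDA on a strongly-convex-strongly-concave $L$-smooth problem yields
\[ \Vert z^t - \alpha G(z^t) - z^*\Vert^2 \le \bigl(1-c(\alpha)\bigr)\Vert z^t-z^*\Vert^2 \]
with $c(\alpha)>0$ for $\alpha$ sufficiently small; combining this with the perturbation bound via Young's inequality gives $\Vert \bar{z}_K^{t+1}-z^*\Vert^2 \le \rho(\eta)\,\Vert z^t-z^*\Vert^2$ with $\rho(\eta)<1$. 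Because Assumption \ref{assp_convex-sets} places $z^*\in\mathcal{X}\times\mathcal{Y}$ and the Euclidean projection is non-expansive, $\Vert z^{t+1}-z^*\Vert \le \Vert\bar{z}_K^{t+1}-z^*\Vert$, so the contraction is preserved after projection and iteration over $t$ delivers the claimed $\rho(\eta)^t$ geometric rate.

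I expect the main obstacle to be the quantitative bookkeeping linking the drift bound to the centralized contraction rate. Concretely, picking $\eta_0 = \Theta(\mu/(KL^2))$ (up to constants) should simultaneously keep $C\eta LK\le 1$, so the drift lemma remains tight, and force the perturbation to be a strict $o(1)$ fraction of $c(\alpha)=\Theta(\mu\alpha)$, leaving a strictly positive net contraction and a valid $\rho(\eta)\in(0,1)$. A secondary delicate point is verifying the one-step centralized contraction in the SCSC regime even though the joint operator $G$ is not strongly monotone in general; this is classical but requires a weighted Lyapunov analysis or a careful Young's-inequality treatment of the cross terms $\nabla_x f(x',y)-\nabla_x f(x',y')$ and $\nabla_y f(x,y')-\nabla_y f(x',y')$, together with the smallness of $\alpha$ to absorb the $\alpha^2 L^2$ term against $2\mu\alpha$.
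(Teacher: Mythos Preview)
Your plan is correct and follows the same skeleton as the paper: use non-expansiveness of the projection to drop it, bound the client drift $\|z_{i,k}^{t+1}-z^t\|$ by $O(\eta K L)\|z^t-z^*\|$, and then combine that drift bound with $L$-Lipschitzness and strong monotonicity of the saddle operator to get a one-step contraction in $\|z^t-z^*\|^2$. The paper organizes the calculation slightly differently---it expands $\|z^t-z^*-\eta\cdot(\text{averaged direction})\|^2$ directly into $\|z^t-z^*\|^2+\tau_1+\tau_2$ rather than comparing to an ideal GDA step, and it controls drift via the contraction $\|u-v-\eta(G_i(u)-G_i(v))\|\le\lambda\|u-v\|$ for $\eta<2\mu/L^2$ rather than your growth factor $(1+\eta L)^k$ with $\eta LK\le 1$---but both routes produce the same $\eta KL\|z^t-z^*\|$ drift estimate and the same final contraction.

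Your ``secondary delicate point'' is a non-issue and you should drop it: under Assumption~\ref{assp_SCSC} the concatenated operator $G(z)=(\nabla_x f,-\nabla_y f)$ \emph{is} $\mu$-strongly monotone (this is exactly the paper's Lemma~\ref{lmm_SM}, proved by subtracting $\tfrac{\mu}{2}\|x\|^2-\tfrac{\mu}{2}\|y\|^2$ and using convexity-concavity of the remainder). Hence the centralized one-step bound $\|z-\alpha G(z)-z^*\|^2\le(1-2\mu\alpha+L^2\alpha^2)\|z-z^*\|^2$ follows immediately from expanding the square, with no weighted Lyapunov or cross-term gymnastics required.
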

Theorem \ref{Thm_convergence} guarantees linear convergence of FedGDA-GT (Algorithm \ref{alg_FedLC}) to the correct optimal solution of \eqref{global_emp_risk} under suitable choices of stepsize $\eta$. We note that linear convergence here is with respect to the outer-loop (indexed by $t$), as the inner-loop (indexed by $k$) can be implemented cheaply without any communication. The fast convergence speed and exact convergence of this result eliminate the tradeoff between model accuracy and communication efficiency. Furthermore, no restriction on heterogeneity level of local objectives is placed. For the homogeneous setting, we show in Appendix \ref{Apx_homo} that the convergence rate of FedGDA-GT can be improved at least $K$ times, compared to heterogeneous setting.

\section{Generalization bounds on minimax learning problems} \label{Sec_bound}

In this section, we consider minimax statistical learning, an important application of the minimax framework. We view the problem \eqref{global_emp_risk}  in Section \ref{Sec_algorithm} as the empirical version of a population minimax  optimization  problem. We evaluate the generalization performance of the empirical problem using Rademacher complexity. 

To be more specific, in a minimax statistical learning task, each agent is allocated with some local dataset $\mathcal{S}_i = \{ \xi_{i,j} \}_{j=1}^{n_i}$, where $\xi_{i,j}$ denotes the $j$th sample of agent $i$ and $n_i$ denotes the number of local samples. $\mathcal{S} = \bigcup_{i=1}^m \mathcal{S}_i$ is the dataset with all samples. Moreover, we assume $n_i = n, \forall i=1,\dots,m$ and $N = mn$ is the total number of samples. Then, each local objective function $f_i(x, y)$ is defined by using local data, i.e.,
\begin{equation}    \label{local_emp_risk}
    f_i(x,y) = \frac{1}{n} \sum_{j=1}^n l(x,y;\xi_{i,j}),
\end{equation}
where $l(x,y;\xi_{i,j})$ is the loss function measured at point $\xi_{i,j}$ and $\eqref{local_emp_risk}$ is called local empirical minimax risk. Suppose that for agent $i$ each data sample $\xi_{i,j}$ is independently drawn from some underlying distribution $P_i$, denoted by $\xi \sim P_i$. We can further define the local population minimax risk as follows:
\begin{equation}    \label{local_pop_risk}
    R_i(x, y) = \mathbb{E}_{\xi \sim P_i}\left[ l(x,y; \xi) \right]
\end{equation}
and similarly, the global population minimax risk is defined by
\begin{equation}    \label{global_pop_risk}
    R(x,y) = \mathbb{E}_{\xi \sim P}\left[ l(x,y;\xi) \right],
\end{equation}
where $P$ is the underlying distribution of the whole dataset $\mathcal{S}$. Then, the minimax problem based on the population minimax risk is given by
\begin{equation}    \label{global_pop_minimax}
    \min_{x \in \mathcal{X}} \max_{y \in \mathcal{Y}} R(x,y).
\end{equation}

Sample applications of our formulation \eqref{local_emp_risk}-\eqref{global_pop_minimax} can be found in Appendix \ref{Apx_applications}.

While we are interested in calculating the population minimax risk, in practice we can only solve \eqref{global_emp_risk}, which is a sampled version of \eqref{global_pop_minimax}, since the true distribution $P$ is unknown. In this sense, how could we expect the optimal solution of \eqref{global_emp_risk} performs successfully on \eqref{global_pop_risk}? To answer this question, we provide the generalization bound which measures the performance of the model trained on the empirical minimax risk $f(x,y)$. Our generalization bound is based on the notion of Rademacher complexity \cite{FML18} defined by
\begin{equation}    \label{Rademacher}
    \mathscr{R}(\mathcal{X},y) = \mathbb{E}_{\xi \sim P} \mathbb{E}_{\sigma}\left[ \sup_{x \in \mathcal{X}} \frac{1}{mn}\sum_{i=1}^m \sum_{j=1}^n \sigma_{i,j} l(x,y;\xi_{i,j}) \right],
\end{equation}
where $\sigma = \{ \sigma_{i,j} \}, \forall i=1,\dots,m, \forall j=1,\dots,n$, is a collection of Rademacher variables taking values from $\{-1,1\}$ uniformly. Basically, the Rademacher complexity \eqref{Rademacher} captures the capability of $\mathcal{X}$ to fit random sign noise, i.e., $\sigma$. Note that $\sigma_{i,j}l(x,y;\xi_{i,j})$ measures how well the loss $l$ correlates with noise $\sigma$ on sample space $\mathcal{S}$. By taking the supremum, it means what the best extent the feasible set $\mathcal{X}$ on average can correlate with random noise $\sigma$. Thus, if $\mathcal{X}$ is richer, then the Rademacher complexity is bigger. Moreover, the minimax Rademacher complexity is defined on $\mathcal{Y}$ by
\begin{equation}    \label{minimax-Rademacher}
    \mathscr{R}(\mathcal{X},\mathcal{Y}) = \max_{y \in \mathcal{Y}} \mathscr{R}(\mathcal{X},y).
\end{equation}

Given $\epsilon > 0$, we further define the $\epsilon$-minimum cover of $\mathcal{Y}$ in $l_2$ distance by
\begin{equation}
    \mathcal{Y}_{\epsilon} = \arg\min_{C(\mathcal{Y}, \epsilon)} \big| C(\mathcal{Y}, \epsilon) \big|,  \nonumber
\end{equation}
where $C(\mathcal{Y}, \epsilon) = \left\{ B_{\epsilon}(y) : y \in \mathcal{Y} \right\}$ is a collection of open balls $B_{\epsilon}(y)$ centered at $y$ with radius $\epsilon$ such that for any $y \in \mathcal{Y}$ there exists some $B_{\epsilon}(y') \in C(\mathcal{Y}, \epsilon)$ with $y \in B_{\epsilon}(y')$. Note that $\big| \mathcal{Y}_{\epsilon} \big| < \infty$, since $\mathcal{Y}$ is compact and thus every open cover of $\mathcal{Y}$ has a finite subcover.

Then, we have the following generalization bound for the distributed minimax learning problem:
\begin{theorem} \label{Thm_general-bound}
    Suppose $\vert l(x,y;\xi) - l(x,y';\xi) \vert \le L_y \Vert y-y' \Vert$ and $\vert l(x,y;\xi) \vert \le M_i(y)$, $\forall x\in \mathcal{X}$, $\forall y,y' \in \mathcal{Y}_{\epsilon}$ and $\forall \xi \sim P_i, i=1\dots,m$ with some positive scalar $L_y$ and real-valued function $M_i(y)>0$. Then, given any $\epsilon > 0$ and $\delta > 0$, with probability at least $1-\delta$ for any $(x,y) \in \mathcal{X}\times \mathcal{Y}$,
    \begin{equation}    \label{eq_general-bound}
        R(x,y) \le f(x,y) + 2\mathscr{R}(\mathcal{X}, y) + \sqrt{\sum_{i=1}^m \frac{M_i^2(y)}{2m^2 n}\log \frac{\vert \mathcal{Y}_{\epsilon} \vert}{\delta}} + 2L_y \epsilon   .
    \end{equation}
    %(b). If $l(\cdot, y; \xi)$ is $\mu$-strongly convex $\forall y \in \mathcal{Y}$ and $G_i$-Lipschitz continuous $\forall \xi \in \mathcal{D}_i$, with probability at least $1-\delta$,
\end{theorem}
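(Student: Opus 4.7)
The plan is to adapt the classical Rademacher-based uniform-convergence argument to the distributed sampling model and then lift the resulting pointwise-in-$y$ bound to a bound uniform over all of $\mathcal{Y}$ using the $\epsilon$-net $\mathcal{Y}_{\epsilon}$ together with the $y$-Lipschitzness of $l$.

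First I would fix a single $y \in \mathcal{Y}_{\epsilon}$ and study $\Phi(\mathcal{S}) = \sup_{x \in \mathcal{X}}\bigl(R(x,y) - f(x,y)\bigr)$. The samples $\xi_{i,j}$ are independent across both $i$ and $j$, and replacing any single $\xi_{i,j}$ changes $f(x,y)$ (hence $\Phi$) by at most a quantity proportional to $M_i(y)/(mn)$, so McDiarmid's bounded-differences inequality gives, with probability at least $1-\delta'$,
\[
\Phi(\mathcal{S}) \le \mathbb{E}[\Phi(\mathcal{S})] + \sqrt{\sum_{i=1}^m \frac{M_i^2(y)}{2m^2 n}\,\log(1/\delta')}.
\]
I would then control $\mathbb{E}[\Phi(\mathcal{S})]$ by the standard ghost-sample-plus-Rademacher-signs symmetrization, which yields $\mathbb{E}[\Phi(\mathcal{S})] \le 2\,\mathscr{R}(\mathcal{X},y)$; the non-i.i.d.\ structure across agents causes no trouble, since symmetrization proceeds independently within each agent's sample pool and the collection $\{\sigma_{i,j}\}$ in the definition of $\mathscr{R}(\mathcal{X},y)$ is exactly tailored to this.

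Next I would combine these two ingredients and apply a union bound over the finite cover $\mathcal{Y}_{\epsilon}$ with $\delta' = \delta/|\mathcal{Y}_{\epsilon}|$, so that the pointwise bound holds simultaneously for all $y^\star \in \mathcal{Y}_\epsilon$ with probability at least $1-\delta$. For an arbitrary $y \in \mathcal{Y}$ I would then pick $y^\star \in \mathcal{Y}_{\epsilon}$ with $\|y-y^\star\| < \epsilon$ and decompose
\[
R(x,y) - f(x,y) = \bigl[R(x,y) - R(x,y^\star)\bigr] + \bigl[R(x,y^\star) - f(x,y^\star)\bigr] + \bigl[f(x,y^\star) - f(x,y)\bigr].
\]
The outer two brackets are each bounded by $L_y\epsilon$ by the Lipschitz assumption on $l$ in $y$ (after taking expectation with respect to $\xi\sim P$ for the first and averaging over the sample $\mathcal{S}$ for the third), producing the $2L_y\epsilon$ slack, while the middle bracket is bounded by the union-bound step applied at $y^\star$.

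The step that will require most care is this last one: the union bound naturally yields $2\mathscr{R}(\mathcal{X},y^\star)$ and $M_i(y^\star)$ in the deviation term, whereas the theorem as stated uses $2\mathscr{R}(\mathcal{X},y)$ and $M_i(y)$. I would handle this by interpreting the hypotheses $|l(x,y;\xi)|\le M_i(y)$ and $|l(x,y;\xi)-l(x,y';\xi)|\le L_y\|y-y'\|$ as holding for $y,y' \in \mathcal{Y}_\epsilon$ together with the $y$ of interest—which is exactly the wording in the theorem—so that $M_i$ and the Rademacher complexity vary negligibly within each $\epsilon$-ball and can be identified with their values at $y$ up to terms already absorbed into $2L_y\epsilon$. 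Aside from this bookkeeping, the proof is a direct instantiation of the two standard empirical-process ingredients (symmetrization and $\epsilon$-nets) in the distributed setting.
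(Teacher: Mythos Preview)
Your proposal is correct and follows essentially the same route as the paper: McDiarmid for concentration of $\Phi(\mathcal{S})=\sup_{x}\{R(x,y)-f(x,y)\}$, ghost-sample symmetrization to bound $\mathbb{E}[\Phi]$ by $2\mathscr{R}(\mathcal{X},y)$, a union bound over the finite cover $\mathcal{Y}_\epsilon$, and a Lipschitz extension to arbitrary $y\in\mathcal{Y}$. You are in fact more explicit than the paper about the $y$ versus $y^\star$ bookkeeping in the last step, which the paper glosses over.
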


Theorem \ref{Thm_general-bound} generally states that given any $x,y$ feasible, it is highly possible that the distance between the global population minimax risk $R(x,y)$ and the global empirical minimax risk $f(x,y)$ can be bounded by Rademacher complexity and a term related to the number of agents and local sample size. Moreover, we allow the upper bound of $l(\cdot, y)$ can depend on different choices of $y$ and agents since different local distributions always have different effects on the value of $l(\cdot, y)$. 

Based on Theorem \ref{Thm_general-bound}, we further derive the high-probability bound on $\max_{y \in \mathcal{Y}}R(x,y)$.
\begin{corollary}   \label{Coro_max-bound}
    %Suppose $\vert l(x,y;\xi) - l(x,y';\xi) \vert \le L_y \Vert y-y' \Vert$ and $\vert l(x,y;\xi) \vert \le M_i(y)$, $\forall x\in \mathcal{X}$, $\forall y,y' \in \mathcal{Y}$ and $\forall \xi \sim P_i, i=1\dots,m$ with some positive scalar $L_y$ and \ew{real-valued function} $M_i(y)>0$. 
    Under the same conditions of Theorem \ref{Thm_general-bound}, with probability at least $1-\delta$ for any $x \in \mathcal{X}$, the following inequality holds for any $\epsilon>0$, $\delta > 0$:
    \begin{equation}    \label{eq_max-bound}
        Q(x) \le g(x) + 2 \mathscr{R}(\mathcal{X},\mathcal{Y}) + \sqrt{\max_{y \in \mathcal{Y}} \left\{ \sum_{i=1}^m \frac{M_i^2(y)}{2m^2 n} \right\} \log \frac{\vert \mathcal{Y}_{\epsilon}  \vert}{\delta}} + 2L_y \epsilon
    \end{equation}
    where $Q(x) = \max_{y \in \mathcal{Y}}R(x,y)$, $g(x) = \max_{y \in \mathcal{Y}} f(x,y)$ are the worst-case population and empirical risks, respectively.
\end{corollary}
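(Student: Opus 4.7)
The plan is to obtain Corollary \ref{Coro_max-bound} as a direct consequence of Theorem \ref{Thm_general-bound} by taking the supremum over $y\in\mathcal{Y}$ on both sides of the bound \eqref{eq_general-bound}. The crucial observation is that Theorem \ref{Thm_general-bound} is stated as a uniform bound: there is a single event of probability at least $1-\delta$ on which the inequality \eqref{eq_general-bound} holds simultaneously for every $(x,y)\in\mathcal{X}\times\mathcal{Y}$. Therefore, on this same event, I may fix $x\in\mathcal{X}$ and maximize both sides of \eqref{eq_general-bound} over $y\in\mathcal{Y}$ without worsening the probability guarantee.

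On the left-hand side, $\max_{y\in\mathcal{Y}} R(x,y)$ is exactly $Q(x)$ by definition. For the right-hand side, I would use the elementary fact that $\max_y (A_1(y)+A_2(y)+A_3(y)+A_4)\le \sum_k \max_y A_k(y)$ to split the maximum over the four additive terms. The first term gives $\max_y f(x,y) = g(x)$. The second gives $2\max_y \mathscr{R}(\mathcal{X},y) = 2\mathscr{R}(\mathcal{X},\mathcal{Y})$ by the definition \eqref{minimax-Rademacher}. The fourth term $2L_y\epsilon$ is independent of $y$. For the third term, since $\log(|\mathcal{Y}_\epsilon|/\delta)$ does not depend on $y$ and the square root is monotone on $[0,\infty)$, I can pull this factor outside the maximum and obtain $\sqrt{\max_{y\in\mathcal{Y}}\{\sum_{i=1}^m M_i^2(y)/(2m^2n)\}\cdot \log(|\mathcal{Y}_\epsilon|/\delta)}$, which matches the statement in \eqref{eq_max-bound}.

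Combining these observations yields the inequality \eqref{eq_max-bound} uniformly in $x\in\mathcal{X}$, on the original $(1-\delta)$-event from Theorem \ref{Thm_general-bound}. There is no real obstacle here: the only point that requires a moment of care is confirming that the high-probability event in Theorem \ref{Thm_general-bound} is uniform in $(x,y)$ rather than a pointwise statement, so that the supremum over $y$ can be moved inside the probability. Once this is noted, the corollary follows by the sub-additivity of the maximum and the monotonicity of the square root, with no additional probabilistic machinery required.
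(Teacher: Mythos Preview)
Your proposal is correct and matches the paper's own argument essentially step for step: both rely on the uniformity of the event in Theorem~\ref{Thm_general-bound} over $(x,y)\in\mathcal{X}\times\mathcal{Y}$, take the maximum over $y\in\mathcal{Y}$ on both sides, and then split the right-hand side via $\max_y\sum_k A_k(y)\le\sum_k\max_y A_k(y)$ together with the monotonicity of the square root. The only cosmetic difference is that the paper first maximizes the right-hand side and then, noting the resulting inequality holds for every $y$, maximizes the left-hand side, whereas you do both in one stroke; the content is identical.
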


We can further bound $\mathscr{R}(\mathcal{X}, \mathcal{Y})$ when the loss function $l(x,y;\xi)$ takes finite number of values for any $(x,y) \in \mathcal{X} \times \mathcal{Y}$ and any data samples.

\begin{lemma}   \label{Lmm_complexity-bound}
    Suppose for any $y \in \mathcal{Y}$ and $i \in \{1, \dots, m \}$, $\vert l(\cdot,y;\cdot) \vert$ is bounded by $M_i(y)$ and takes finite number of values. Further, assume that the VC-dimension of $\mathcal{X}$ is $d$. Then, the following inequality holds:
    \begin{eqnarray}
        \mathscr{R}(\mathcal{X}, \mathcal{Y}) \le \sqrt{2d \max_{y \in \mathcal{Y}} \left\{ \sum_{i=1}^m \frac{M^2_i(y)}{m^2 n} \right\} \left( 1 + \log \frac{mn}{d}\right)} . \label{eq_complexity-max-bound}
    \end{eqnarray}
\end{lemma}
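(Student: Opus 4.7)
\textbf{Proof proposal for Lemma \ref{Lmm_complexity-bound}.}
The plan is to reduce the problem, for each fixed $y$, to bounding the expected supremum of a Rademacher process indexed by a finite set, and then apply Massart's finite-class lemma together with a Sauer--Shelah type growth-function bound using the VC-dimension hypothesis. Finally I will take the maximum over $y \in \mathcal{Y}$.

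First, fix $y \in \mathcal{Y}$ and condition on the samples $S = \{\xi_{i,j}\}_{i=1,\dots,m,\, j=1,\dots,n}$. For each $x \in \mathcal{X}$, form the vector
\begin{equation*}
a(x) \;=\; \frac{1}{mn}\bigl( l(x,y;\xi_{i,j}) \bigr)_{i,j} \;\in\; \mathbb{R}^{mn},
\end{equation*}
so that the inner expectation in \eqref{Rademacher} equals $\mathbb{E}_\sigma \sup_{x\in\mathcal{X}} \langle \sigma, a(x)\rangle$. The boundedness hypothesis $|l(x,y;\xi)|\le M_i(y)$ yields
\begin{equation*}
\|a(x)\|^2 \;\le\; \frac{1}{(mn)^2}\sum_{i=1}^m n\, M_i^2(y) \;=\; \frac{1}{m^2 n}\sum_{i=1}^m M_i^2(y),
\end{equation*}
so every $a(x)$ lies in a ball of radius $r(y) := \sqrt{\sum_i M_i^2(y)/(m^2 n)}$.

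Next, I will bound the cardinality of the set $A_y(S) := \{a(x) : x \in \mathcal{X}\}$. Because $l(\cdot, y; \cdot)$ assumes only finitely many values, the behaviour of $x\mapsto (l(x,y;\xi_{i,j}))_{i,j}$ on the $mn$ sample points is determined by a finite partition of $\mathcal{X}$. Invoking the VC-dimension assumption on $\mathcal{X}$ and the Sauer--Shelah lemma (in its standard growth-function form), the number of such distinct patterns is at most $(e\,mn/d)^d$; hence $|A_y(S)| \le (emn/d)^d$. Massart's finite-class lemma then gives
\begin{equation*}
\mathbb{E}_\sigma \sup_{x\in\mathcal{X}} \langle \sigma, a(x)\rangle \;\le\; r(y)\sqrt{2\log |A_y(S)|} \;\le\; \sqrt{\frac{\sum_i M_i^2(y)}{m^2 n}}\,\sqrt{2d\bigl(1+\log(mn/d)\bigr)},
\end{equation*}
using $\log(emn/d) = 1 + \log(mn/d)$. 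The right-hand side does not depend on $S$, so taking expectation over $\xi\sim P$ leaves the bound unchanged, proving $\mathscr{R}(\mathcal{X},y)$ is bounded by that quantity for every fixed $y$. Finally, taking the maximum over $y \in \mathcal{Y}$ pulls the $M_i^2(y)$ inside the square root (since $\sqrt{\cdot}$ is monotone and the $d$-factor is $y$-independent), yielding \eqref{eq_complexity-max-bound}.

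The main technical point I expect to require care is the use of the Sauer--Shelah lemma: the paper states the VC-dimension of $\mathcal{X}$ is $d$, but what is really needed is a growth-function bound on the family of vectors $\{(l(x,y;\xi_{i,j}))_{i,j} : x\in\mathcal{X}\}$. This is legitimate precisely because $l$ takes finitely many values, so each coordinate of the vector depends on $x$ only through a $\{0,1\}$-valued pattern (one indicator per value level), and the joint growth function is controlled by that of the underlying set system of VC-dimension $d$. Once this reduction is made clean, the rest is a direct chaining of Massart's inequality with the Sauer--Shelah bound.
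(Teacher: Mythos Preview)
Your proposal is correct and follows essentially the same route as the paper: fix $y$, bound the Euclidean norm of the loss vectors by $\sqrt{\sum_i M_i^2(y)/(m^2 n)}$, apply Massart's finite-class lemma, control the cardinality via the growth function and Sauer's lemma using the VC-dimension $d$, and finally take the maximum over $y\in\mathcal{Y}$. Your additional remark on why the finite-valued assumption on $l$ is what makes the Sauer--Shelah step legitimate is a point the paper leaves implicit, but otherwise the two arguments coincide.
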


Corollary \ref{Coro_max-bound} measures the error between the worst-case population and empirical risks, that is $\max_{y \in \mathcal{Y}} R(x,y) - \max_{y \in \mathcal{Y}}f(x,y)$. A smaller error indicates a better performance of the model, which is trained empirically, generalized on the underlying distribution $P$. Combining Lemma \ref{Lmm_complexity-bound}, we know that this worst-case error can be bounded by some decreasing function with respect to sample size. Thus, in order to get a better generalization performance, one effective way is to draw more local samples for each agent.

It it worth noting that the generalization bound is related to the term $\sum_{i=1}^m M_i^2(y)$ as shown in \eqref{eq_general-bound}-\eqref{eq_complexity-max-bound}, which essentially measures the effect of feasible set $\mathcal{Y}$ by means of different local data distributions on each agent. This also captures the heterogeneity of agents.

In fact, the bounds \eqref{eq_general-bound}-\eqref{eq_complexity-max-bound} we proposed are generalized versions of those in \cite{Mohri19}, which essentially include them as special cases by selecting suitable $M_i(y)$. Specifically, noting that in \cite{Mohri19}, the global population risk takes the form of $R(x,y) = \sum_{i=1}^m y_i R_i(x)$ with $y_i \ge 0$ and $\sum_{i=1}^m y_i = 1$, then by choosing $M_i(y) = m y_i M$ with some $M > 0$ we recover the same result. Moreover, compared to \cite{Tse16,JLee18}, where only centralized minimax learning problems are considered, our bounds have the same order of complexity $\mathcal{O}(1/\sqrt{N})$ with $N$ denoting the total sample size by taking the uniform bound on $l(\cdot)$ for all agents.

%%%%%%%%%%%%%%%%%%%%%%%%%%%%%%%%%%%%%%%%%%%%%%%%%%%%%%%%%%%%%%%%%
%%%%%%%%%%%%%%%%%%%%%%%%%%%%%%%%%%%%%%%%%%%%%%%%%%%%%%%%%%%%%%%%%
%%%%%%%%%%%%%%%%%%%%%%%%%%%%%%%%%%%%%%%%%%%%%%%%%%%%%%%%%%%%%%%%%
%%%%%%%%%%%%%%%%%%%%%%%%%%%%%%%%%%%%%%%%%%%%%%%%%%%%%%%%%%%%%%%%%
\section{Experiments}   \label{Sec_exp}
In this section, we numerically measure the performance of FedGDA-GT compared to Local SGDA with full gradients on a personal laptop by solving \eqref{global_emp_risk}. We consider first perform experiments on quadratic objective functions with $x$ and $y$ uncoupled. Then, we test our algorithm on the robust linear regression problem. In both cases, FedGDA-GT performs much better than Local SGDA with heterogeneous local objectives.

\subsection{Uncoupled quadratic objective functions}
We first consider the following local objective functions:
\begin{equation}    \label{exp_quadratic}
    f_i(x,y) = \frac{1}{2} x^T A_i^T A_i x - \frac{1}{2} y^T A_i^T A_i y + (A_i^T b_i)^T (2x - y), ~\forall i=1,\dots,m,
\end{equation}
where $x, y \in \mathbb{R}^d$ and $A_i \in \mathbb{R}^{n_i \times d}$ with $n_i$ representing the number of samples of agent $i$. We generate $A_i$, $b_i$ as follows:

For each agent, every entry of $A_i$, denoted by $[A_i]_{kl}$, is generated by Gaussian distribution $\mathcal{N}(0, (0.5i)^{-2})$. To construct $b_i$, we generate a random reference point $\theta_i \in \mathbb{R}^d$, where $\theta_i \sim \mathcal{N}(\mu_i, I_{d\times d})$. Each element of $\mu_i$ is drawn from $\mathcal{N}(\alpha, 1)$ with $\alpha \sim \mathcal{N}(0, 100)$. Then $b_i = A_i \theta_i + \epsilon_i$ with $\epsilon_i \sim \mathcal{N}(0, 0.25I_{n_i \times n_i})$. We set the dimension of model as $d = 50$ and number of samples as $n_i = 500$ and train the models with $m=20$ agents by Algorithm \ref{alg_FedGDA} and Algorithm \ref{alg_FedLC}, respectively. In order to compare them, the learning rate is $10^{-4}$ for both algorithms and we choose Local SGDA with $K=1$, which is equivalent to a centralized GDA, as the baseline.
\begin{figure}[h]
	\centering
	%\subfigure[$K=10$]{\includegraphics[width=6.8cm]{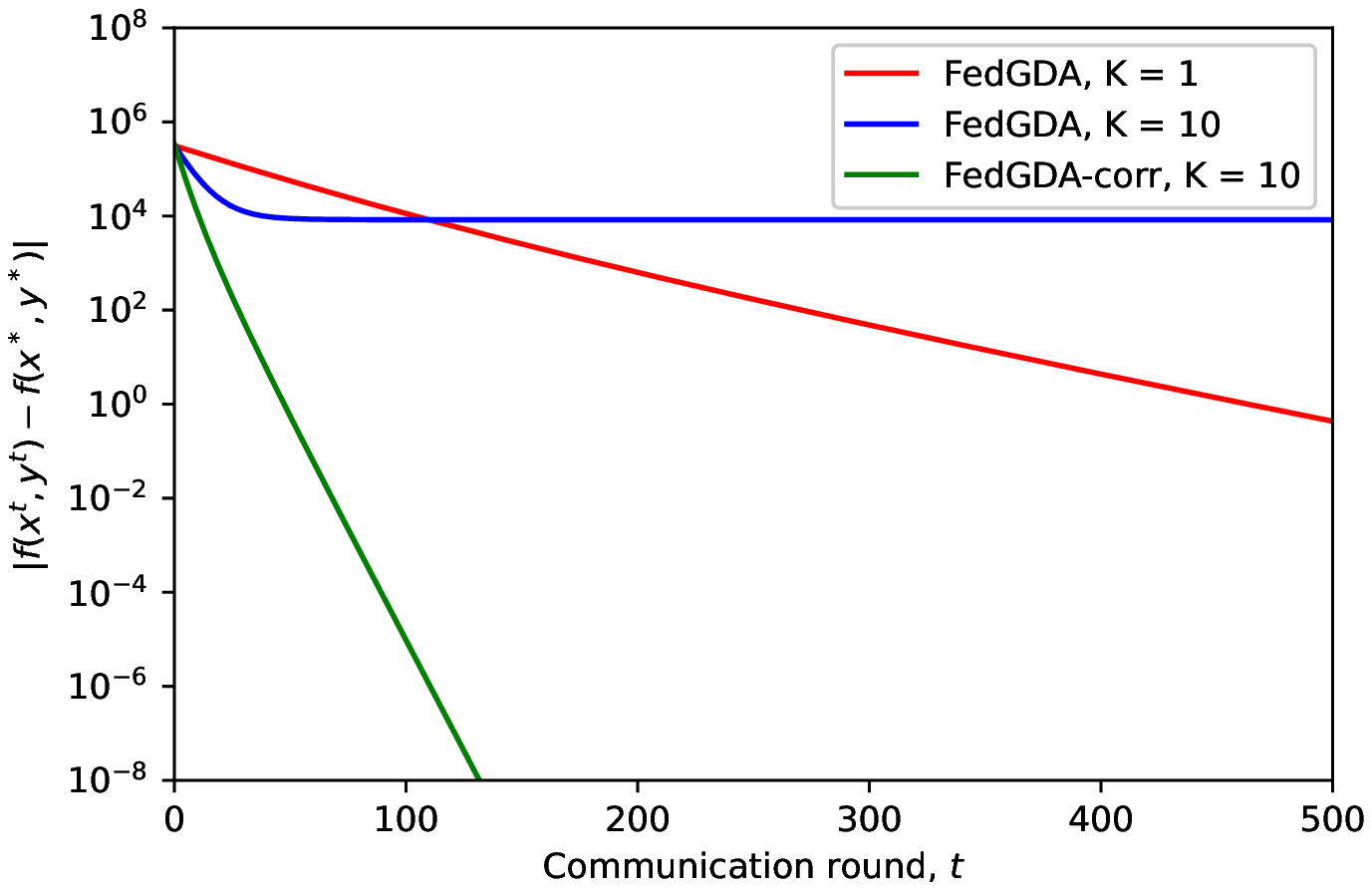}}
	%\quad
	\subfigure[$K=20$]{\includegraphics[width=4.8cm]{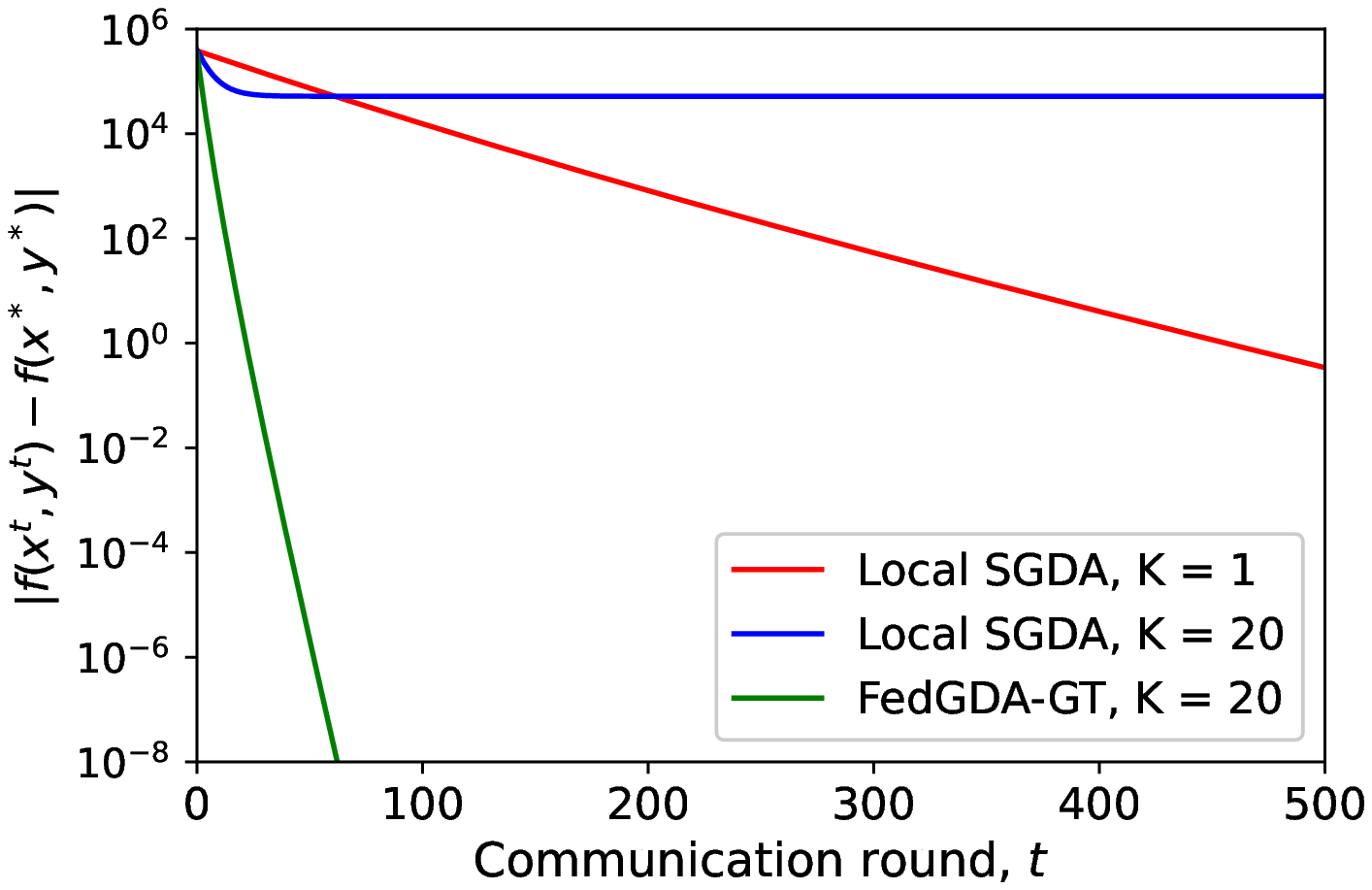}}
	\qquad
	\subfigure[$K=50$]{\includegraphics[width=4.8cm]{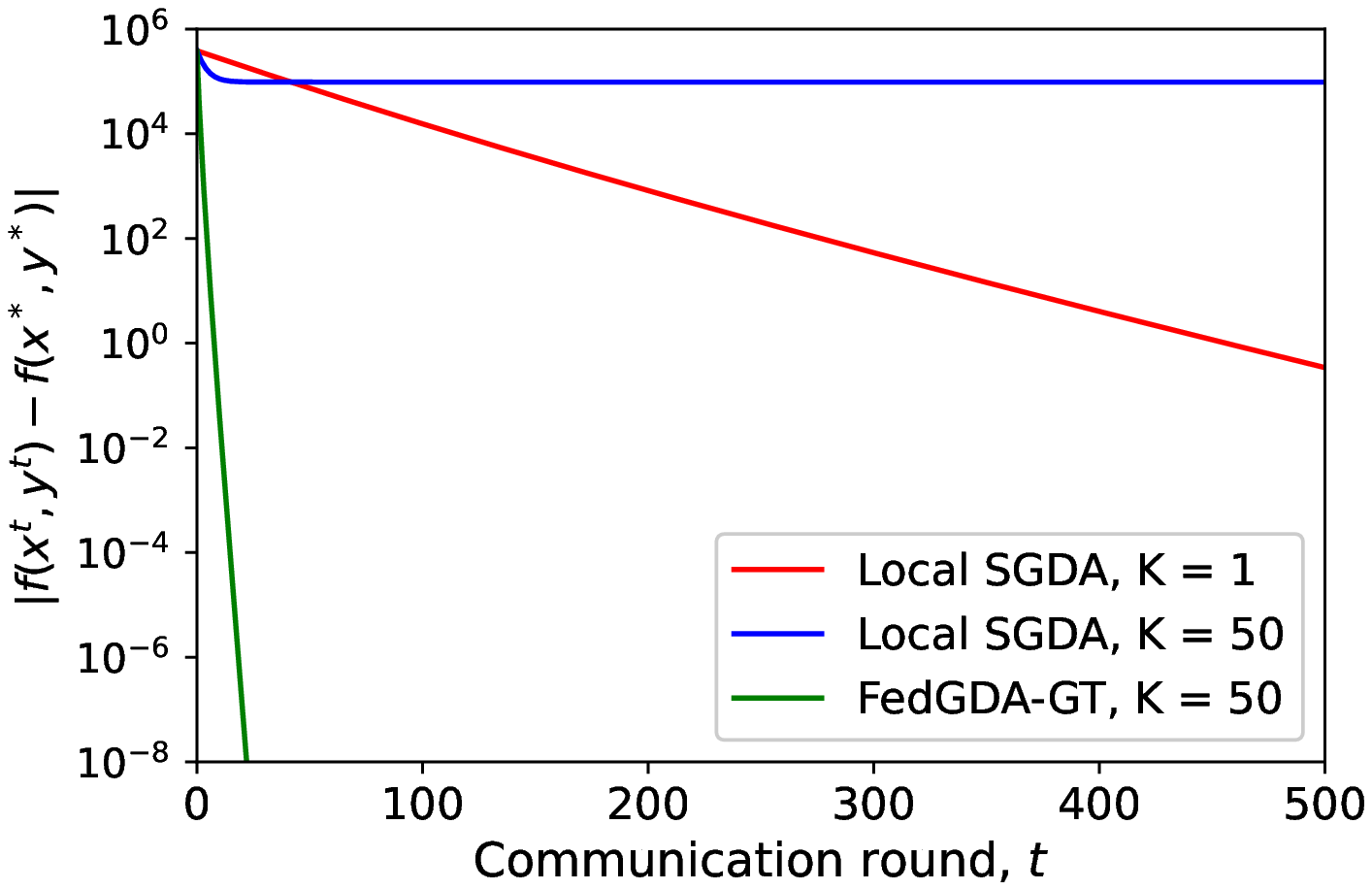}}
	\caption{Local SGDA and FedGDA-GT with constant stepsizes under different numbers of local updates}
	\label{fig_Quadratic_c=10}
\end{figure}
Figure \ref{fig_Quadratic_c=10} shows the trajectories of Algorithms \ref{alg_FedGDA} and \ref{alg_FedLC} under objective functions constructed by \eqref{exp_quadratic}, respectively. Different numbers of local updates are selected (with $K=20$ and $K=50$). In this heterogeneous setting, we can see that FedGDA-GT achieves linear convergence, converging to a more accurate solution with significantly fewer rounds of communication, compared with Local SGDA and centralized GDA. Moreover, our numerical results suggest that Local SGDA may converge to a non-optimal point (optimality gap over $10^4$), which conforms with our Theorem \ref{Thm_FedGDA-fixed-point}. %, cannot converge to a correct fixed point. the convergence error is quite large (even over $10^4$).%, which is intolerable in practice since such inaccurate model definitely suggests failure in training and testing. 

\subsection{Robust linear regression}
Next, we consider the problem of robust linear regression, which is widely studied in estimation with gross error \cite{RLR11,Attias19}. As the same formulation in \cite{Deng21}, each agent's loss function is defined by
\begin{equation}    \label{exp_RLR}
    f_i (x, y) = \frac{1}{n_i} \sum_{j=1}^{n_i} (x^T (a_{i,j} + y) - b_{i,j})^2 + \frac{1}{2} \Vert x \Vert^2, ~ \forall i = 1,\dots,m,
\end{equation}
where $(a_{i,j}, b_{i,j})$ is the $j$th data sample of agent $i$, $n_i$ is local sample size. Specifically, in \eqref{exp_RLR}, $x \in \mathbb{R}^d$ represents the model of linear regression and $y \in \mathbb{R}^d$ represents the gross noise aimed at contaminating each sample. We assume that there is an upper bound on the noise, i.e., $\Vert y \Vert \le 1$. By solving $\min_{x \in \mathbb{R}^d} \max_{\Vert y \Vert \le 1} \frac{1}{m}\sum_{i=1}^m f_i(x,y)$, we obtain a global robust model of the linear regression problem even under the worst contamination of gross noise. To measure the convergence of algorithms, we use the robust loss, i.e., given a model $\hat{x}$, the corresponding robust loss \cite{Deng21,Gauri22} is defined by $\tilde{f}(\hat{x})=\max_{\Vert y \Vert \le 1}\sum_{i=1}^m f_i(\hat x, y)$.
% %\begin{equation}
%     \tilde{f}(\hat{x}) = \max_{\Vert y \Vert \le 1} \frac{1}{m} \sum_{i=1}^m \sum_{j=1}^{n_i} \frac{1}{n_i}(\hat{x}^T (a_{i,j} + y) - b_{i,j})^2 + \frac{1}{2} \Vert \hat{x} \Vert^2 .  \nonumber
% \end{equation}
%Since solving the above maximization subproblem at each iteration is too computationally expensive, one alternative is to run several steps of gradient ascent to get an inexact solution of $\arg \max_{\Vert y \Vert \le 1}f(\hat{x})$.

We generate local models and data as follows: the local model $x_i^*$ is generated by a multivariate normal distribution. The output for agent $i$ is given by $b_{i,j} = (x_i^*)^T a_{i,j} + \epsilon_j$ with $\epsilon_j \sim \mathcal{N}(0,1)$. Each input point $a_{i,j}$ is with dimension $d$ and drawn from a Gaussian distribution $a_{i,j} \sim \mathcal{N}(\mu_i, K_i)$ where $\mu_i \sim \mathcal{N}(c_i, I_{d \times d})$ and $K_i = i^{-1.3}I_{d \times d}$. Each element of $c_i$ is drawn from $\mathcal{N}(0, \alpha^2)$. By choosing different $\alpha$, we control the heterogeneity of local data and hence $f_i(x,y)$.

In this experiment, we compare Algorithms \ref{alg_FedGDA} and \ref{alg_FedLC} under different heterogeneity levels, i.e., $\alpha = 1$, $\alpha=5$ and $\alpha = 20$. For each case, we choose the same constant $\eta$ for both Local SGDA and FedGDA-GT. As shown in Figure \ref{fig_RLR}, when local agents are more heterogeneous, FedGDA-GT performs better than Local SGDA, which lies not only in faster convergence but also smaller robust loss. Specifically, when $\alpha = 1$, two algorithms almost have the same performance. To explain this phenomenon, let us recall FedGDA-GT again. Smaller $\alpha$ essentially means more similar local objectives. In particular, $\alpha=0$ corresponds to i.i.d. cases. In this sense, the local updates of FedGDA-GT become the same as that in Local SGDA, which indicates similar performance as shown in Figure \ref{fig_RLR}(a).
\begin{figure}[h]
    \centering
    \subfigure[$\alpha=1$]
    {\includegraphics[width=4.6cm]{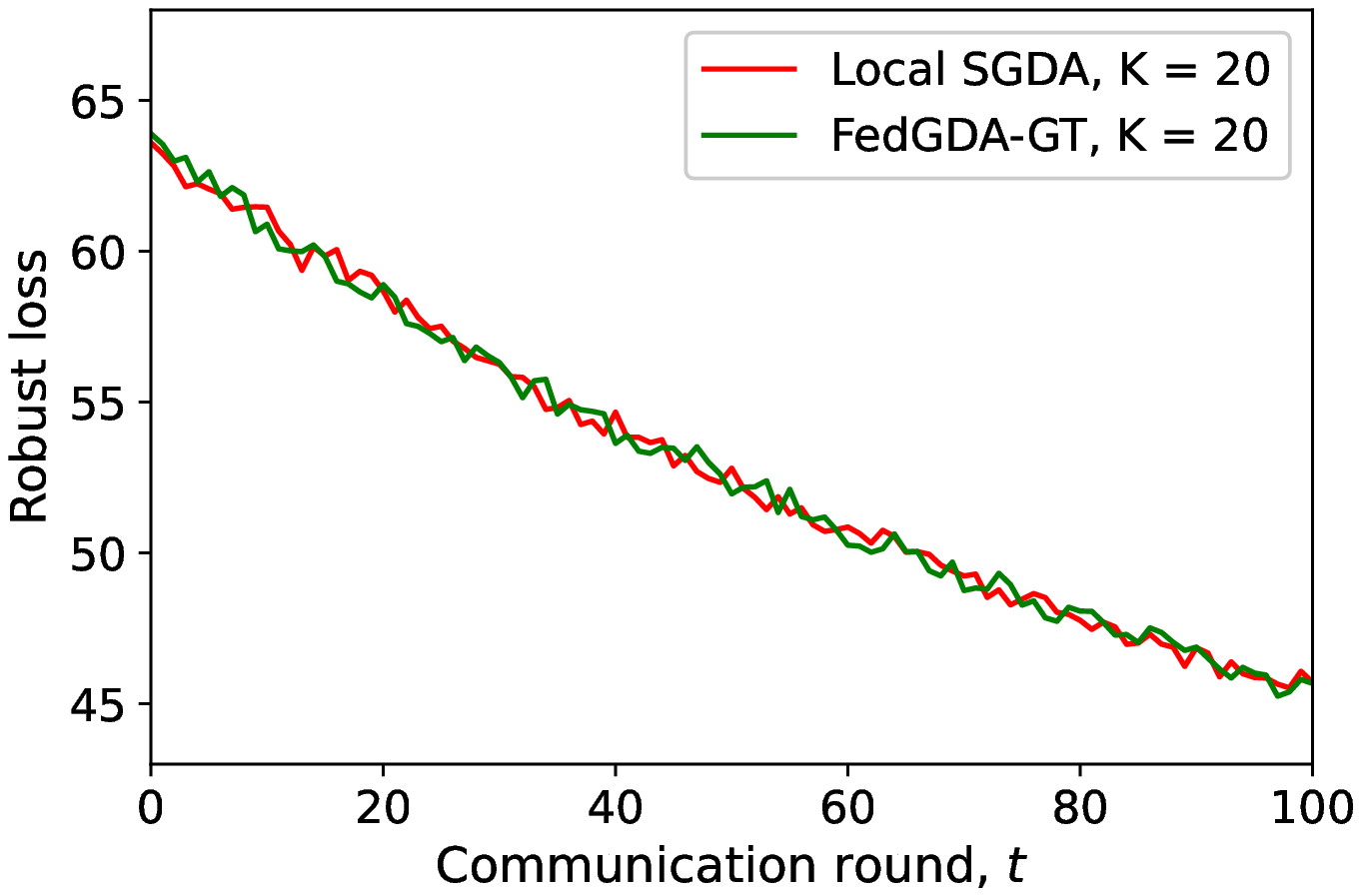}}
    %\quad
    \subfigure[$\alpha=5$]
    {\includegraphics[width=4.6cm]{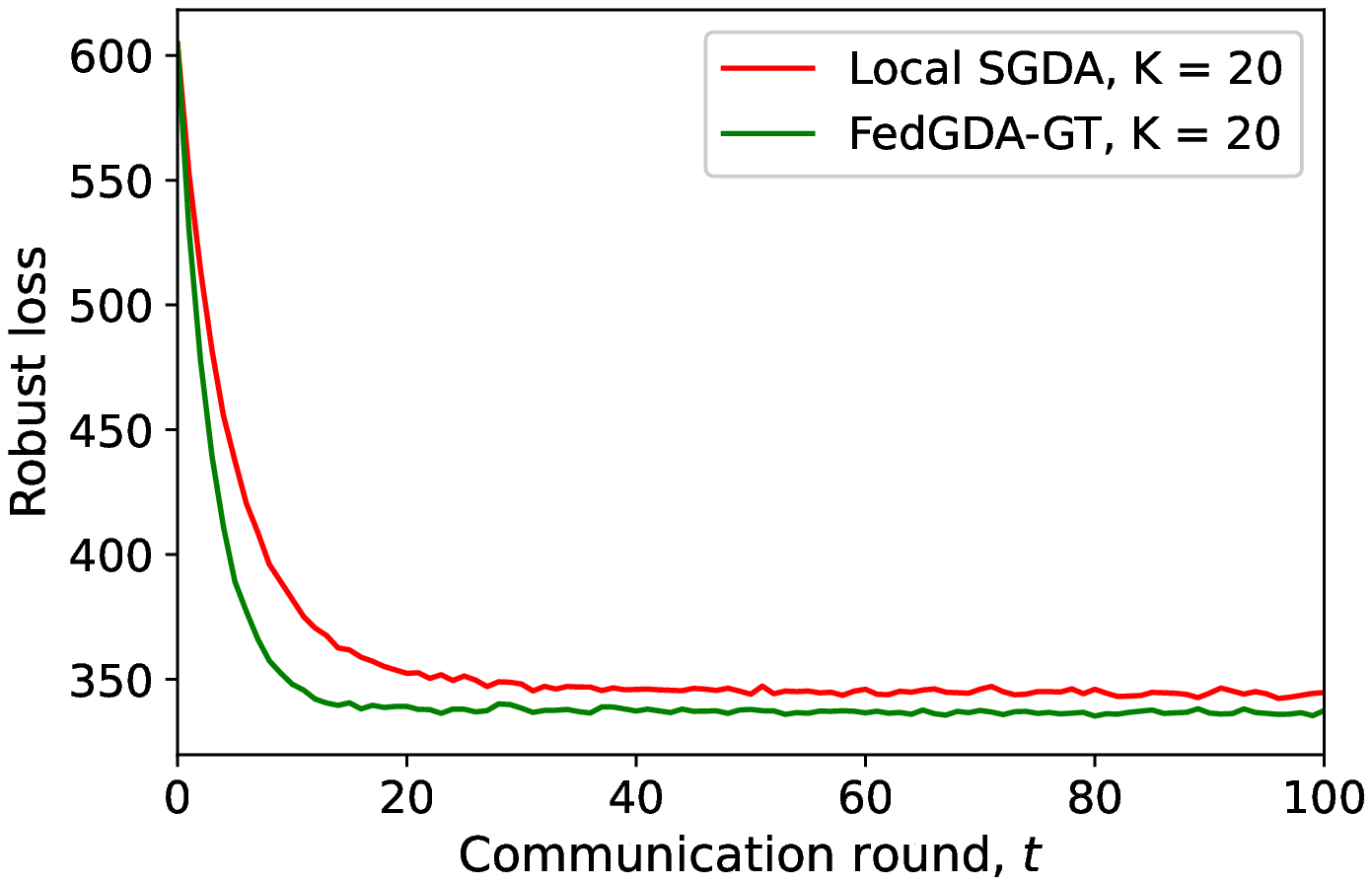}}
    %\quad
    %\subfigure[$\alpha=10$]
    %{\includegraphics[width=3.6cm]{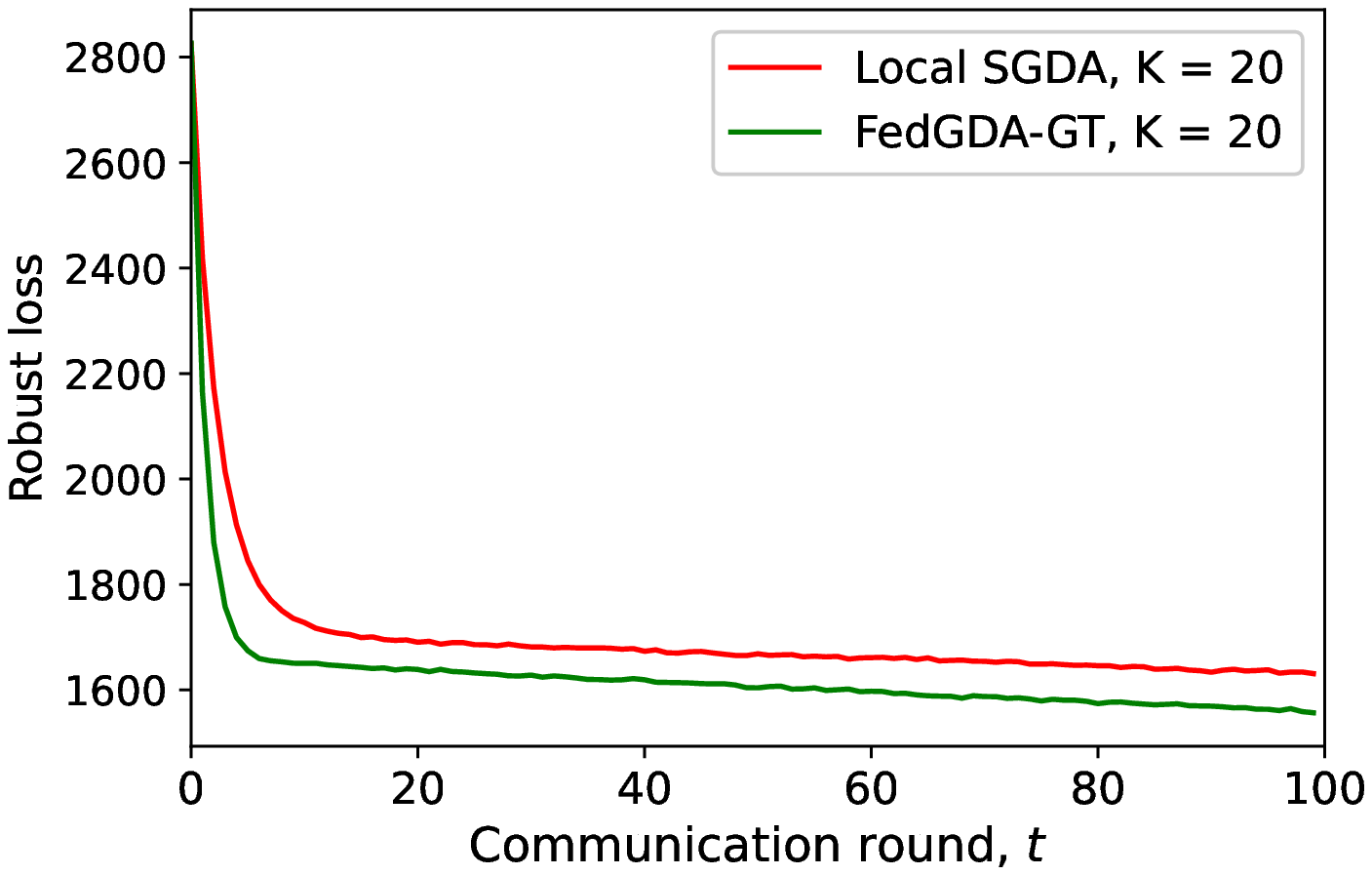}}
    %\quad
    \subfigure[$\alpha=20$]
    {\includegraphics[width=4.6cm]{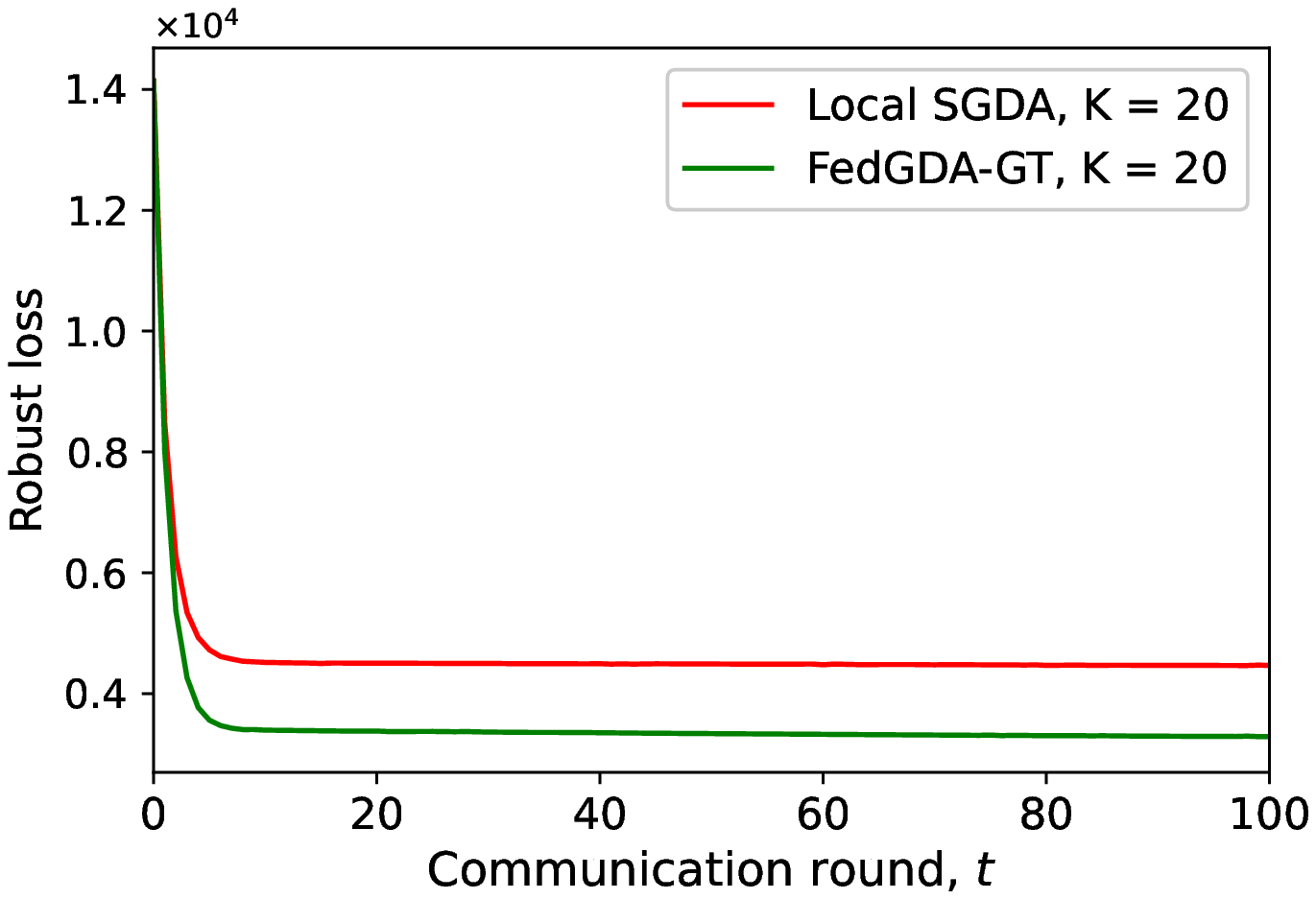}}
    \caption{Local SGDA and FedGDA-GT under different heterogeneity levels}
    \label{fig_RLR}
\end{figure}

%%%%%%%%%%%%%%%%%%%%%%%%%%%%%%%%%%%%%%%%%%%%%%%%%%%%%%%%%%%%%%%%%
%%%%%%%%%%%%%%%%%%%%%%%%%%%%%%%%%%%%%%%%%%%%%%%%%%%%%%%%%%%%%%%%%
%%%%%%%%%%%%%%%%%%%%%%%%%%%%%%%%%%%%%%%%%%%%%%%%%%%%%%%%%%%%%%%%%
%%%%%%%%%%%%%%%%%%%%%%%%%%%%%%%%%%%%%%%%%%%%%%%%%%%%%%%%%%%%%%%%%
\section{Conclusion}
In this paper, we investigate the federated minimax learning problem. We first characterize the fixed-point behavior of a recent algorithm Local SGDA to show that it presents a tradeoff between communication efficiency and model accuracy and cannot achieve linear convergence under constant learning rates. To resolve this issue, we propose FedGDA-GT that guarantees exact linear convergence and reaches $\epsilon$-optimality with $\mathcal{O}(\log (1/\epsilon))$ time, which is the same as centralized GDA method. Then, we study the generalization properties of  distributed minimax learning problems. We establish generalization error bounds without strong assumptions on local distributions and loss functions based on Rademacher complexity. The bounds match existing results of centralized minimax learning problems. %Moreover, FedGDA-GT places no limitation on the level of objective heterogeneity of agents.
Finally, we compare FedGDA-GT with two state-of-the-art algorithms, Local SGDA and GDA, through numerical experiments and show that FedGDA-GT outperforms in efficiency and/or accuracy.

\begin{ack}
%Use unnumbered first level headings for the acknowledgments. All acknowledgments
%go at the end of the paper before the list of references. Moreover, you are required to declare
%funding (financial activities supporting the submitted work) and competing interests (related financial activities outside the submitted work).
%More information about this disclosure can be found at: \url{https://neurips.cc/Conferences/2022/PaperInformation/FundingDisclosure}.

This work was supported by the NSF NRI 2024774.

%Do {\bf not} include this section in the anonymized submission, only in the final paper. You can use the \texttt{ack} environment provided in the style file to autmoatically hide this section in the anonymized submission.
\end{ack}

%%%%%%%%%%%%%%%%%%%%%%%%%%%%%%%%%%%%%%%%%%%%%%%%%%%%%%%%%%

%%%%%%%%%%%%%%%%%%%%%%%%%%%%%%%%%%%%%%%%%%%%%%%%%%%%%%%%%%%%

\clearpage
\section*{Checklist}

\begin{enumerate}

\item For all authors...
\begin{enumerate}
  \item Do the main claims made in the abstract and introduction accurately reflect the paper's contributions and scope?
    \answerYes{}
  \item Did you describe the limitations of your work?
    \answerYes{}
  \item Did you discuss any potential negative societal impacts of your work?
    \answerNA{}
  \item Have you read the ethics review guidelines and ensured that your paper conforms to them?
    \answerYes{}
\end{enumerate}

\item If you are including theoretical results...
\begin{enumerate}
  \item Did you state the full set of assumptions of all theoretical results?
    \answerYes{}
        \item Did you include complete proofs of all theoretical results?
    \answerYes{}
\end{enumerate}

\item If you ran experiments...
\begin{enumerate}
  \item Did you include the code, data, and instructions needed to reproduce the main experimental results (either in the supplemental material or as a URL)?
    \answerYes{}
  \item Did you specify all the training details (e.g., data splits, hyperparameters, how they were chosen)?
    \answerYes{}
        \item Did you report error bars (e.g., with respect to the random seed after running experiments multiple times)?
    \answerNA{}
        \item Did you include the total amount of compute and the type of resources used (e.g., type of GPUs, internal cluster, or cloud provider)?
    \answerYes{}
\end{enumerate}

\item If you are using existing assets (e.g., code, data, models) or curating/releasing new assets...
\begin{enumerate}
  \item If your work uses existing assets, did you cite the creators?
    \answerNA{}
  \item Did you mention the license of the assets?
    \answerNA{}
  \item Did you include any new assets either in the supplemental material or as a URL?
    \answerNA{}
  \item Did you discuss whether and how consent was obtained from people whose data you're using/curating?
    \answerNA{}
  \item Did you discuss whether the data you are using/curating contains personally identifiable information or offensive content?
    \answerNA{}
\end{enumerate}

\item If you used crowdsourcing or conducted research with human subjects...
\begin{enumerate}
  \item Did you include the full text of instructions given to participants and screenshots, if applicable?
    \answerNA{}
  \item Did you describe any potential participant risks, with links to Institutional Review Board (IRB) approvals, if applicable?
    \answerNA{}
  \item Did you include the estimated hourly wage paid to participants and the total amount spent on participant compensation?
    \answerNA{}
\end{enumerate}

\end{enumerate}

%%%%%%%%%%%%%%%%%%%%%%%%%%%%%%%%%%%%%%%%%%%%%%%%%%%%%%%%%%%%
%%%%%%%%%%%%%%%%%%%%%%%%%%%%%%%%%%%%%%%%%%%%%%%%%%%%%%%%%%%%
%%%%%%%%%%%%%%%%%%%%%%%%%%%%%%%%%%%%%%%%%%%%%%%%%%%%%%%%%%%%
%%%%%%%%%%%%%%%%%%%%%%%%%%%%%%%%%%%%%%%%%%%%%%%%%%%%%%%%%%%%
%%%%%%%%%%%%%%%%%%%%%%%%%%%%%%%%%%%%%%%%%%%%%%%%%%%%%%%%%%%%
%%%%%%%%%%%%%%%%%%%%%%%%%%%%%%%%%%%%%%%%%%%%%%%%%%%%%%%%%%%%
\clearpage

\appendix

\section{Applications of distributed/federated minimax problems}    \label{Apx_applications}
In this section, we consider specific instantiations of \eqref{global_emp_risk} and \eqref{global_pop_minimax}. Two examples are presented: one is federated generative adversarial networks, another is agnostic federated learning. We show that both of them are special cases of the general framework considered in the paper. 

\subsection{Federated generative adversarial networks}
In \cite{FedGAN}, the authors consider to train GANs in a federated way, where $m$ agents with corresponding local datasets cooperate to learn a common model which is essentially the model of centralized GAN. Then, for each agent, its local objective function is defined by
\begin{equation}
    R_i(x,y) = -\mathbb{E}_{\xi \sim P_i}\left[ \log \phi_{y}(\xi) \right] - \mathbb{E}_{\xi' \sim Q_i(x)}\left[ \log (1 - \phi_{y}(\xi')) \right] \nonumber
\end{equation}
where $\phi_{y}$ is the discriminator and $Q_i(x)$ is the distribution to generate fake data of the generator. And the objective of the centralized GAN is given by
\begin{equation}
    R(x,y) = \frac{1}{m} \sum_{i=1}^m R_i(x,y)  \nonumber
\end{equation}
when identical sample sizes are assumed. This is essentially the same as our formulation.

\subsection{Agnostic federated learning}
The framework of agnostic federated learning was first proposed and analyzed in \cite{Mohri19}. where the centralized model is leaned for any possible target distribution that is formed by a convex combination of all agents' local distributions. In particular, let $P_i$ denote the distribution of agent $i$. Then, the target distribution is formed by $\sum_{i=1}^m \lambda_i^* P_i$ for some unknown $\lambda^*$ such that $\lambda^* \in \Lambda$, where $\Lambda$ represents a simplex. Then, agnostic federated learning is aimed at learning a model $\theta^*$ that performs best under the worst case, i.e.,
\begin{equation}
    \theta^* = \arg\min_{\theta \in \Theta} \left\{R(\theta, \Lambda) := \max_{\lambda \in \Lambda}\sum_{i=1}^m \lambda_i R_i(\theta) \right\}  \nonumber
\end{equation}
where $R_i(\theta) = \mathbb{E}_{\xi \sim P_i}[l(\theta;\xi)]$ is the local population risk. The empirical version of the problem can be derived similarly. Note that this formulation is essentially included by our problem.

%%%%%%%%%%%%%%%%%%%%%%%%%%%%%%%%%%%%%%%%%%%%%%%%%%%%%%%%%%%%%%%
%%%%%%%%%%%%%%%%%%%%%%%%%%%%%%%%%%%%%%%%%%%%%%%%%%%%%%%%%%%%%%%
%%%%%%%%%%%%%%%%%%%%%%%%%%%%%%%%%%%%%%%%%%%%%%%%%%%%%%%%%%%%%%%
%%%%%%%%%%%%%%%%%%%%%%%%%%%%%%%%%%%%%%%%%%%%%%%%%%%%%%%%%%%%%%%
\section{Proof of Proposition \ref{Thm_FedGDA-fixed-point}}
%\begin{proof}
    Given \eqref{FedGDA_operator}, it is straightforward that 
    \begin{eqnarray}
        x^{t+1}_{i,K} &=& \mathcal{D}_i^K (x^{t+1}_{i,0}, y^{t+1}_{i,0}) ,    \nonumber   \\
        y^{t+1}_{i,K} &=& \mathcal{A}_i^K (x^{t+1}_{i,0}, y^{t+1}_{i,0}) .    \nonumber
    \end{eqnarray}
    Noting $(x^{t+1}, y^{t+1}) = \frac{1}{m}\sum_{i=1}^m (x^{t+1}_{i,K}, y^{t+1}_{i,K})$ and $(x^t, y^t) = (x^{t+1}_{i,0}, y^{t+1}_{i,0})$,
    \begin{eqnarray}
        x^{t+1} &=& \frac{1}{m}\sum_{i=1}^m \mathcal{D}_i^K (x^t, y^t) ,    \nonumber   \\
        y^{t+1} &=& \frac{1}{m}\sum_{i=1}^m \mathcal{A}_i^K (x^t, y^t) .    \nonumber
    \end{eqnarray}
    Further using $\lim_{t\to\infty} (x^t, y^t) = (x^*, y^*)$ gives
    \begin{eqnarray}
        \frac{1}{m}\sum_{i=1}^m \sum_{k=0}^{K-1} \nabla_{x} f_i(\mathcal{D}_i^k(x^*, y^*), \mathcal{A}_i^k(x^*, y^*)) &=& 0 ,    \nonumber   \\
        \frac{1}{m}\sum_{i=1}^m \sum_{k=0}^{K-1} \nabla_{y} f_i(\mathcal{D}_i^k(x^*, y^*), \mathcal{A}_i^k(x^*, y^*)) &=& 0 ,    \nonumber
    \end{eqnarray}
    which completes the proof.
%\end{proof}

%%%%%%%%%%%%%%%%%%%%%%%%%%%%%%%%%%%%%%%%%%%%%%%%%%%%%%%%%%%%%%%
%%%%%%%%%%%%%%%%%%%%%%%%%%%%%%%%%%%%%%%%%%%%%%%%%%%%%%%%%%%%%%%
%%%%%%%%%%%%%%%%%%%%%%%%%%%%%%%%%%%%%%%%%%%%%%%%%%%%%%%%%%%%%%%
%%%%%%%%%%%%%%%%%%%%%%%%%%%%%%%%%%%%%%%%%%%%%%%%%%%%%%%%%%%%%%%

\section{An illustrative example for Local SGDA with constant stepsizes}    \label{Apx_example}
We illustrate the inexact convergence of local SGDA with constant stepzie through a simple instance of \eqref{global_emp_risk} where only two agents cooperate to find a minimax point of $f(x,y)$ by Local SGDA using full gradient information. We further assume that each $f_i(x,y)$ is strongly-convex-strongly-concave and Lipschitz smooth such that the minimax point is unique and linear convergence is possible to reach. Specifically, we construct local objectives as follows:
\begin{eqnarray}
    f_1 (x,y) &=& x^2 - y^2 - (x - y)  \nonumber \\
    f_2 (x,y) &=& 4x^2 - 4y^2 - 32(x - y)   \nonumber   
\end{eqnarray}
where the minimax point is $x^* = y^* = \left( \sum_{i=1}^2 2i^2 \right)^{-1} \sum_{i=1}^2 (31i - 30)$.
By Proposition \ref{Thm_FedGDA-fixed-point}, a straightforward calculation gives
\begin{eqnarray}
    x^*_{\mathrm{Local-SGDA}} &=& \left(\sum_{i=1}^2 \sum_{k=0}^{K-1} 2i^2(1 - 2\eta_x i^2)^k \right)^{-1} \sum_{i=1}^2 \sum_{k=0}^{K-1} (31i - 30)(1 - 2\eta_x i^2)^k ,   \nonumber    \\
    y^*_{\mathrm{Local-SGDA}} &=& \left(\sum_{i=1}^2 \sum_{k=0}^{K-1} 2i^2(1 - 2\eta_y i^2)^k \right)^{-1} \sum_{i=1}^2 \sum_{k=0}^{K-1} (31i - 30)(1 - 2\eta_y i^2)^k .  \nonumber
\end{eqnarray}
In general $x^*_{\mathrm{FedGDA}} \ne x^*$, $y^*_{\mathrm{FedGDA}} \ne y^*$ when $K \ge 2$. Therefore, we see that Local SGDA has incorrect fixed points when constant stepsizes are used even under deterministic scenarios. 

In the sequel, we empirically show the effect of different numbers of local updates on the fixed point. We consider cases with $K = 1$, $K = 10$, $K = 20$, $K = 50$. The stepsizes $\eta_x$ and $\eta_y$ are set by $\eta_x = \eta_y = 0.1$ when $K=1$ and by $\eta_x = \eta_y = 0.001$ for the remaining cases. The initial points $(x^0, y^0)$ for four cases are identical for the convenience of comparison. From Figure \ref{fig_FedGDA-fails}, when $K=1$ Local SGDA reduces to centralized GDA and converges to the minimax point $(x^*, y^*)$ linearly by strong-convexity-strong-concavity and Lipschitz smoothness assumptions. However, for $K=10,20,50$, given identical stepsizes, larger the number of local updates is, fewer communication rounds are needed until convergence, but farther the limit points are from the optimal one. Another point that is worthy to note is that convergence error between the minimax point and the fixed point of Local SGDA can be too large to be neglected (even over $10^3$ in Section \ref{Sec_exp}), although the errors shown in Figure \ref{fig_FedGDA-fails} are relatively small.

\begin{figure}[h]
	\begin{center}
		\includegraphics[width=10cm]{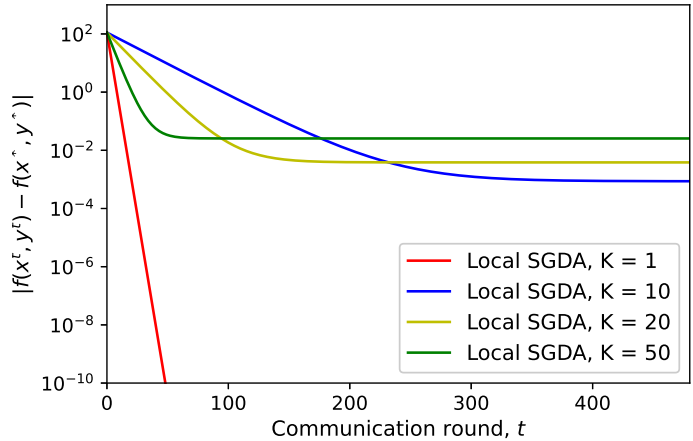}
		\caption{Local SGDA with constant stepsizes under different numbers of local updates}
		\label{fig_FedGDA-fails}
	\end{center}
\end{figure}

%%%%%%%%%%%%%%%%%%%%%%%%%%%%%%%%%%%%%%%%%%%%%%%%%%%%%%%%%%%%%%%
%%%%%%%%%%%%%%%%%%%%%%%%%%%%%%%%%%%%%%%%%%%%%%%%%%%%%%%%%%%%%%%
%%%%%%%%%%%%%%%%%%%%%%%%%%%%%%%%%%%%%%%%%%%%%%%%%%%%%%%%%%%%%%%
%%%%%%%%%%%%%%%%%%%%%%%%%%%%%%%%%%%%%%%%%%%%%%%%%%%%%%%%%%%%%%%
\section{Convergence analysis of FedGDA-GT} \label{Apx_convergence}

\subsection{Proof of Lemma \ref{Lmm_unique-point}}
First, we introduce the definition of saddle point of $f(x,y)$:
\begin{definition}  \label{def_saddle-point}
    The point $(x^*, y^*)$ is said to be a saddle point of $f(x,y)$ if 
    $$ f(x^*, y) \le f(x^*, y^*) \le f(x, y^*), \forall x \in \mathcal{X}, y \in \mathcal{Y} .$$
\end{definition}
Obviously, by Definitions \ref{def_minimax-point} and \ref{def_saddle-point}, we know that any saddle point of $f(x,y)$ is also a minimax point of $f(x,y)$. Then, any saddle point in the interior of $\mathcal{X}\times \mathcal{Y}$ must satisfy Lemma \ref{Lmm_stationary-point}. Moreover, when $f(x,y)$ is strongly-convex-strongly-concave, we show that any minimax point is also a saddle point, stated as follows:
\begin{lemma}   \label{Lmm_minimax-is-saddle}
    Suppose $f(x,y)$ satisfies Assumptions \ref{assp_SCSC} and \ref{assp_convex-sets}. If $(x^*,y^*)$ is a minimax point of $f(x,y)$, then it is also a saddle point of $f(x,y)$. Moreover, 
    $$
        \nabla_x f(x^*,y^*) = \nabla_y f(x^*, y^*) = 0 ~~ \Longleftrightarrow ~~ (x^*, y^*)~\mathrm{~ is~ a~ saddle ~point ~ of}~ f(x,y) .
    $$
\end{lemma}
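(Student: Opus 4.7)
The lemma has two parts: (i) under Assumptions \ref{assp_SCSC} and \ref{assp_convex-sets}, every minimax point of $f$ is also a saddle point, and (ii) a point $(x^*,y^*)$ is a saddle point if and only if $\nabla_x f(x^*,y^*)=\nabla_y f(x^*,y^*)=0$. The overall strategy is, for (i), to upgrade the $\max_{y'}f(x,y')$ inequality of Definition \ref{def_minimax-point} into the pointwise $f(x,y^*)$ inequality of Definition \ref{def_saddle-point} via a Danskin-type envelope argument combined with first-order optimality, and, for (ii), to use global strong convexity/concavity to reduce the saddle-point condition to vanishing gradients.

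For part (i), the first inequality in Definitions \ref{def_minimax-point} and \ref{def_saddle-point} coincide, so it suffices to show $f(x^*,y^*)\le f(x,y^*)$ for all $x\in\mathcal{X}$. By strong concavity of $f(x^*,\cdot)$ and compactness of $\mathcal{Y}$, the map $y\mapsto f(x^*,y)$ has a unique maximizer on $\mathcal{Y}$, which the first minimax inequality pins down as $y^*$. Introducing the envelope $g(x):=\max_{y\in\mathcal{Y}}f(x,y)$ with unique inner argmax $y^*(x)$, Danskin's theorem gives that $g$ is differentiable with $\nabla g(x)=\nabla_x f(x,y^*(x))$, and it inherits (strong) convexity from $f(\cdot,y)$. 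The second minimax inequality reads $g(x^*)\le g(x)$ for all $x\in\mathcal{X}$, so the first-order optimality condition for minimizing convex $g$ over the convex set $\mathcal{X}$ gives $\langle\nabla_x f(x^*,y^*),x-x^*\rangle\ge 0$ for every $x\in\mathcal{X}$. Convexity of $f(\cdot,y^*)$ then yields
$$
f(x,y^*)\ge f(x^*,y^*)+\langle\nabla_x f(x^*,y^*),x-x^*\rangle\ge f(x^*,y^*),
$$
supplying the missing saddle-point inequality.

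For the equivalence in (ii), the forward direction is immediate from global strong convexity/concavity: if $\nabla_x f(x^*,y^*)=0$, then $x^*$ is the unique global minimizer of the $\mu$-strongly convex function $f(\cdot,y^*)$ on $\mathbb{R}^p$, so $f(x^*,y^*)\le f(x,y^*)$ for all $x\in\mathcal{X}$, and symmetrically for $y$. For the reverse direction, a saddle point $(x^*,y^*)\in\mathcal{X}\times\mathcal{Y}$ is also a minimax point (trivially), so part (i) together with the strict convexity/concavity of $f$ guarantees its uniqueness on $\mathcal{X}\times\mathcal{Y}$; since Assumption \ref{assp_SCSC} holds on all of $\mathbb{R}^p\times\mathbb{R}^q$, the unconstrained problem also admits a unique saddle point with vanishing gradients, which by Assumption \ref{assp_convex-sets} lies in $\mathcal{X}\times\mathcal{Y}$ and must therefore coincide with $(x^*,y^*)$, giving $\nabla_x f(x^*,y^*)=\nabla_y f(x^*,y^*)=0$.

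The main obstacle is the envelope step in part (i): it requires the inner maximizer to be unique and single-valued so that $g$ is differentiable and Danskin applies cleanly; this is supplied by strong concavity but must be invoked carefully. A more subtle point in the reverse direction of (ii) is that the constrained saddle could \emph{a priori} sit on the boundary of $\mathcal{X}\times\mathcal{Y}$ with nonzero gradient; reconciling this with the unconstrained strongly-convex-strongly-concave optimizer via Assumption \ref{assp_convex-sets} and the uniqueness argument is the key step needed to close the equivalence.
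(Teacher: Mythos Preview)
Your proposal is correct and takes a genuinely different route from the paper. The paper's own proof is essentially a two-line citation: it invokes Lemma~\ref{Lmm_stationary-point} to conclude that any minimax point has vanishing gradients, then quotes Proposition~5 of \cite{Jin-minimax} (together with the Hessian sign conditions from Assumption~\ref{assp_SCSC}) for the equivalence $\nabla f(x^*,y^*)=0 \Leftrightarrow (x^*,y^*)$ is a saddle point; chaining these gives both conclusions. By contrast, your argument is self-contained. For part (i) you use a Danskin-envelope plus variational-inequality argument, which has the advantage of not requiring the minimax point to be interior (the paper's route through Lemma~\ref{Lmm_stationary-point} does). For part (ii), your forward direction is the direct strong-convexity/strong-concavity computation, and your reverse direction matches the constrained saddle point with the unique unconstrained one via a uniqueness argument; the paper simply outsources this equivalence to \cite{Jin-minimax}. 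What your approach buys is transparency and a slightly more robust part (i); what the paper's buys is brevity. One remark: both your reverse direction and the paper's proof implicitly read Assumption~\ref{assp_convex-sets} as placing the \emph{unconstrained} critical point inside $\mathcal{X}\times\mathcal{Y}$ (equivalently, the minimax point is interior)---the paper needs this to apply Lemma~\ref{Lmm_stationary-point}, and you invoke it explicitly when you say the unconstrained saddle lies in $\mathcal{X}\times\mathcal{Y}$. Your mention of ``part (i) together with strict convexity/concavity'' to get uniqueness is slightly roundabout---uniqueness of saddle points follows from strict convexity/concavity alone (this is the content of the paper's Lemma~\ref{Lmm_unique-saddle}) and part (i) is not needed there---but the overall argument still closes.
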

\begin{proof}
    By Assumption \ref{assp_SCSC}, we have
    \begin{equation}
        \nabla_{xx}^2 f(x,y) \succeq \mu I , \quad \nabla_{yy}^2 f(x,y) \preceq -\mu I, ~~~\forall x,y  .    \nonumber
    \end{equation}
    By Lemma \ref{Lmm_stationary-point} and Proposition 5 in \cite{Jin-minimax}, we obtain that under Assumptions \ref{assp_SCSC} and \ref{assp_convex-sets},
    $$
        \nabla_x f(x^*,y^*) = \nabla_y f(x^*, y^*) = 0 ~~ \Longleftrightarrow ~~ (x^*, y^*)~\mathrm{~ is~ a~ saddle ~point ~ of}~ f(x,y) .
    $$
    Noting that $(x^*, y^*)$ is a minimax point implies $\nabla_x f(x^*,y^*) = \nabla_y f(x^*, y^*) = 0$ by Lemma \ref{Lmm_stationary-point}, this completes the proof.
\end{proof}
Next, we provide the uniqueness statement of saddle point $(x^*, y^*)$.
\begin{lemma}   \label{Lmm_unique-saddle}
    Under Assumption \ref{assp_SCSC}, the saddle point $(x^*, y^*)$ of $f(x,y)$ is unique in $\mathcal{X} \times \mathcal{Y}$.
\end{lemma}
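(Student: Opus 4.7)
The plan is to argue by contradiction: assume two distinct saddle points $(x_1^*,y_1^*)$ and $(x_2^*,y_2^*)$ both exist in $\mathcal{X}\times\mathcal{Y}$, and derive a contradiction from the strong convexity/concavity of $f$.

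First I would apply Definition \ref{def_saddle-point} to each point and cross-evaluate. From $(x_1^*,y_1^*)$ being a saddle point, taking $x=x_2^*$ and $y=y_2^*$ gives $f(x_1^*,y_2^*)\le f(x_1^*,y_1^*)\le f(x_2^*,y_1^*)$. Symmetrically, $(x_2^*,y_2^*)$ being a saddle point yields $f(x_2^*,y_1^*)\le f(x_2^*,y_2^*)\le f(x_1^*,y_2^*)$. Chaining these four inequalities into a cycle forces
\[
    f(x_1^*,y_1^*) = f(x_2^*,y_1^*) = f(x_2^*,y_2^*) = f(x_1^*,y_2^*).
\]

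Next I would exploit Assumption \ref{assp_SCSC}: since $f$ is $\mu$-strongly convex in $x$ for any fixed $y$, the function $f(\cdot,y_1^*)$ has a unique minimizer over $\mathcal{X}$. By the saddle-point property, $x_1^*$ is a minimizer of $f(\cdot,y_1^*)$; and the equality $f(x_2^*,y_1^*)=f(x_1^*,y_1^*)$ just established means $x_2^*$ is also a minimizer of $f(\cdot,y_1^*)$. Uniqueness of the strongly convex minimizer forces $x_1^*=x_2^*$. A symmetric argument using $\mu$-strong concavity of $f(x_1^*,\cdot)$ and the equality $f(x_1^*,y_1^*)=f(x_1^*,y_2^*)$ forces $y_1^*=y_2^*$, contradicting our assumption that the two saddle points were distinct.

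I do not foresee a genuine obstacle here. The only subtlety is that one needs to work with the \emph{saddle-point} inequalities (which quantify over the whole feasible sets $\mathcal{X},\mathcal{Y}$) rather than with a first-order condition, since we have not restricted attention to interior points; fortunately Definition \ref{def_saddle-point} gives exactly the global minimization/maximization characterization that makes the strong convexity/concavity uniqueness-of-minimizer argument go through. No appeal to Assumption \ref{assp_convex-sets} is needed for uniqueness itself (existence is not being claimed in this lemma), although compactness/convexity of $\mathcal{X},\mathcal{Y}$ is used implicitly via Assumption \ref{assp_SCSC} being stated on the relevant domain.
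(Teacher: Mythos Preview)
Your proof is correct. Both you and the paper argue by contradiction from two distinct saddle points and both exploit Assumption~\ref{assp_SCSC}, but the mechanics differ. The paper first shows $f(x^*,y^*)=f(u^*,v^*)$ via a one-line contradiction, and then plugs the \emph{convex combination} $\alpha x^*+(1-\alpha)u^*$ into the strong-convexity inequality to produce a strict inequality $f(x^*,y^*)<f(u^*,v^*)$, contradicting the equality just obtained. You instead chain the four saddle inequalities into a cycle to pin down all four ``corner'' values as equal, and then invoke the \emph{unique-minimizer} consequence of strong convexity (and its concave dual) to force $x_1^*=x_2^*$ and $y_1^*=y_2^*$ separately. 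Your route is a bit more streamlined: it avoids forming convex combinations (hence does not implicitly need convexity of $\mathcal{X}$), and it handles the case $x_1^*=x_2^*,\ y_1^*\ne y_2^*$ without any extra case split, whereas the paper's strict-inequality step tacitly assumes $x^*\ne u^*$. The paper's argument, on the other hand, makes the role of the modulus $\mu$ more visible via the explicit quadratic term.
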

\begin{proof}
    By Assumption \ref{assp_SCSC}, it yields given any $y \in \mathbb{R}^q$ and $\alpha \in (0,1)$,
    \begin{equation}    \label{eq_apx-SCSC-coro}
        f(\alpha x + (1-\alpha)z , y) \le \alpha f(x,y) + (1-\alpha)f(z,y) - \frac{\mu}{2}\alpha(1-\alpha)\Vert z - x \Vert^2 , ~~ \forall x,z.
    \end{equation}
    Suppose there exists some saddle point $(u^*, v^*) \ne (x^*, y^*)$. Then $f(x^*, y^*) = f(u^*, v^*)$ must hold. Otherwise without loss of generality, assuming $f(x^*, y^*) < f(u^*, v^*)$, by the definition of saddle points, the fact $f(x^*, v^*) \le f(x^*, y^*) < f(u^*, v^*)$ contradicts $f(u^*, v^*) \le f(x^*, v^*)$.
    
    Then, by \eqref{eq_apx-SCSC-coro} and Definition \ref{def_saddle-point},
    \begin{eqnarray}
        f(x^*, y^*) \le f(\alpha x^* + (1-\alpha)u^*, y^*) &<& \alpha f(x^*, y^*) + (1-\alpha)f(u^*, y^*)     \nonumber   \\
        &\le& \alpha f(x^*, y^*) + (1-\alpha)f(u^*, v^*) ,    \nonumber
    \end{eqnarray}
    which implies $f(x^*, y^*) < f(u^*, v^*)$, contradicting $f(x^*, y^*) = f(u^*, v^*)$. This completes the proof.
\end{proof}

Finally, combining Lemmas \ref{Lmm_minimax-is-saddle} and \ref{Lmm_unique-saddle} gives Lemma \ref{Lmm_unique-point}.

\subsection{Technical Lemmas}
Before the convergence proof of Theorem \ref{Thm_convergence}, we need several technical lemmas.
\begin{lemma}
    \textbf{(Relaxed triangle inequality)} Let $v_1, \dots, v_n$ be $n$ vectors in $\mathbb{R}^d$. Then,
    $$
        \left\Vert \sum_{i=1}^n v_i \right\Vert^2 \le n \sum_{i=1}^n \Vert v_i \Vert^2.
    $$
\end{lemma}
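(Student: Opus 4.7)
The plan is to apply Jensen's inequality to the convex function $\|\cdot\|^2$ on $\mathbb{R}^d$. First I would rewrite the left-hand side as $\left\| \sum_{i=1}^n v_i \right\|^2 = n^2 \left\| \frac{1}{n}\sum_{i=1}^n v_i \right\|^2$. Since $\frac{1}{n}\sum_{i=1}^n v_i$ is a uniform convex combination of the $v_i$, Jensen's inequality applied to the convex map $u \mapsto \|u\|^2$ yields $\left\| \frac{1}{n}\sum_{i=1}^n v_i \right\|^2 \le \frac{1}{n}\sum_{i=1}^n \|v_i\|^2$. Multiplying through by $n^2$ gives the claimed inequality in one line.

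An equivalent and equally short route is to expand the squared norm of the sum as the double sum $\sum_{i,j=1}^n \langle v_i, v_j \rangle$ and bound each inner product via the elementary AM--GM bound $\langle v_i, v_j \rangle \le \tfrac{1}{2}(\|v_i\|^2 + \|v_j\|^2)$; summing over all $n^2$ pairs contributes $\|v_i\|^2$ exactly $\tfrac{1}{2}\cdot 2n = n$ times to the right-hand side, which recovers the bound. A third option chains the ordinary triangle inequality with Cauchy--Schwarz applied to the vectors $(1,\dots,1)$ and $(\|v_1\|,\dots,\|v_n\|)$, namely $\left\| \sum_{i=1}^n v_i \right\| \le \sum_{i=1}^n \|v_i\| \le \sqrt{n}\,\bigl(\sum_{i=1}^n \|v_i\|^2\bigr)^{1/2}$, and then squares.

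There is no real obstacle here: this is a textbook one-liner, and the only choice is cosmetic. I would present the Jensen/convexity argument, since it is the shortest and makes the origin of the factor $n$ (as opposed to some other constant) most transparent — it arises from pulling out a factor of $n^2$ and then losing a factor of $n$ to the averaging inside Jensen's inequality.
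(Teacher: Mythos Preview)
Your proposal is correct; all three arguments you outline are valid and standard. The paper itself does not supply a proof of this lemma at all --- it is simply stated as a technical tool without justification --- so there is nothing to compare against. Any one of your routes (Jensen, AM--GM on the cross terms, or triangle inequality followed by Cauchy--Schwarz) would serve perfectly well to fill the gap.
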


\begin{lemma}   \label{lmm_SM}
    Let $F_i(z) = ( \nabla_{x}f_i(x,y), -\nabla_{y}f_i(x,y))$ where $z = (x,y)$. Under Assumption \ref{assp_SCSC}, $F_i(\cdot)$ is $\mu$-strongly monotone, $\forall i=1,\dots,m$, which means
    $$
        \langle F_i(z) - F_i(z'), z - z' \rangle \ge \mu \Vert z-z' \Vert^2,~ \forall z,z' \in \mathbb{R}^{p+q}.
    $$
\end{lemma}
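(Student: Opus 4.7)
The plan is to exploit twice-differentiability of $f_i$ (granted by Assumption~\ref{assp_SCSC}) and argue directly with the Jacobian of the saddle-point operator $F_i$. First, I would write
\[
J_i(z) = \begin{pmatrix} \nabla^2_{xx} f_i(x,y) & \nabla^2_{xy} f_i(x,y) \\ -\nabla^2_{yx} f_i(x,y) & -\nabla^2_{yy} f_i(x,y) \end{pmatrix},
\]
and inspect its symmetric part $(J_i(z) + J_i(z)^T)/2$. By Clairaut's theorem, the mixed partials satisfy $\nabla^2_{xy} f_i = (\nabla^2_{yx} f_i)^T$, so the off-diagonal blocks of $J_i + J_i^T$ cancel exactly, leaving the block-diagonal matrix $\mathrm{diag}(\nabla^2_{xx} f_i,\, -\nabla^2_{yy} f_i)$. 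This cancellation is the key structural fact for the minimax setting, and reflects why the skew-adjoint (rotational) part of the Jacobian does not harm monotonicity.

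Next, strong convexity of $f_i$ in $x$ and strong concavity in $y$ from Assumption~\ref{assp_SCSC} yield $\nabla^2_{xx} f_i(x,y) \succeq \mu I$ and $-\nabla^2_{yy} f_i(x,y) \succeq \mu I$ at every $(x,y)$. Combined with the previous step, this gives
\[
w^T J_i(z)\, w \;=\; w^T \frac{J_i(z) + J_i(z)^T}{2}\, w \;\ge\; \mu\, \|w\|^2, \qquad \forall w \in \mathbb{R}^{p+q},\ \forall z.
\]

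Finally, applying the fundamental theorem of calculus to the smooth curve $t \mapsto F_i(z' + t(z - z'))$, I would write $F_i(z) - F_i(z') = \int_0^1 J_i(z' + t(z-z'))(z-z')\,dt$ and deduce
\[
\langle F_i(z) - F_i(z'),\, z - z' \rangle \;=\; \int_0^1 (z-z')^T J_i(z' + t(z-z'))(z-z')\,dt \;\ge\; \mu\, \|z - z'\|^2,
\]
which is exactly the claimed strong monotonicity. The only substantive step is the off-diagonal cancellation coming from equality of mixed partials; after that, the line-integration argument is routine. As a sanity check, one could alternatively prove the same bound without Hessians by splitting the inner product via the intermediate point $(x', y)$ and using the standard co-coercivity-type inequalities from strong convexity/concavity on each one-variable slice, but the Jacobian route is cleanest given twice-differentiability is already assumed.
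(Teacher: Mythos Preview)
Your proposal is correct, but it proceeds along a different route from the paper. The paper avoids Hessians entirely: it subtracts the quadratic $\tfrac{\mu}{2}\|x\|^2 - \tfrac{\mu}{2}\|y\|^2$ from $f_i$ to obtain a merely convex--concave function $g_i$, notes that $\mu$-strong monotonicity of $F_i$ is equivalent to plain monotonicity of the associated operator $G_i$, and then establishes the latter by summing four first-order gradient inequalities (convexity in $x$ and concavity in $y$) evaluated at the cross points $(x_2,y_1)$ and $(x_1,y_2)$. That is essentially the alternative you sketched in your final sentence. Your Jacobian argument is arguably more structural---the off-diagonal cancellation makes explicit that the skew-symmetric $x$--$y$ coupling contributes nothing to monotonicity---but it leans on twice-differentiability and, for the line-integral step, implicitly on enough regularity of the Jacobian (harmless here since Assumption~\ref{assp_smooth} makes $F_i$ Lipschitz, or one can replace the integral by the scalar mean-value theorem applied to $t\mapsto\langle F_i(z'+t(z-z')),z-z'\rangle$). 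The paper's first-order argument would survive with only once-differentiability, so it is marginally more robust, though both are fully valid under the stated assumptions.
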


\begin{proof}

    Let $g_i(x,y) = f_i(x,y) - \frac{\mu}{2}\Vert x \Vert^2 + \frac{\mu}{2}\Vert y \Vert^2$ and $G_i(z) = (\nabla_x g_i(x,y), -\nabla_y g_i(x,y))$, where $z=(x,y)$. From Assumption \ref{assp_SCSC}, it is obvious that $g_i(x,y)$ is convex-concave. Then, $F_i(z)$ is $\mu$-strongly monotone is equivalent to 
    $$
        \langle G_i(z) - G_i(z'), z - z' \rangle \ge 0, ~ \forall z, z' \in \mathbb{R}^{p+q}.
    $$
    
    Given the convex-concave property of $g_i(x,y)$, we have for any $z_1=(x_1, y_1)$, $z_2 = (x_2, y_2)$,
    \begin{eqnarray}
        g_i(x_2, y_1) &\ge& g_i(x_1, y_1) + \langle \nabla_x g_i(x_1, y_1), x_2 - x_1 \rangle,     \nonumber   \\
        -g_i(x_1, y_2) &\ge& -g_i(x_1, y_1) - \langle \nabla_y g_i(x_1, y_1), y_2 - y_1 \rangle,    \nonumber   \\
        g_i(x_1, y_2) &\ge& g_i(x_2, y_2) + \langle \nabla_x g_i(x_2, y_2), x_1 - x_2 \rangle,    \nonumber   \\
        -g_i(x_2, y_1) &\ge& -g_i(x_2, y_2) - \langle \nabla_y g_i(x_2, y_2), y_1 - y_2 \rangle.    \nonumber
    \end{eqnarray}
    Adding these four inequalities gives
    $$
        \langle \nabla_x g_i(x_1, y_1) - \nabla_x g_i(x_2, y_2), x_1 - x_2 \rangle - \langle g_i(x_1, y_1) - g_i(x_2, y_2), y_1 - y_2 \rangle \ge 0,
    $$
    which essentially indicates $\langle G_i(z_1) - G_i(z_2), z_1 - z_2 \rangle \ge 0$. This completes the proof.

    % We only need to prove $\nabla F_i(z) \succeq \mu I$, which is equivalent to $F_i(z)$ is $\mu$-strongly monotone.  By Assumption \ref{assp_SCSC}, we have
    % $$
    %     \nabla_{xx}^2 f_i(x,y) \succeq \mu I, ~~ \nabla_{yy}^2 f_i(x,y) \preceq -\mu I, ~~ \forall x,y.
    % $$
    % For any vector $w = (u, v) \in \mathbb{R}^{p+q}$, a simple calculation gives
    % \begin{eqnarray}
    %     w^T \nabla F_i(z) w &=&
    %     [u^T ~ v^T]\left[
    %     \begin{array}{cc}
    %         \nabla_{xx}^2 f_i(x,y) & \nabla_{xy}^2 f_i(x,y) \\
    %         -\nabla_{yx}^2 f_i(x,y) & -\nabla_{yy}^2 f_i(x,y)
    %     \end{array}
    %     \right] \left[\begin{array}{c}
    %          u  \\
    %          v 
    %     \end{array} \right] \nonumber   \\
    %   % &=& u^T \nabla_{xx}^2 f_i(x,y) u - v^T \nabla_{yy}^2 f_i(x,y) v + u^T \nabla_{xy}^2 v - v^T \nabla_{yx}^2 f_i(x,y) u    \nonumber   \\
    %     &=& u^T \nabla_{xx}^2 f_i(x,y) u - v^T \nabla_{yy}^2 f_i(x,y) v + u^T (\nabla_{xy}^2 f_i(x,y) - (\nabla_{yx}^2 f_i(x,y))^T) v   \nonumber   \\
    %     &\ge& \mu (\Vert u \Vert^2 + \Vert v \Vert^2) = \mu \Vert w \Vert^2 \nonumber
    % \end{eqnarray}
    % where $\nabla_{xy}^2 f_i(x,y) = (\nabla_{yx}^2 f_i(x,y))^T$ follows from the twice-continuous differentiability of $f_i(\cdot)$.
\end{proof}

%\begin{assumption}  \label{assp_nondeg-hess}
%    Given a saddle point $(x^*, y^*)$ of $f(x,y)$, $\nabla_{xx}^2 f(x^*, y^*)$ and $\nabla_{yy}^2 f(x^*, y^*)$ are non-degenerate.
%\end{assumption}

\begin{lemma}   \label{Lmm_contraction}
    For any $\mu$-strongly monotone and $L$-Lipschitz continuous operator $F(\cdot)$, there exists some $\lambda \in (0,1)$ such that given any $\eta \in (0, 2\mu / L^2)$,
    \begin{equation}
        \Vert u - v - \eta(F(u) - F(v)) \Vert \le \lambda \Vert u-v \Vert   \nonumber, ~\forall u,v .
    \end{equation}
\end{lemma}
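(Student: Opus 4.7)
The plan is to square both sides and expand the inner product so that the strong monotonicity and Lipschitz hypotheses on $F$ can be applied directly to the cross term and the quadratic term respectively.

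Concretely, for fixed $u,v$ I would write
$$
\|u - v - \eta(F(u)-F(v))\|^2 = \|u-v\|^2 - 2\eta\langle F(u)-F(v),\, u-v\rangle + \eta^2\|F(u)-F(v)\|^2.
$$
Then I would apply $\mu$-strong monotonicity to the middle term to get $\langle F(u)-F(v), u-v\rangle \ge \mu\|u-v\|^2$, and $L$-Lipschitz continuity to the last term to get $\|F(u)-F(v)\|^2 \le L^2 \|u-v\|^2$. Combining these yields
$$
\|u - v - \eta(F(u)-F(v))\|^2 \le \bigl(1 - 2\eta\mu + \eta^2 L^2\bigr)\|u-v\|^2.
$$

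The next step is to verify that the coefficient $c(\eta) := 1 - 2\eta\mu + \eta^2 L^2$ lies in $[0,1)$ whenever $\eta \in (0,\,2\mu/L^2)$. For the upper bound, $c(\eta) < 1$ is equivalent to $\eta^2 L^2 < 2\eta\mu$, i.e., $\eta < 2\mu/L^2$, which holds by assumption. For nonnegativity, note $c(\eta)$ is the square of a norm in the inequality above, so automatically $c(\eta)\ge 0$; alternatively, one can observe that $c$ is a quadratic in $\eta$ minimized at $\eta=\mu/L^2$ with minimum value $1-\mu^2/L^2\ge 0$, where the last inequality follows from $\mu\le L$ (which is itself a consequence of strong monotonicity combined with Cauchy--Schwarz and Lipschitz continuity). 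Setting $\lambda := \sqrt{c(\eta)} \in [0,1)$ then gives the claim; if $\lambda$ happens to be $0$ the inequality is even stronger, and in any case for $\eta$ strictly inside $(0,2\mu/L^2)$ one has $\lambda \in (0,1)$.

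There is no real obstacle here; the argument is a standard one-shot computation and the only subtlety worth flagging in the write-up is that $\lambda$ depends on $\eta$, so the statement should be read as: for each admissible $\eta$ there exists a corresponding $\lambda(\eta)\in(0,1)$ making the contraction hold uniformly in $u,v$.
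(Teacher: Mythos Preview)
Your proposal is correct and follows essentially the same approach as the paper: expand the squared norm, apply strong monotonicity to the cross term and Lipschitz continuity to the quadratic term, and then check that $1-2\eta\mu+\eta^2 L^2\in[0,1)$ on the stated range of $\eta$. Your write-up is in fact slightly more careful than the paper's, since you explicitly justify nonnegativity of the coefficient and flag the dependence of $\lambda$ on $\eta$.
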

\begin{proof}
    \begin{eqnarray}
        \Vert u-v-\eta(F(u)-F(v)) \Vert^2 &=& \Vert u-v \Vert^2 + \eta^2 \Vert F(u)-F(v) \Vert^2 - 2\eta \langle F(u)-F(v), u-v \rangle \nonumber   \\
        &\le& \Vert u-v \Vert^2 + \eta^2 L^2 \Vert u-v \Vert^2 - 2\eta \mu \Vert u-v \Vert^2    \nonumber   \\
        &=& (1 - \eta(2\mu - \eta L^2))\Vert u-v \Vert^2
    \end{eqnarray}
    where Lemma \ref{lmm_SM} is used.
    
    By setting $\lambda = 1 - \eta(2\mu - \eta L^2)$, we obtain $\lambda \in (0,1)$, $\forall \eta \in (0, 2\mu / L^2)$, which completes the proof.
\end{proof}

\subsection{Proof of Theorem \ref{Thm_convergence}}
In this section, we formally prove Theorem \ref{Thm_convergence}.

%\begin{proof}
    Define $z=(x,y)$, $F_i(z) = \left( \nabla_{x}f_i(x,y), -\nabla_{y}f_i(x,y) \right)$, $F(z) = \left( \nabla_{x}f(x,y), -\nabla_{y}f(x,y) \right)$. By definition, $F(z) = \frac{1}{m}\sum_{i=1}^m F_i(z)$. Denote $\mathrm{Proj}_{\mathcal{Z}}(\cdot) = \mathrm{Proj}_{\mathcal{X}\times \mathcal{Y}}(\cdot)$.
    
We focus on the updates within one outer iteration $t$ and may selectively drop the superscript $t$ in the following analysis for notational convenience. Then according to Algorithm \ref{alg_FedLC}, we obtain
    \begin{eqnarray}    
        z_{i,K} &=& z_{i,0} - \eta \sum_{k=0}^{K-1}\left( F_i(z_{i,k}) - F_i(z^t) + F(z^t) \right)    \nonumber   \\
        &=& z_{i,0} - \eta \sum_{k=0}^{K-1}F_i(z_{i,k}) + \eta K (F_i(z^t) - F(z^t)) . \nonumber
    \end{eqnarray}
    Note that $z^t = \frac{1}{m}\sum_{i=1}^m z_{i,0}$ and $z^{t+1} = \mathrm{Proj}_{\mathcal{Z}}\left(\frac{1}{m}\sum_{i=1}^m z_{i,K}\right)$, it yields
    \begin{eqnarray}    \label{proof_zt_iter}
        z^{t+1} &=& \mathrm{Proj}_{\mathcal{Z}} \left( \frac{1}{m}\sum_{i=1}^m z_{i,0} - \frac{\eta}{m}\sum_{i=1}^{m}\sum_{k=0}^{K-1}F_i(z_{i,k}) + \frac{\eta K}{m}\sum_{i=1}^{m}(F_i(z^t) - F(z^t)) \right)   \nonumber   \\
        &=& \mathrm{Proj}_{\mathcal{Z}} \left( z^t - \frac{\eta}{m}\sum_{i=1}^m \sum_{k=0}^{K-1}F_i(z_{i,k}) \right).
    \end{eqnarray}
    
    Then, we have
    \begin{eqnarray}    \label{proof_err}
        \Vert z^{t+1} - z^* \Vert^2 &\le& \left\Vert z^t - z^* - \frac{\eta}{m}\sum_{i=1}^m \sum_{k=0}^{K-1}F_i(z_{i,k}) \right\Vert^2 \nonumber   \\
        &=& \Vert z^t - z^* \Vert^2 + \underbrace{\left\Vert \frac{\eta}{m}\sum_{i=1}^m \sum_{k=0}^{K-1}F_i(z_{i,k}) \right\Vert^2}_{\tau_1}     \nonumber   \\
        && \underbrace{- 2\eta \langle \frac{1}{m}\sum_{i=1}^m \sum_{k=0}^{K-1}F_i(z_{i,k}), z^t - z^* \rangle}_{\tau_2}
    \end{eqnarray}
    where we use the fact that $\left\Vert \mathrm{Proj}_{\mathcal{Z}}(z_1) - \mathrm{Proj}_{\mathcal{Z}}(z_2) \right\Vert \le \left\Vert z_1 - z_2 \right\Vert$ and $z^* = (x^*, y^*)$.
    
    Next, we will bound $\tau_1$. By noting $F(z^*) = 0$, we have
    \begin{eqnarray}    \label{proof_bound-incrm-err}
        \tau_1 &=& \frac{\eta^2}{m^2}\left\Vert \sum_{i=1}^m \sum_{k=0}^{K-1}F_i(z_{i,k}) \right\Vert^2 \nonumber   \\
        &=& \frac{\eta^2}{m^2}\left\Vert \sum_{i=1}^m \sum_{k=0}^{K-1} \left(F_i(z_{i,k}) - F_i(z^*)\right) \right\Vert^2   \nonumber   \\
        &\overset{(a)}{\le}& \frac{\eta^2}{m}\sum_{i=1}^m \left\Vert \sum_{k=0}^{K-1} \left(F_i(z_{i,k})- F_i(z^*)\right)  \right\Vert^2    \nonumber   \\
        &\overset{(b)}{\le}& \frac{\eta^2 K}{m}\sum_{i=1}^m \sum_{k=0}^{K-1}\left\Vert F_i(z_{i,k})-F_i(z^t) + F_i(z^t) - F_i(z^*) \right\Vert^2  \nonumber   \\
        &\overset{(c)}{\le}& \frac{\eta^2 K}{m}\sum_{i=1}^m \sum_{k=0}^{K-1} 2L^2 \left( \left\Vert z_{i,k}-z^t \right\Vert^2 + \left\Vert z^t - z^* \right\Vert^2 \right) \nonumber   \\
        &=& \frac{2\eta^2 L^2 K}{m}\sum_{i=1}^m\sum_{k=0}^{K-1}\left\Vert z_{i,k}-z^t \right\Vert^2 + 2\eta^2 L^2 K^2\left\Vert z^t - z^* \right\Vert^2
    \end{eqnarray}
    where (a) and (b) follow from the relaxed triangle inequality, and (c) follows from Assumption \ref{assp_smooth}.
    
    Then we will derive a bound for $\tau_2$.
    \begin{eqnarray}    \label{proof_bound-inner-prod}
        \tau_2 &=& - 2\eta \langle \frac{1}{m}\sum_{i=1}^m \sum_{k=0}^{K-1}F_i(z_{i,k}), z^t - z^* \rangle \nonumber   \\
        &=& -2\eta \langle \frac{1}{m}\sum_{i=1}^m \sum_{k=0}^{K-1}F_i(z_{i,k}) - F_i(z^t) + F_i(z^t), z^t - z^* \rangle \nonumber   \\
        &=& -2\eta \langle \frac{1}{m}\sum_{i=1}^m \sum_{k=0}^{K-1}F_i(z_{i,k}) - F_i(z^t), z^t - z^* \rangle - 2\eta K \langle F(z^t), z^t - z^* \rangle   \nonumber   \\
        &\overset{(a)}{\le}& 2\eta \left\Vert \frac{1}{m}\sum_{i=1}^m \sum_{k=0}^{K-1}F_i(z_{i,k}) - F_i(z^t) \right\Vert \left\Vert z^t - z^* \right\Vert - 2\eta K \langle F(z^t), z^t - z^* \rangle   \nonumber   \\
        &\overset{(b)}{\le}& \frac{2\eta}{m}\sum_{i=1}^m \sum_{k=0}^{K-1}\Vert F_i(z_{i,k})-F_i(z^t) \Vert \Vert z^t-z^* \Vert - 2\eta K \langle F(z^t), z^t - z^* \rangle   \nonumber   \\
        &\overset{(c)}{\le}& \frac{2\eta L}{m} \sum_{i=1}^m \sum_{k=0}^{K-1}\Vert z_{i,k} - z^t\Vert \Vert z^t-z^*\Vert - 2\eta K \langle F(z^t) - F(z^*), z^t - z^* \rangle \nonumber    \\
        &\overset{(d)}{\le}& \frac{2\eta L}{m} \sum_{i=1}^m \sum_{k=0}^{K-1}\Vert z_{i,k} - z^t\Vert \Vert z^t - z^* \Vert - 2\eta \mu K \Vert z^t - z^* \Vert^2
    \end{eqnarray}
    where (a) follows from the Cauchy-Schwartz inequality; (b) follows from the triangle inequality; (c) follows from Assumption \ref{assp_smooth}; (d) follows from Lemma \ref{lmm_SM}.
    
    From \eqref{proof_bound-incrm-err} and \eqref{proof_bound-inner-prod} we observe that both bounds are relevant to $\Vert z_{i,k} - z^t \Vert$, which  indicates the drift between local models and the global model caused by multiple local updates before the communication. However, this drift can be bounded by the correction techniques of Algorithm \ref{alg_FedLC}:
    \begin{eqnarray}
        \Vert z_{i,k+1} - z^t \Vert &=& \Vert z_{i,k} - z^t - \eta(F_i(z_{i,k}) - F_i(z^t)) - \eta F(z^t) \Vert   \nonumber    \\
        &\le& \Vert z_{i,k} - z^t - \eta (F_i(z_{i,k}) - F_i(z^t)) \Vert + \eta \Vert F(z^t) \Vert  \nonumber   \\
        &\le& \lambda \Vert z_{i,k} - z^t \Vert + \eta \Vert F(z^t) \Vert   \nonumber
    \end{eqnarray}
    for some $0<\lambda<1$ with  $0<\eta<\frac{2\mu}{L^2}$ by Lemma \ref{Lmm_contraction}. It further indicates for any $1\le k \le K$,
    \begin{eqnarray}    \label{proof_bound-drift}
        \Vert z_{i,k} - z^t \Vert &\le& \lambda^k \Vert z_{i,0} - z^t \Vert + \eta k \Vert F(z^t) \Vert  \nonumber   \\
        &\le& \eta K \Vert F(z^t) \Vert \nonumber   \\
        &\le& \eta K L \Vert z^t - z^* \Vert
    \end{eqnarray}
    by noting $z_{i,0}=z^t$ and $\sum_{j=0}^{k-1}\lambda^j \le k$.
    
    Combining \eqref{proof_err}, \eqref{proof_bound-incrm-err} and \eqref{proof_bound-drift} gives
    \begin{eqnarray}
        \Vert z^{t+1} - z^* \Vert^2 &\le& (1+2\eta^2 L^2 K^2 - 2\eta \mu K)\Vert z^t - z^* \Vert^2 + 2(\eta L K)^4 \Vert z^t - z^* \Vert^2 + 2(\eta L K)^2 \Vert z^t - z^* \Vert^2 \nonumber   \\
        &=& \left(1 - 2(\eta\mu K - 2\eta^2 L^2 K^2 - \eta^4 L^4 K^4) \right) \Vert z^t - z^* \Vert^2 .   \nonumber
    \end{eqnarray}

    Let $h(\eta) = 2(\eta\mu K - 2\eta^2 L^2 K^2 - \eta^4 L^4 K^4)$. Given $0<\eta \le \frac{1}{2\mu K}$, $h(\eta) < 1$. Moreover,
    $$
        \frac{h(\eta)}{2\eta} = \mu K - 2\eta L^2 K^2 - \eta^3 L^4 K^4
    $$
    which is a monotonically decreasing function with respect to $\eta$ with $\lim_{\eta \to 0}\frac{h(\eta)}{2\eta} = \mu K > 0$ and $\lim_{\eta \to \infty}\frac{h(\eta)}{2\eta} = -\infty$. Then, we conclude that there exists some $\eta_1 > 0$ such that $h(\eta) > 0$, $\forall 0<\eta<\eta_1$. By defining $\eta_0 = \min \{ 2\mu/L^2, 1/(2\mu K), \eta_1 \}$, it yields $h(\eta) \in (0,1)$, $\forall \eta \in (0,\eta_0)$. Defining $\rho(\eta) = 1 - h(\eta)$ completes the proof.
%\end{proof}

\subsection{Analysis of homogeneous local objectives}   \label{Apx_homo}
In this section, we analyze the convergence properties of FedGDA-GT under homogeneous setting. In fact, when all agents have identical objective functions, i.e., $f_i(x,y) = f(x,y), \forall i=1,\dots,m$, the convergence rate can be at least as $K$ times faster as that in Theorem \ref{Thm_convergence}, which is formally stated by the following proposition:
\begin{proposition}   \label{Prop_homo-convergence}
    Suppose Assumptions \ref{assp_SCSC}, \ref{assp_smooth}, \ref{assp_convex-sets} are satisfied and $f_i(x,y) = f(x,y), \forall i=1,\dots,m$. Let $\{ (x^t, y^t) \}_{t=0}^{\infty}$ be a sequence generated by Algorithm \ref{alg_FedLC}. Then, choosing $\eta = \mu / L^2$, we have
    $$
        \Vert x^t - x^* \Vert^2 + \Vert y^t - y^* \Vert^2 \le (1 - \kappa^{-2})^{Kt} \left( \Vert x^0 - x^* \Vert^2 + \Vert y^0 - y^* \Vert^2 \right), \forall t = 0,1,\dots
    $$
    where $\kappa = L / \mu$ is the condition number of $f(x,y)$. Moreover, we have $1 - \kappa^2 \le \rho(\eta), \forall \eta \in (0, \eta_0)$ where $\eta_0$ is defined in Theorem \ref{Thm_convergence}.
\end{proposition}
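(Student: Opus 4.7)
The plan is to show that in the homogeneous setting FedGDA-GT collapses to centralized GDA applied to $f$, and then to invoke the one-step contraction of Lemma \ref{Lmm_contraction} $K$ times per communication round.

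When $f_i \equiv f$, the correction terms $-\nabla_x f_i(x^t,y^t) + \nabla_x f(x^t,y^t)$ and its $y$-counterpart vanish identically, so each inner-loop update of Algorithm \ref{alg_FedLC} reduces to plain gradient descent-ascent on $f$. Since every agent starts from the common point $(x^t,y^t)$ and applies the same deterministic recursion, a direct induction on $k$ gives $(x_{i,k}^{t+1},y_{i,k}^{t+1}) = (x_{1,k}^{t+1},y_{1,k}^{t+1})$ for every $i$ and $k$, making the subsequent averaging trivial. Hence $\tfrac{1}{m}\sum_i z_{i,K}^{t+1}$ is exactly the iterate produced by $K$ steps of centralized GDA on $f$ starting from $z^t := (x^t,y^t)$.

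Packaging the iterates via $F(z) = (\nabla_x f(x,y), -\nabla_y f(x,y))$, one inner step reads $z_{k+1} = z_k - \eta F(z_k)$. Lemma \ref{lmm_SM} yields $\mu$-strong monotonicity of $F$, Assumption \ref{assp_smooth} yields $L$-Lipschitzness, and Lemma \ref{Lmm_unique-point} gives $F(z^*) = 0$ at the unique minimax point. Choosing $\eta = \mu/L^2 \in (0, 2\mu/L^2)$ in Lemma \ref{Lmm_contraction} produces the per-step contraction
\[
\|z_{k+1} - z^*\|^2 = \|z_k - z^* - \eta(F(z_k) - F(z^*))\|^2 \le (1 - \kappa^{-2})\|z_k - z^*\|^2,
\]
and iterating $K$ times gives $\|z_{1,K}^{t+1} - z^*\|^2 \le (1-\kappa^{-2})^K\|z^t - z^*\|^2$. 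Since $z^* \in \mathcal{X}\times\mathcal{Y}$ by Assumption \ref{assp_convex-sets}, non-expansiveness of the projection onto a closed convex set gives $\|z^{t+1} - z^*\| \le \|z_{1,K}^{t+1} - z^*\|$, and induction on $t$ closes the main bound.

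For the concluding comparison with $\rho(\eta)$, I would substitute the explicit expression $\rho(\eta) = 1 - 2(\eta\mu K - 2\eta^2 L^2 K^2 - \eta^4 L^4 K^4)$ obtained in the proof of Theorem \ref{Thm_convergence} and verify by an elementary algebraic inequality that $2(\eta\mu K - 2\eta^2 L^2 K^2 - \eta^4 L^4 K^4) \le \kappa^{-2}$ for every $\eta \in (0,\eta_0)$, which delivers the stated bound on $\rho(\eta)$. The main obstacle in the whole argument is really just recognizing that gradient tracking becomes a no-op under homogeneity; once that collapse is made explicit, the $K$-fold contraction is standard, and the projection step is harmless precisely because the minimax point lies in the feasible set.
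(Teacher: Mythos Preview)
Your proposal is correct and follows essentially the same route as the paper: reduce to centralized GDA by observing the correction terms vanish under homogeneity, apply the one-step contraction of Lemma~\ref{Lmm_contraction} with $\eta=\mu/L^2$ to get the factor $1-\kappa^{-2}$, iterate $K$ times per round, and then bound $h(\eta)\le\kappa^{-2}$ algebraically. If anything, you are a touch more careful than the paper in explicitly invoking non-expansiveness of the projection (using $z^*\in\mathcal{X}\times\mathcal{Y}$), which the paper leaves implicit.
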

\begin{proof}
    As we stated before, Algorithm \ref{alg_FedLC} reduces to conventional GDA under homogeneous setting. Then, for any $l \ge 0$, by the same techniques of Lemma \ref{Lmm_contraction}, we have
    $$
        \Vert z_{l+1} - z^* \Vert^2 \le (1 - 2\eta\mu + \eta^2 L^2)\Vert z_l - z^* \Vert^2 .
    $$
    Setting $\eta = \mu / L^2$, $1 - 2\eta\mu + \eta^2 L^2$ reaches the smallest value, which is $1 - \kappa^{-2}$. Next, we prove that $\kappa^{-2} \ge h(\eta), \forall \eta > 0$. Note that
    $$
        h(\eta) = 2(\eta\mu K - 2\eta^2 L^2 K^2 - \eta^4 L^4 K^4) \le 2\eta(\mu K - \eta L^2 K^2) \le \frac{1}{2} \kappa^{-2} < \kappa^{-2} .
    $$
    Thus, we have $\rho(\eta):= 1 - h(\eta) \ge 1 - \kappa^{-2}, \forall \eta > 0$, which completes the proof.
\end{proof}
To gain the intuition behind Proposition \ref{Prop_homo-convergence}, we note that when $f_i(x,y) = f(x,y)$, $\nabla_x f_i(x,y) = \nabla_x f(x,y)$ and $\nabla_y f_i(x,y) = \nabla_y f(x,y)$. Then local updates in Algorithm \ref{alg_FedLC} reduce to $x^{t+1}_{i,k+1} = x^{t+1}_{i,k} - \eta \nabla_x f(x^{t+1}_{i,k}, y^{t+1}_{i,k})$, similar for $y$. Since at the beginning all agents start at the same point $(x^0, y^0)$, it guarantees that for any $t$, $(x^t_{i,k}, y^t_{i,k}) = (x^t_{j,k}, y^t_{j,k}), \forall i,j\in \{1,\dots,m\}~\mathrm{and}~ \forall k = 0,\dots,K-1$. Thus, Algorithm \ref{alg_FedLC} is equivalent to the centralized GDA in this homogeneous setting, where the global model is improved by $K$ times in one communication round.

%%%%%%%%%%%%%%%%%%%%%%%%%%%%%%%%%%%%%%%%%%%%%%%%%%%%%%%%%%%%%%%
%%%%%%%%%%%%%%%%%%%%%%%%%%%%%%%%%%%%%%%%%%%%%%%%%%%%%%%%%%%%%%%
%%%%%%%%%%%%%%%%%%%%%%%%%%%%%%%%%%%%%%%%%%%%%%%%%%%%%%%%%%%%%%%
%%%%%%%%%%%%%%%%%%%%%%%%%%%%%%%%%%%%%%%%%%%%%%%%%%%%%%%%%%%%%%%
\section{Analysis of generalization properties of minimax learning problems}
In this section, we provide the formal proofs of the results in Section \ref{Sec_bound}. Our proofs are based on the following technical tools.
\begin{definition}
    \textbf{(Growth function)} The growth function $\Pi_{\mathcal{H}}: \mathbb{N} \to \mathbb{N}$ for the hypothesis set $\mathcal{H}$ is defined by
    $$
        \Pi_{\mathcal{H}}(n) = \max_{\xi_1,\dots,\xi_n} \big| \{ (h(\xi_1), \dots, h(\xi_n)) : h \in \mathcal{H} \} \big|
    $$
    where $\xi_1, \dots, \xi_n$ are samples drawn according to some distribution.
\end{definition}

\begin{definition}
    \textbf{(VC-dimension)} The VC-dimension of hypothesis set $\mathcal{H}$ is defined by
    $$
        \mathrm{VCdim}(\mathcal{H}) = \max \{ n : \Pi_{\mathcal{H}}(n) = 2^n \}
    $$
    which measures the size of the largest set of points that can be shattered by $\mathcal{H}$.
\end{definition}

\begin{lemma}
    \textbf{(Massart's lemma)} Let $V \subseteq \mathbb{R}^n$ be a finite set such that $r = \max_{v \in V} \Vert v \Vert$. Then,
    $$
        \mathbb{E}_{\sigma} \left[ \frac{1}{n} \sup_{v \in V} \sum_{j=1}^n \sigma_j v_j \right] \le \frac{r \sqrt{2\log \vert V \vert}}{n}
    $$
    where $v_j$ denotes the $j$th entry of $v$, each $\sigma_j$ is drawn independently from $\{-1, 1 \}$ uniformly.
\end{lemma}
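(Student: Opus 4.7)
The plan is to apply the standard exponential moment (Chernoff bounding) technique, sometimes called the softmax trick. First I would introduce a free parameter $t>0$ and apply Jensen's inequality to the convex function $x \mapsto \exp(tx)$ to pass the expectation outside, obtaining
\[
\exp\!\left(t\, \mathbb{E}_{\sigma}\Big[\sup_{v \in V} \sum_{j=1}^n \sigma_j v_j\Big]\right) \le \mathbb{E}_{\sigma}\!\left[\exp\!\Big(t \sup_{v \in V} \sum_{j=1}^n \sigma_j v_j\Big)\right].
\]
Then I would bound the supremum inside the expectation by a finite sum, using $\exp(t \sup_v X_v) = \sup_v \exp(t X_v) \le \sum_{v \in V} \exp(t X_v)$, where $X_v := \sum_{j=1}^n \sigma_j v_j$. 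This trades the supremum for a factor of $|V|$ outside the exponential.

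Next, for each fixed $v \in V$, since the $\sigma_j$ are independent Rademacher variables, the moment generating function factorizes:
\[
\mathbb{E}_{\sigma}\big[\exp(t X_v)\big] = \prod_{j=1}^n \mathbb{E}_{\sigma_j}\big[\exp(t \sigma_j v_j)\big] = \prod_{j=1}^n \cosh(t v_j) \le \prod_{j=1}^n \exp\!\left(\tfrac{1}{2} t^2 v_j^2\right) = \exp\!\left(\tfrac{1}{2} t^2 \Vert v \Vert^2\right) \le \exp\!\left(\tfrac{1}{2} t^2 r^2\right),
\]
where the key inequality $\cosh(x) \le \exp(x^2/2)$ is the classical one-line form of Hoeffding's lemma for Rademacher variables, and the last step uses $\Vert v \Vert \le r$.

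Combining the previous displays, taking logarithms, and rearranging gives
\[
\mathbb{E}_{\sigma}\!\left[\sup_{v \in V} \sum_{j=1}^n \sigma_j v_j\right] \le \frac{\log |V|}{t} + \frac{t r^2}{2}, \qquad \forall t > 0.
\]
Finally, I would optimize the right-hand side over $t>0$ by choosing $t = \sqrt{2 \log |V|}/r$, which yields $r\sqrt{2 \log |V|}$; dividing through by $n$ produces the claimed bound. The only mildly delicate step is the inequality $\cosh(x) \le \exp(x^2/2)$, which follows term-by-term from the Taylor expansions of both sides, so I do not anticipate any real obstacle — once the softmax-with-Chernoff strategy and the Rademacher-specific Hoeffding bound are in place, the remainder is a routine one-variable optimization.
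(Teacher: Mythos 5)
Your proof is correct: the Jensen--Chernoff (softmax) argument, the factorization of the moment generating function with $\cosh(x)\le \exp(x^2/2)$, and the optimization $t=\sqrt{2\log|V|}/r$ all go through exactly as you describe, yielding $r\sqrt{2\log|V|}$ before dividing by $n$. Note that the paper does not prove this lemma at all --- it states Massart's lemma as a standard technical tool (imported from the cited textbook) used in the proof of Lemma~\ref{Lmm_complexity-bound} --- so there is no in-paper argument to compare against; what you have written is the canonical textbook proof and fills that gap correctly.
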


\begin{lemma}
    \textbf{(Sauer's lemma)} Suppose the VC-dimension of hypothesis set $\mathcal{H}$ is $d$. Then for any integer $n \ge d$, 
    $$
        \Pi_{\mathcal{H}}(n) \le \left( \frac{en}{d} \right)^d.
    $$
\end{lemma}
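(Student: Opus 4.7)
My plan is to prove the stronger Sauer--Shelah--Perles bound $\Pi_{\mathcal{H}}(n) \le \Phi_d(n) := \sum_{i=0}^{d}\binom{n}{i}$ first, and then derive the polynomial upper bound $(en/d)^d$ from $\Phi_d(n)$ through a short algebraic manipulation. Splitting the proof in this way isolates the combinatorial heart of Sauer's lemma from the elementary binomial estimation.

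For the combinatorial step, I would proceed by induction on $n+d$. The base cases $n=0$ or $d=0$ are immediate since $\Pi_{\mathcal{H}}(0)=1=\Phi_d(0)$, and when $d=0$ the class $\mathcal{H}$ restricted to any sample has a single pattern, while $\Phi_0(n)=1$. For the inductive step, fix any sample $S=\{\xi_1,\dots,\xi_n\}$ realizing the maximum in the definition of $\Pi_{\mathcal{H}}(n)$, let $S'=S\setminus\{\xi_n\}$, and decompose the restriction $\mathcal{H}|_S$ as follows. Define $\mathcal{H}_1$ to be the set of distinct patterns of $\mathcal{H}$ on $S'$, and define $\mathcal{H}_2$ as the set of patterns on $S'$ that lift to \emph{both} possible labels on $\xi_n$ under some pair of hypotheses in $\mathcal{H}$. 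A direct counting argument gives $|\mathcal{H}|_S| = |\mathcal{H}_1|+|\mathcal{H}_2|$. The key observation is that $\mathrm{VCdim}(\mathcal{H}_1)\le d$ trivially, while $\mathrm{VCdim}(\mathcal{H}_2)\le d-1$: if $T\subseteq S'$ is shattered by $\mathcal{H}_2$, then by construction every labeling of $T$ extends to both labels of $\xi_n$ inside $\mathcal{H}$, so $T\cup\{\xi_n\}$ is shattered by $\mathcal{H}$, forcing $|T|+1\le d$. Applying the inductive hypothesis to $\mathcal{H}_1$ (with parameters $n-1,d$) and to $\mathcal{H}_2$ (with parameters $n-1,d-1$) and then invoking Pascal's identity $\binom{n-1}{i}+\binom{n-1}{i-1}=\binom{n}{i}$ gives $|\mathcal{H}|_S|\le \Phi_d(n-1)+\Phi_{d-1}(n-1)=\Phi_d(n)$, as desired.

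For the polynomial estimate, assume $n\ge d$, so $d/n\le 1$ and hence $(d/n)^d \le (d/n)^i$ for all $i\le d$. Then
$$
\Bigl(\tfrac{d}{n}\Bigr)^d \Phi_d(n) \;\le\; \sum_{i=0}^{d}\binom{n}{i}\Bigl(\tfrac{d}{n}\Bigr)^i \;\le\; \sum_{i=0}^{n}\binom{n}{i}\Bigl(\tfrac{d}{n}\Bigr)^i \;=\; \Bigl(1+\tfrac{d}{n}\Bigr)^n \;\le\; e^d,
$$
so $\Phi_d(n)\le (en/d)^d$, which combined with the first step yields $\Pi_{\mathcal{H}}(n)\le (en/d)^d$.

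I expect the main obstacle to be the inductive decomposition in the combinatorial step, specifically the careful verification that $\mathcal{H}_2$ has VC-dimension at most $d-1$. This argument hinges on a precise definition of $\mathcal{H}_2$ (patterns that lift to two hypotheses differing only on $\xi_n$) and on showing that shattering by $\mathcal{H}_2$ mechanically upgrades to shattering by $\mathcal{H}$ of a one-larger set. The subsequent algebraic bound is mechanical, but the decomposition identity and the shattering lift are the conceptual core of the proof.
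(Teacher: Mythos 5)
Your proposal is correct: it is the standard Sauer--Shelah--Perles argument (induction on $n+d$ via the one-point decomposition into $\mathcal{H}_1$ and $\mathcal{H}_2$, Pascal's identity, then the $(1+d/n)^n \le e^d$ estimate), and every step you outline — in particular the lift of shattering from $\mathcal{H}_2$ to $T\cup\{\xi_n\}$ and the identity $|\mathcal{H}|_S| = |\mathcal{H}_1|+|\mathcal{H}_2|$ — is sound. Note, however, that the paper itself offers no proof of this lemma: it is quoted verbatim as a known technical tool (from the Mohri--Rostamizadeh--Talwalkar text) used downstream in the proof of Lemma~\ref{Lmm_complexity-bound}, so there is no in-paper argument to compare against; the only cosmetic points worth tightening are that the inductive hypothesis should be stated for arbitrary finite pattern classes (since $\mathcal{H}_1,\mathcal{H}_2$ are sets of label vectors rather than hypothesis classes on the original domain) and that the bound $(en/d)^d$ implicitly assumes $d\ge 1$.
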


We further introduce McDiarmid's inequality.
\begin{lemma}
    \textbf{(McDiarmid's inequality)} Let $X_1, \dots, X_n$ are independent random variables with $X_i \in \mathcal{X}, \forall i=1,\dots,n$. Suppose there exist some function $f: \mathcal{X}^n \to \mathbb{R}$ and positive scalars $c_1,\dots, c_n$ such that 
    $$
        \big| f(x_1, \dots, x_k, \dots, x_n) - f(x_1, \dots, x'_k, \dots, x_n) \big| \le c_k
    $$
    for all $k=1,\dots,n$ and for any realizations $x_1,\dots,x_n, x'_k \in \mathcal{X}$. Denote $f(X_1,\dots,X_n)$ by $f(S)$. Then, for any $\epsilon>0$,
    \begin{eqnarray}
        \mathbb{P}\left[ f(S) - \mathbb{E}[f(S)] \ge \epsilon \right] &\le& \exp{\left( \frac{-2\epsilon^2}{\sum_{j=1}^n c_j^2} \right)}, \nonumber   \\
        \mathbb{P}\left[ f(S) - \mathbb{E}[f(S)] \le -\epsilon \right] &\le& \exp{\left( \frac{-2\epsilon^2}{\sum_{j=1}^n c_j^2} \right)}. \nonumber
    \end{eqnarray}
\end{lemma}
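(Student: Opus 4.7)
The plan is to reduce the bounded-differences concentration statement to the Azuma--Hoeffding inequality by building an appropriate martingale from $f(S)$, so the structure of the argument is: (i) construct a Doob martingale adapted to the natural filtration generated by $X_1,\dots,X_n$; (ii) use the bounded-differences hypothesis together with independence to control the conditional range of each martingale difference; (iii) apply Hoeffding's lemma conditionally and chain to get a moment generating function (MGF) bound; (iv) finish with a Chernoff bound.

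Concretely, I would first set $\mathcal{F}_k = \sigma(X_1,\dots,X_k)$, $\mathcal{F}_0 = \{\emptyset, \Omega\}$, and define $V_k = \mathbb{E}[f(S)\mid\mathcal{F}_k]$, so that $V_0 = \mathbb{E}[f(S)]$, $V_n = f(S)$, and $\{V_k\}$ is a martingale. Setting $D_k = V_k - V_{k-1}$ gives the telescoping identity $f(S) - \mathbb{E}[f(S)] = \sum_{k=1}^n D_k$. To control $D_k$ given $\mathcal{F}_{k-1}$, I would define, for a fixed realization $x_1,\dots,x_{k-1}$, the function
$$\phi(x) = \mathbb{E}\bigl[f(x_1,\dots,x_{k-1},x,X_{k+1},\dots,X_n)\bigr],$$
which is legitimate because $X_{k+1},\dots,X_n$ are independent of $X_k$. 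Then $V_k = \phi(X_k)$ and $V_{k-1} = \mathbb{E}[\phi(X_k)]$ conditional on $\mathcal{F}_{k-1}$.

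With this in hand, the key inequality is $\sup_{x,x'} |\phi(x) - \phi(x')| \le c_k$. I would prove it by coupling: instantiate both $f(x_1,\dots,x,\dots,X_n)$ and $f(x_1,\dots,x',\dots,X_n)$ on the same realization of $(X_{k+1},\dots,X_n)$, apply the bounded-differences hypothesis pointwise to get a bound of $c_k$, and push through the expectation by the triangle inequality. Consequently $D_k$, conditional on $\mathcal{F}_{k-1}$, is a zero-mean random variable supported in an interval of length at most $c_k$, so Hoeffding's lemma gives
$$\mathbb{E}\bigl[e^{\lambda D_k}\,\big|\,\mathcal{F}_{k-1}\bigr] \le \exp\!\left(\frac{\lambda^2 c_k^2}{8}\right),\qquad \lambda \in \mathbb{R}.$$
Iterating this bound from $k=n$ down to $k=1$ via the tower property yields $\mathbb{E}[\exp(\lambda\sum_k D_k)] \le \exp(\lambda^2 \sum_k c_k^2 / 8)$. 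The upper tail now follows from a standard Chernoff argument by optimizing $\lambda > 0$ to $\lambda^\star = 4\epsilon/\sum_k c_k^2$, which produces exactly $\exp(-2\epsilon^2/\sum_k c_k^2)$. The lower tail is obtained by repeating the argument with $-f$, which satisfies the same bounded-differences condition with identical constants.

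The main obstacle is the pointwise-to-expectation step that establishes $|\phi(x) - \phi(x')| \le c_k$. A careless reader might try to conclude this directly from the hypothesis without explicitly coupling the realizations of $X_{k+1},\dots,X_n$, but the coupling (enabled by independence) is what prevents the $c_k$ bound from accumulating across the remaining $n-k$ coordinates. Once this step is handled correctly, the rest of the proof is a routine application of the martingale method; the bounded MGF inequality combined with Chernoff's bound is a textbook chain that I would not spell out further.
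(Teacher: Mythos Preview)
Your proof is correct and follows the standard Doob-martingale route to McDiarmid's inequality. Note, however, that the paper does not actually prove this lemma: it is stated without proof as a classical technical tool, alongside Massart's lemma and Sauer's lemma, and then invoked in the proof of Theorem~\ref{Thm_general-bound}. So there is no ``paper's own proof'' to compare against; your argument simply supplies the well-known derivation that the authors take for granted.
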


Then, we are ready to give the proofs of results in Section \ref{Sec_bound}.

\subsection{Proof of Theorem \ref{Thm_general-bound}}
\begin{comment}
We restate Theorem \ref{Thm_general-bound} as follows:
\begin{theorem}
    Suppose $\vert l(x,y;\xi) - l(x,y';\xi) \vert \le L_y \Vert y-y' \Vert$ and $\vert l(x,y;\xi) \vert \le M_i(y)$, $\forall x\in \mathcal{X}$, $\forall y,y' \in \mathcal{Y}^U_{\epsilon}$ and $\forall \xi \sim P_i, i=1\dots,m$ with some positive scalar $L_y$ and real-valued function $M_i(y)>0$. Then, given any $\epsilon > 0$ and $\delta > 0$, with probability at least $1-\delta$ for any $(x,y) \in \mathcal{X}\times \mathcal{Y}$,
    \begin{equation}  
        R(x,y) \le f(x,y) + 2\mathscr{R}(\mathcal{X}, y) + \sqrt{\sum_{i=1}^m \frac{M_i^2(y)}{2m^2 n}\log \frac{\vert \mathcal{Y}_{\epsilon} \vert}{\delta}} + 2L_y \epsilon   .  \nonumber
    \end{equation}
\end{theorem}
\end{comment}

%\begin{proof}
     Let $\mathcal{S} = \{ \mathcal{S}_1, \dots, \mathcal{S}_m \}$ be the collection of all local data sets. Given $y \in \mathcal{Y}$, define
    \begin{equation}
        \Phi(\mathcal{S}) = \sup_{x \in \mathcal{X}} \left\{ R(x,y) - f(x,y) \right\} .  \nonumber
    \end{equation}
    Let $\mathcal{S}' = \{ \mathcal{S}'_1, \dots, \mathcal{S}'_m \}$ be another data collection differing from $\mathcal{S}$ only by point $\xi'_{i,j}$ in $\mathcal{S}'_i$ and $\xi_{i,j}$ in $\mathcal{S}_i$ for some specific $i$. Then,
    \begin{eqnarray}
        \Phi(\mathcal{S}') - \Phi(\mathcal{S}) &=& \sup_{x \in \mathcal{X}}\left\{ R(x,y) - f'(x,y) \right\} - \sup_{x \in \mathcal{X}}\left\{ R(x,y) - f(x,y) \right\}    \nonumber   \\
        &\le& \sup_{x \in \mathcal{X}} \left\{ R(x,y) - f'(x,y) - (R(x,y) - f(x,y)) \right\}    \nonumber   \\
        &=& \sup_{x \in \mathcal{X}} \left\{ f(x,y) - f'(x,y) \right\}  \nonumber   \\
        &=& \sup_{x \in \mathcal{X}} \left\{ \frac{1}{mn} \sum_{j=1}^{n} l(x,y;\xi_{i,j}) - \frac{1}{mn} \sum_{j=1}^{n} l(x,y;\xi'_{i,j}) \right\}   \nonumber   \\
        &\le& \frac{1}{mn} M_i(y).  \nonumber
    \end{eqnarray}

    Applying McDiarmid's inequality gives that for any $c > 0$,
    \begin{eqnarray}
        \mathbb{P}\left[ \Phi(\mathcal{S}) - \mathbb{E}[\Phi(\mathcal{S})] \ge c \right] \le \exp{\left( \frac{-2c^2}{\sum_{i=1}^m \sum_{j=1}^n \left( \frac{M_i(y)}{mn} \right)^2} \right)} = \exp{\left( \frac{-2c^2 m^2 n}{\sum_{i=1}^m M_i^2(y)} \right)}. \nonumber
    \end{eqnarray}
    Setting $\delta = \exp{\left( \frac{-2c^2 m^2 n}{\sum_{i=1}^m M_i^2(y)} \right)}$, we obtain
    $c = \sqrt{\sum_{i=1}^m \frac{M_i^2(y)}{2m^2 n}\log \frac{1}{\delta}}$. Then, with probability at least $1-\delta$, 
    \begin{eqnarray}
        \sup_{x\in \mathcal{X}}\left\{ R(x,y)-f(x,y) \right\} \le \mathbb{E}\left[ \sup_{x\in \mathcal{X}}\left\{ R(x,y)-f(x,y) \right\} \right] + \sqrt{\sum_{i=1}^m \frac{M_i^2(y)}{2m^2 n}\log \frac{1}{\delta}}.   \nonumber
    \end{eqnarray}
    By similar techniques of \cite{FML18}, we have for any $y \in \mathcal{Y}$,
    \begin{eqnarray}
        \mathbb{E}_{\xi \sim P}\left[ \Phi(\mathcal{S}) \right] &=& 
        \mathbb{E}_{\xi \sim P}\left[ \sup_{x\in \mathcal{X}}\left\{ R(x,y)-f(x,y) \right\} \right]  \nonumber   \\
        &=& \mathbb{E}_{\xi \sim P}\left[ \sup_{x \in \mathcal{X}} \left\{ \mathbb{E}_{\xi' \sim P}\left[ f'(x,y) - f(x,y) \right] \right\} \right]    \nonumber  \\
        &\le& \mathbb{E}_{\xi,\xi' \sim P} \left[ \sup_{x \in \mathcal{X}} \{ f'(x,y) - f(x,y) \} \right]  \nonumber   \\
        &=& \underset{\sigma}{\mathbb{E}} \left[ \underset{\xi,\xi' \sim P}{\mathbb{E}}\left[ \sup_{x \in \mathcal{X}}\left\{ \frac{1}{mn} \sum_{i=1}^m\sum_{j=1}^{n} \sigma_{i,j}(l(x,y;\xi'_{i,j}) - l(x,y;\xi_{i,j})) \right\} \right] \right]    \nonumber   \\
        &\le& 2 \underset{\sigma}{\mathbb{E}} \left[ \underset{\xi \sim P}{\mathbb{E}}\left[ \sup_{x \in \mathcal{X}}\left\{ \frac{1}{mn} \sum_{i=1}^m\sum_{j=1}^{n} \sigma_{i,j}l(x,y;\xi_{i,j}) \right\} \right] \right]  \nonumber   \\
        &=& 2 \mathscr{R}(\mathcal{X}, y)   \nonumber
    \end{eqnarray}
    by noting $\xi$ and $\xi'$ are drawn from the same distribution and $\sigma$ is Rademacher variable. Thus, we have given $y \in \mathcal{Y}$, with probability at least $1 - \delta$,
    \begin{eqnarray}
        \sup_{x\in \mathcal{X}}\left\{ R(x,y)-f(x,y) \right\} \le 2\mathscr{R}(\mathcal{X},y) + \sqrt{\sum_{i=1}^m \frac{M_i^2(y)}{2m^2 n}\log \frac{1}{\delta}}.   \nonumber
    \end{eqnarray}
    Since $\mathcal{Y}$ is compact, every open cover of $\mathcal{Y}$ has a finite subcover. Then, we have $\big|\mathcal{Y}_{\epsilon}\big| < \infty$. 
    Taking the union over $\mathcal{Y}_{\epsilon}$, it yields for any $x \in \mathcal{X}$ and $y \in \mathcal{Y}_{\epsilon}$, with probability at least $1-\delta$,
    \begin{eqnarray}
        R(x,y) \le f(x,y) + 2\mathscr{R}(\mathcal{X},y) + \sqrt{\sum_{i=1}^m \frac{M_i^2(y)}{2m^2 n}\log \frac{\vert \mathcal{Y}_{\epsilon} \vert}{\delta}}.   \nonumber
    \end{eqnarray}
    By the definition of $\mathcal{Y}_{\epsilon}$, for any $y \in \mathcal{Y}$, there exists a $y' \in \mathcal{Y}_{\epsilon}$ such that 
    \begin{eqnarray}
        R(x,y) - R(x,y') &\le& L_y \Vert y - y' \Vert \le L_y \epsilon    \nonumber   \\
        f(x, y') - f(x, y) &\le& L_y \Vert y - y' \Vert \le L_y \epsilon    \nonumber      
    \end{eqnarray}
    by Lipschitz continuity of $l$ in $y$.
    Thus, for any $\epsilon > 0$, $x \in \mathcal{X}$ and $y \in \mathcal{Y}$, with probability at least $1-\delta$, the following inequality holds:
    \begin{equation}    \label{proof_complexity-bound-given-y}
        R(x,y) \le f(x,y) + 2\mathscr{R}(\mathcal{X}, y) +  \sqrt{\sum_{i=1}^m \frac{M_i^2(y)}{2m^2 n}\log \frac{\vert \mathcal{Y}_{\epsilon} \vert}{\delta}} + 2L_y \epsilon,      \nonumber
    \end{equation}
    which completes the proof of Theorem \ref{Thm_general-bound}.
%\end{proof}

\subsection{Proof of Corollary \ref{Coro_max-bound}}
\begin{comment}
\begin{corollary} 
    Under the same conditions of Theorem \ref{Thm_general-bound}, with probability at least $1-\delta$ for any $x \in \mathcal{X}$, the following inequality holds for any $\epsilon>0$, $\delta > 0$:
    \begin{equation}
        Q(x) \le g(x) + 2 \mathscr{R}(\mathcal{X},\mathcal{Y}) + \sqrt{\max_{y \in \mathcal{Y}} \left\{ \sum_{i=1}^m \frac{M_i^2(y)}{2m^2 n} \right\} \log \frac{\vert \mathcal{Y}_{\epsilon} , \vert}{\delta}} + 2L_y \epsilon  \nonumber
    \end{equation}
    where $Q(x) = \max_{y \in \mathcal{Y}}R(x,y)$, $g(x) = \max_{y \in \mathcal{Y}} f(x,y)$ are the worst-case population and empirical risks, respectively.
\end{corollary}
\end{comment}

%\begin{proof}
    From Theorem \ref{Thm_general-bound}, it is obvious that with probability at least $1-\delta$ for any $x \in \mathcal{X}$, taking the maximum of $\mathcal{Y}$ gives
    \begin{equation}
        R(x,y) \le \max_{y \in \mathcal{Y}} \left\{ f(x,y) + 2\mathscr{R}(\mathcal{X}, y) + \sqrt{\sum_{i=1}^m \frac{M_i^2(y)}{2m^2 n}\log \frac{\vert \mathcal{Y}_{\epsilon} \vert}{\delta}} \right\} + 2L_y \epsilon .    \nonumber
    \end{equation}
    Since the above inequality holds for any $y \in \mathcal{Y}$, by again taking the maximum over $\mathcal{Y}$ on the left-hand side, we obtain for any $x \in \mathcal{X}$, with probability at least $1-\delta$,
    \begin{eqnarray}
        Q(x) &\le& \max_{y \in \mathcal{Y}} \left\{ f(x,y) + 2\mathscr{R}(\mathcal{X}, y) + \sqrt{\sum_{i=1}^m \frac{M_i^2(y)}{2m^2 n}\log \frac{\vert \mathcal{Y}_{\epsilon} \vert}{\delta}} \right\} + 2 L_y \epsilon    \nonumber   \\
        &\le& g(x) + 2\max_{y \in \mathcal{Y}}\{ \mathscr{R}(\mathcal{X},y) \} + \sqrt{\max_{y \in \mathcal{Y}} \left\{ \sum_{i=1}^m \frac{M_i^2(y)}{2m^2 n} \right\} \log \frac{\vert \mathcal{Y}_{\epsilon} \vert}{\delta}} + 2L_y \epsilon \nonumber
    \end{eqnarray}
    which completes the proof.
    
    \begin{comment}
    Then, we give the proof of statement (b). First, note that 
    $$\mathbb{E}\left[f(x,y)\right] = \mathbb{E}\left[ \frac{1}{mn}\sum_{i=1}^m \sum_{j=1}^n l(x,y;\xi_{i,j}) \right] = \frac{1}{mn}\sum_{i=1}^m \sum_{j=1}^n \mathbb{E}\left[ l(x,y;\xi_{i,j}) \right] = R(x,y)$$
    
    Denoting $x_l$ as the $l$th entry of $x$, applying Hoeffding's inequality immediately gives for any $\epsilon>0$ and given $l \in \{ 1, 2, \dots, p \}$
    \begin{eqnarray}
        \mathbb{P}\left[ \left\vert \frac{\partial f(x,y^*)}{\partial x_l} - \frac{\partial R(x,y^*)}{\partial x_l} \right\vert \ge \epsilon \right] &\le& 2 \exp{\left( \frac{-2\epsilon^2 m^2 n}{\sum_{i=1}^m G_i^2} \right)} \nonumber
    \end{eqnarray}
    By taking the union over all $l$, then with probability at least $1-\delta$ for any $x \in \mathcal{X}$ and any $l \in \{1,\dots,p \}$,
    \begin{equation}
        \left\vert \frac{\partial f(x,y^*)}{\partial x_l} - \frac{\partial R(x,y^*)}{\partial x_l} \right\vert \le \sqrt{\sum_{i=1}^{m} \frac{G_i^2}{2m^2 n} \log \frac{2 p}{\delta}}     \nonumber
    \end{equation}
    Since $\frac{\partial f(x^*, y^*)}{\partial x_l} = 0, \forall l \in \{1,\dots,p \}$ by the definition of $(x^*, y^*)$ and noticing $\Vert \nabla_{x}R(x^*, y^*) \Vert^2 = \sum_{l=1}^{p} \left\vert \frac{\partial R(x^*, y^*)}{\partial x_l} \right\vert^2$, we have
    \begin{equation}
        \Vert \nabla_{x} R(x^*, y^*) \Vert^2 \le \sum_{i=1}^{m} \frac{p G_i^2}{2m^2 n} \log \frac{2 p}{\delta}  \nonumber
    \end{equation}
    Finally, 
    \end{comment}
%\end{proof}

\subsection{Proof of Lemma \ref{Lmm_complexity-bound}}
\begin{comment}
\begin{lemma}  
    Suppose for any $y \in \mathcal{Y}$ and $i \in \{1, \dots, m \}$, $\vert l(\cdot,y;\cdot) \vert$ is bounded by $M_i(y)$ and takes finite number of values. Further, assume that the VC-dimension of $\mathcal{X}$ is $d$. Then, the following inequalities hold:
    \begin{eqnarray}
        \mathscr{R}(\mathcal{X}, \mathcal{Y}) \le \sqrt{2d \max_{y \in \mathcal{Y}} \left\{ \sum_{i=1}^m \frac{M^2_i(y)}{m^2 n} \right\} \left( 1 + \log \frac{mn}{d}\right)} . \nonumber
    \end{eqnarray}
\end{lemma}
\end{comment}

%\begin{proof}
    First, for any fixed $y \in \mathcal{Y}$, define the growth function for the feasible set $\mathcal{X}$:
    \begin{equation}
        \Pi_{\mathcal{X}}^y(N) = \max_{\xi_1,\dots,\xi_N} \big| \{ (l(x,y;\xi_1), \dots, l(x,y;\xi_N)) ~:~ x \in \mathcal{X} \} \big| ,  \nonumber
    \end{equation}
    where $N$ is the total number of samples drawn from the global distribution $P$. Essentially, the growth function $\Pi_{\mathcal{X}}^y (N)$ characterizes that given $y \in \mathcal{Y}$, the maximum number of distinct ways to label $N$ points.
    
    Then for any $y \in \mathcal{Y}$, define a set $V^y$ of vectors in $\mathbb{R}^{mn}$ as
    \begin{equation}
        V^y = \left\{ [l(x,y; \xi_{i,j})] : \xi_{i,j} \sim P,~\forall i=1,\dots,m, ~ j=1,\dots,n \right\} .  \nonumber
    \end{equation}
    For any $v \in V^y$, we have
    $$
        \Vert v \Vert = \sqrt{\sum_{i=1}^m \sum_{j=1}^n \vert l(x,y;\xi_{i,j}) \vert^2} \le \sqrt{\sum_{i=1}^m n M_i^2(y)} .
    $$
    Then, by Massart's lemma, for any $y \in \mathcal{Y}$, it yields
    \begin{eqnarray}    \label{eqn_Rademacher}
        \mathscr{R}(\mathcal{X}, y) \le \sqrt{\sum_{i=1}^m M_i^2(y) \frac{2 \log \vert V^y \vert}{m^2 n}} . \nonumber
    \end{eqnarray}
    Moreover, noting that for any $y \in \mathcal{Y}$,
    $
        \vert V^y \vert \le \Pi_{\mathcal{X}}^y (mn)
    $
    by the definition of $V^y$. Then, by applying Sauer's lemma, we have
    $$
        \Pi_{\mathcal{X}}^y (mn) \le \left( \frac{emn}{d} \right)^d
    $$
    for all $mn \ge d$. By taking the maximum over $\mathcal{Y}$ on both sides of \eqref{eqn_Rademacher}, we directly obtain \eqref{eq_complexity-max-bound}.
%\end{proof}

\section{Code of the experiments}
The datasets and the implementation of the experiments in Section \ref{Sec_exp} can be found through the following link: \href{https://github.com/Starrskyy/FedGDA-GT}{https://github.com/Starrskyy/FedGDA-GT}.

\end{document}